\documentclass{article}

\usepackage[preprint]{neurips_2025}

\usepackage[utf8]{inputenc} 
\usepackage[T1]{fontenc}    
\usepackage{hyperref}       
\usepackage{url}            
\usepackage{booktabs}       
\usepackage{amsfonts}       
\usepackage{nicefrac}       
\usepackage{microtype}      
\usepackage{xcolor}         
\usepackage{amsmath}
\usepackage{amsthm}
\usepackage{amssymb}
\usepackage{mathtools}
\usepackage{dsfont}
\usepackage{graphicx}
\usepackage{booktabs} 
\usepackage{bbm}
\usepackage{tikz}
\usetikzlibrary{shapes.geometric, calc, arrows.meta, patterns, backgrounds, positioning, shadows}

\theoremstyle{plain}
\newtheorem{theorem}{Theorem}[section]
\newtheorem{proposition}[theorem]{Proposition}
\newtheorem{lemma}[theorem]{Lemma}
\newtheorem{corollary}[theorem]{Corollary}
\theoremstyle{definition}
\newtheorem{definition}[theorem]{Definition}
\newtheorem{assumption}[theorem]{Assumption}
\theoremstyle{remark}
\newtheorem{remark}[theorem]{Remark}
\newtheorem{example}[theorem]{Example}

\title{Sparsity is Combinatorial Depth: Quantifying MoE Expressivity via Tropical Geometry}

\author{%
  Ye Su\\
  Shenzhen Institutes of Advanced Technology\\
  Chinese Academy of Sciences\\
  Shenzhen, 518055, China\\
  \texttt{ye.su@siat.ac.cn} \\
  \And
  Huayi Tang \\
  Gaoling School of Artificial Intelligence\\
  Renmin University of China\\
  Beijing, 100872,China\\
  \texttt{tangh4681@gmail.com} \\
  \And
  Zixuan Gong \\
  Gaoling School of Artificial Intelligence\\
  Renmin University of China\\
  Beijing, 100872, China\\
  \texttt{zxgong@ruc.edu.cn} \\
  \And
  Yong Liu\thanks{Corresponding author.}\\
  Gaoling School of Artificial Intelligence\\
  Renmin University of China\\
  Beijing, 100872, China\\
  \texttt{liuyonggsai@ruc.edu.cn} \\
}

\begin{document}

\maketitle

\begin{abstract}
    While Mixture-of-Experts (MoE) architectures define the state-of-the-art, their theoretical success is often attributed to heuristic efficiency rather than geometric expressivity. In this work, we present the first analysis of MoE through the lens of tropical geometry, establishing that the Top-$k$ routing mechanism is algebraically isomorphic to the $k$-th elementary symmetric tropical polynomial. This isomorphism partitions the input space into the Normal Fan of a Hypersimplex, revealing that \textbf{sparsity is combinatorial depth} which scales geometric capacity by the binomial coefficient $\binom{N}{k}$. Moving beyond ambient bounds, we introduce the concept of \textit{Effective Capacity} under the Manifold Hypothesis. We prove that while dense networks suffer from capacity collapse on low-dimensional data, MoE architectures exhibit \textit{Combinatorial Resilience}, maintaining high expressivity via the transversality of routing cones. Translating these theoretical bounds into architectural principles, we derive asymptotic capacity limits for optimal expert granularity and prove that shared experts are geometrically necessary to prevent routing collapse.
\end{abstract}

\section{Introduction}

Mixture-of-Experts (MoE) architectures have emerged as a prominent approach for scaling neural networks by conditionally activating a small subset of parameters for each input \citep{cai2025survey,mu2025comprehensive}. By decoupling total parameter count from per-input computation, known as conditional computation, MoE models enable the deployment of extremely large networks under fixed inference budgets and have become a standard component in modern large-scale systems \citep{jiang2024mixtral,dai2024deepseekmoe}.

At first glance, however, the success of MoE appears to challenge the conventional premise in learning theory: reducing the number of active parameters should restrict a model’s expressive power. Classical analyses of neural network expressivity are traditionally tied to architectual \textit{width} and \textit{depth} \citep{pascanu2013number,montufar2014number}. Under this paradigm, sparse models appear inherently limited as their active computation utilizes only a fraction of total capacity~\citep{fedus2022switch,su2026variational}. This tension raises a central theoretical question: \textbf{\textit{why does sparse routing in MoE not degrade expressivity, and in fact often enhances it?}} More precisely, \textit{\textbf{in what formal sense does conditional computation expand the expressive power of a neural network, despite activating fewer parameters per input?}}

To address this question, we analyze the expressive power of MoE by quantifying the number of linear regions it induces. For dense networks, capacity is typically computed using the theory of affine hyperplane arrangements, which assumes hyperplanes act globally across the input space \citep{raghu2017expressive,serra2018bounding,hanin2019deep,xiong2020number,xiong2024number}. However, evaluating MoE expressivity poses a severe mathematical challenge. The Top-$k$ routing mechanism is a data-dependent discrete sorting operation. It dynamically truncates the domain of expert hyperplanes into local, non-convex polyhedral cones. Standard geometric counting tools rely on global intersection properties and collapse when confronted with the combinatorial explosion of nested inequalities required to track these local, conditional hyperplane fragments. Therefore, existing frameworks for dense architectures cannot capture the topology of input-dependent routing. This gap leads to the central question of this work: \textbf{\textit{How does sparse routing quantitatively expand the expressivity of a neural network?}} 

Answering this requires a framework capable of reasoning not only about piecewise-linear functions, but also about the combinatorial structure induced by input-dependent expert selection. Tropical geometry, which studies algebraic structures over the max-plus semiring, offers an exact language for modeling maximization, ordering, and piecewise-affine behavior \citep{zhang2018tropical,maragos2021tropical,pham2024graph,fotopoulos2024tropnnc,brandenburg2024real}. These operations are essential for a formal treatment of Top-$k$ routing in MoE. Therefore, tropical geometry is \textbf{not merely} a descriptive tool, but an \textbf{algebraic necessity} for analyzing conditional computation. While prior tropical analyses of neural networks have focused on dense architectures, we demonstrate that Top-k routing in MoE admits a canonical tropical formulation, revealing a previously unrecognized combinatorial structure underlying sparse computation.

In this paper, we present the first analysis of MoE expressivity derived from the perspective of tropical geometry. We propose that sparsity is not merely a systems optimization to reduce FLOPs, but a fundamental topological shift in the function space. By mapping the decision boundaries of the router to the Normal Fan of a \textbf{Hypersimplex}, we prove that \textbf{Sparsity is Combinatorial Depth}: a mechanism that scales geometric capacity by the binomial coefficient $\binom{N}{k}$, effectively bypassing the polynomial scaling limits of dense width. Our contributions are summarized as follows:
\begin{itemize}
    \item \textbf{Tropical Isomorphism of Routing:} We establish the first algebraic isomorphism between Top-$k$ routing and the $k$-th elementary symmetric tropical polynomial, showing that routing partitions correspond to the Normal Fan of a hypersimplex and yielding a precise geometric characterization of sparse routing (Section~\ref{subsec:geometry_routing}).

    \item \textbf{The Combinatorial Slicing Theorem:} We derive matching upper and lower bounds on the number of linear regions realized by MoE networks, thereby characterizing the expressivity up to tight polynomial order. In particular, we prove that the capacity necessarily scales as $\Theta\!\left(\binom{N}{k}(kH)^{d_{in}}\right)$, establishing the first tight expressivity for MoE architectures (Section~\ref{subsec:capacity_analysis}).

    \item \textbf{Effective Capacity on Manifolds:} We show that MoE models preserve high expressivity on low-dimensional data manifolds (\textit{Combinatorial Resilience}), while dense networks exhibit geometric capacity collapse, revealing a fundamental advantage of input-dependent routing beyond efficiency (Section~\ref{subsec:effective_capacity}).

    \item \textbf{Architectural Laws for Optimal Sparsity:} We translate our geometric framework into prescriptive design principles. We derive an asymptotic capacity expansion for \textbf{Fine-Grained Experts}, bounded by a critical granularity limit $\left\lfloor \frac{H}{d_{eff}} \right\rfloor$ to prevent \textit{Tropical Rank Deficiency}. Furthermore, we prove that \textbf{Shared Experts} are geometrically necessary as \textit{Affine Rectifiers}, stabilizing the routing mechanism against \textbf{Angular Collapse} (Section~\ref{sec:architectural_laws}).
\end{itemize}

\section{Related Work}
\label{sec:related_work}

\subsection{Engineering Design and Theoretical Understanding of Mixture-of-Experts}
The concept of MoE was formalized by \citet{jacobs1991adaptive} as a method to decouple sub-tasks via gating. In the context of deep learning, \citet{shazeer2017outrageously} reintroduced MoE to LSTM networks, demonstrating that conditional computation could scale model capacity by orders of magnitude without increasing inference FLOPs. This sparsity paradigm was subsequently integrated into the Transformer architecture \citep{vaswani2017attention}. Key adaptations include GShard \citep{lepikhin2020gshard}, which enabled large-scale parallel training, and the Switch Transformer \citep{fedus2022switch}, which simplified routing to scale model capacity beyond a trillion parameters. Recent advancements focus on optimizing granularity and routing dynamics, notably through Top-2 routing in Mixtral \citep{jiang2024mixtral} and fine-grained experts in DeepSeek-MoE \citep{dai2024deepseekmoe}. 

Theoretical analysis of MoE has evolved from empirical scaling laws \citep{kaplan2020scaling, clark2022unified} to rigorous statistical and optimization frameworks. \citet{chen2022towards} provided a fundamental analysis of the MoE layer's properties in deep learning. A significant body of recent work by \citet{nguyen2023general, nguyen2024towards, nguyen2025convergence} has established the statistical convergence rates for various gating mechanisms, including softmax and Gaussian-gated models. Further refinements include the statistical advantages of cosine routers \citep{nguyen2024statistical} and the properties of quadratic gating functions \citep{akbarian2024quadratic}. Most recently, \citet{su2026variational} proposed a unified theory of MoE incorporating variational inference, entropy, and orthogonality. Despite these insights into parameter estimation and optimization, the \textit{geometric expressivity} of MoE, specifically how sparse coalitions partition the input space, remains an under-explored dimension. \citet{chi2022representation} notably identified representation collapse as a primary challenge in training these dynamic architectures.

\subsection{Complexity of Linear Regions}
A standard measure of expressivity for ReLU networks is the number of linear regions induced in the input space. \citet{pascanu2013number} and \citet{montufar2014number} established that the number of regions scales exponentially with network depth but polynomially with width. \citet{raghu2017expressive} introduced the trajectory length measure, confirming the exponential influence of depth. To tighten these bounds, \citet{serra2018bounding} and \citet{hinz2019framework} utilized the theory of hyperplane arrangements \citep{zaslavsky1975facing} to derive exact upper bounds for deterministic networks. This combinatorial framework was further extended to more complex architectures by \citet{xiong2020number} and \citet{xiong2024number}, who provided rigorous counts for the number of linear regions in convolutional neural network with various pooling layers and piecewise linear activations. 
Crucially, \citet{hanin2019deep} also investigated the effective capacity restricted to low-dimensional manifolds, proving that while the ambient region count is exponential, the number of regions intersecting a 1D curve scales only linearly with the number of neurons. This distinction between ambient and effective capacity is central to understanding generalization on real-world data. Despite these advances, a notable limitation remains: existing results are predominantly restricted to static, dense architectures where the network topology is fixed. Consequently, the unique geometric paradigm of MoE, characterized by the combinatorial interplay between a data-dependent routing mechanism and the local arrangement complexity of sparse expert sub-networks, remains uncharacterized in the current literature of arrangement theory or neural complexity. Specifically, how the router's selection of expert coalitions interacts with the piecewise linear boundaries of the experts to expand the global capacity remains a critical theoretical gap.

\subsection{Tropical Geometry for Neural Networks}
Tropical geometry provides a natural algebraic framework for analyzing piecewise linear functions. \citet{zhang2018tropical} formally established the isomorphism between Feedforward ReLU networks and Tropical Rational Functions. This foundation has been expanded by \citet{charisopoulos2018tropical} and \citet{maragos2021tropical} to model general morphological perceptrons. Recent work has pushed this boundary significantly. \citet{alfarra2022decision} utilized tropical algebra to characterize the decision boundaries of classifiers. \citet{brandenburg2024real} advanced the theoretical understanding by developing "real" tropical geometry for neural networks, rigorously handling signed weights in binary classification. On the application front, \citet{pham2024graph} extended tropical insights to interpret the learning dynamics of Graph Neural Networks (GNNs), while \citet{fotopoulos2024tropnnc} proposed TropNNC for structured neural network compression. 
However, these works predominantly analyze static architectures (Dense MLPs, GNNs). The specific \textit{combinatorial geometry} of conditional computation (MoE), where the network topology itself changes dynamically via Top-$k$ routing, and its connection to the Tropical Grassmannian \citep{speyer2003tropical} remain an open frontier.

\section{Preliminaries: Neural Networks as Tropical Rational Functions}
\label{sec:preliminaries}

In this section, we establish the algebraic foundation for our analysis. We first recall the basics of tropical geometry and the representation of standard dense networks as tropical rational functions. We then formally introduce the MoE architecture and derive its tropical representation, demonstrating that the Top-$k$ routing mechanism corresponds to evaluating elementary symmetric tropical polynomials.

\subsection{Expressivity via Linear Regions}
\label{subsec:expressivity_definition}

To quantify the expressive power of neural networks, we adopt the widely accepted metric of \textit{linear regions} \citep{pascanu2013number,montufar2014number}. Deep neural networks with piecewise-linear (PWL) activations, such as the Rectified Linear Unit (ReLU), compute continuous PWL functions. The input space is partitioned into a finite set of convex polyhedral cells, within which the network acts as a purely affine transformation.

\begin{definition}[Linear Regions and Geometric Capacity]
\label{def:linear_regions}
Let $F: \mathbb{R}^{d_{in}} \to \mathbb{R}^{d_{out}}$ be a continuous piecewise-linear neural network, where $d_{in}$ and $d_{out}$ denote the input and output dimensions, respectively. A \textbf{linear region} is a maximal connected open subset $\mathcal{R} \subset \mathbb{R}^{d_{in}}$ such that the restriction $F|_{\mathcal{R}}$ is an affine function \citep{montufar2014number}. The \textbf{Geometric Capacity} (or Expressivity) of the architecture, denoted as $\mathcal{C}(F)$, is defined as the maximum number of such disjoint linear regions the network can induce over the input space $\mathbb{R}^{d_{in}}$, taken over all possible parameter instantiations \citep{serra2018bounding}.
\end{definition}

The number of linear regions measures the geometric flexibility of the model: a higher capacity allows the network to approximate highly non-convex functions with finer piecewise-affine boundaries. 

\begin{definition}[Zaslavsky's Function]
\label{def:zaslavsky}
Let $\Phi(n, d)$ denote the number of regions induced by an arrangement of $n$ hyperplanes in general position in a space of dimension $d$ \citep{zaslavsky1975facing}:
\begin{equation*}
    \Phi(n, d) \coloneqq \sum_{j=0}^{d} \binom{n}{j}.
\end{equation*}
For $n \gg d$, this sum is dominated by the highest-order term: $\Phi(n, d) \approx \frac{1}{d!} n^d$.
\end{definition}

\begin{example}[1-layer and 2-layer ReLU MLP of linear regions]
Consider a 2-layer ReLU MLP mapping $\Phi: \mathbb{R}^{d_{in}} \to \mathbb{R}^{d_{out}}$ with ambient dimension $d_{in}=2$ and $n=3$ neurons per layer. According to Definition \ref{def:zaslavsky}, the first hidden layer induces exactly $N_{mlp1} = \sum_{j=0}^{d_{in}} \binom{n}{j} = \binom{3}{0} + \binom{3}{1} + \binom{3}{2} = 7$ maximal linear regions. In Figure~\ref{fig:mlp_expressivity}, deep composition facilitates recursive shattering: the second layer's hyperplanes intersect the folded image of $\mathbb{R}^{d_{in}}$, subdividing each pre-existing region. This multiplicative interaction yields a theoretical maximum of $N_{mlp2} = N_{mlp1} \cdot \sum_{j=0}^{d_{in}} \binom{n}{j} = 49$ regions. This geometric escalation provides a classical measure of expressivity, which this work extends to the non-piecewise-linear self-attention mechanism.
\end{example}

\begin{figure}
    \centering
    \includegraphics[width=0.75\linewidth]{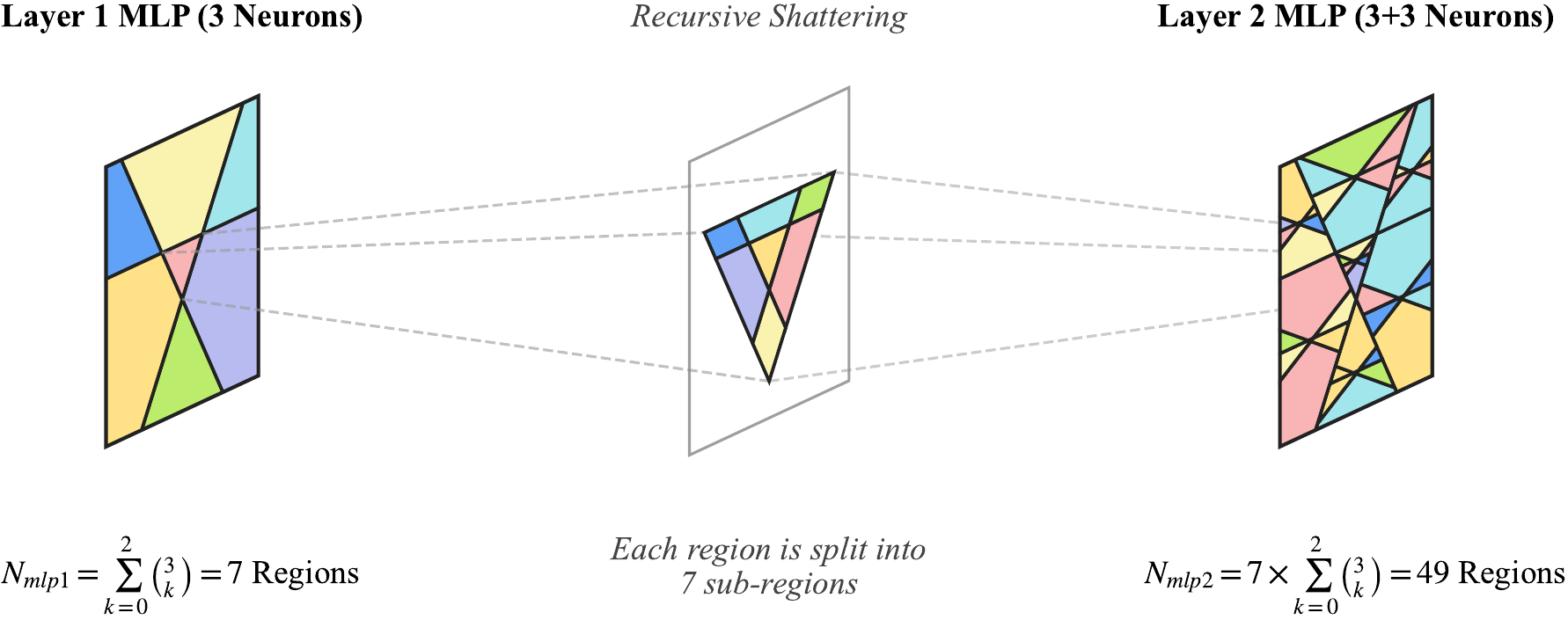}
    \caption{\textbf{Recursive spatial partitioning in a 2-layer MLP ($d=2$).} \textbf{(Left)} Layer 1 ($n=3$ neurons) induces $N_{mlp1} = 7$ linear regions via Definition \ref{def:zaslavsky}. \textbf{(Middle)} Structural space-folding: the ReLU activation enables subsequent hyperplanes to intersect the pre-activated regions. \textbf{(Right)} Layer 2 ($n=3$) achieves a multiplicative expansion, yielding a theoretical maximum of $N_{mlp2} = 49$ regions.}
    \label{fig:mlp_expressivity}
    \vspace{-0.3cm}
\end{figure}

\subsection{The Tropical Semiring}

Tropical geometry, often described as algebraic geometry over the max-plus semiring, provides a natural language for analyzing piecewise linear functions involving maximization operations \citep{mikhalkin2006tropical,maclagan2015introduction,maragos2021tropical}.

\begin{definition}[Tropical Semiring]
The tropical semiring, denoted as $\mathbb{T}$, is the set $\mathbb{R} \cup \{-\infty\}$ equipped with two binary operations: 1) \textbf{Tropical Addition ($\oplus$):} $x \oplus y := \max(x, y)$. 2) \textbf{Tropical Multiplication ($\otimes$):} $x \otimes y := x + y$.
The identity element for addition is $-\infty$ (denoted as $\mathbbm{O}$), and the identity element for multiplication is $0$ (denoted as $\mathbbm{1}$).
\end{definition}

In this algebraic structure, exponentiation corresponds to standard multiplication: $x^{\otimes k} := k \cdot x$ for $k \in \mathbb{N}$. A \textbf{tropical monomial} in $d$ variables $\mathbf{x} = (x_1, \dots, x_d)^\top$ is an expression of the form $c \otimes x_1^{\otimes a_1} \otimes \dots \otimes x_d^{\otimes a_d}$, which translates to the affine function $c + \sum_{j=1}^d a_j x_j$ in standard arithmetic.

\begin{definition}[Tropical Polynomial]
A tropical polynomial $P: \mathbb{R}^d \to \mathbb{R}$ is the tropical sum of a finite number of tropical monomials:
\begin{equation*}
    P(\mathbf{x}) = \bigoplus_{i=1}^m \left( c_i \otimes \bigotimes_{j=1}^d x_j^{\otimes a_{ij}} \right) = \max_{i=1,\dots,m} \left( c_i + \langle \mathbf{a}_i, \mathbf{x} \rangle \right),
\end{equation*}
where $\mathbf{a}_i \in \mathbb{N}^d$, regarded as points in $\mathbb{R}^d$ \citep{zhang2018tropical} and $c_i \in \mathbb{R}$. Geometrically, a tropical polynomial defines a convex function as the pointwise maximum of affine functions.
\end{definition}

The \textbf{tropical hypersurface} defined by $P(\mathbf{x})$ is the locus of points where the maximum is attained by at least two distinct terms \citep{maclagan2015introduction}. This corresponds to the non-differentiable \textit{creases} of the convex function, dividing the domain into linear regions (polyhedral cells) \citep{alfarra2022decision}.

\subsection{Neural Networks as Tropical Rational Functions}

Before analyzing MoE, we briefly characterize the geometry of standard dense networks. It is well-established that the Rectified Linear Unit (ReLU), $\sigma(x) = \max(0, x)$, is a native tropical operation: $\sigma(x) = \mathbbm{1} \oplus x$. However, neural networks fundamentally rely on negative weights to approximate non-convex functions \citep{yarotsky2017error}. Since the tropical semiring lacks an additive inverse (subtraction is undefined) \citep{viro2011basic,della2017tropical}. To address this, prior works \citep{zhang2018tropical,charisopoulos2018tropical} model neural networks as \textbf{Tropical Rational Functions}. 

\begin{definition}[Tropical Rational Function]
A tropical rational function $F: \mathbb{R}^d \to \mathbb{R}$ represents a dense feedforward network if it can be expressed as the formal difference of two tropical polynomials (convex functions) $P(\mathbf{x})$ and $Q(\mathbf{x})$ \citep{song2023congruences,sukenaga2024minimum}:
\begin{equation*}
    F(\mathbf{x}) = P(\mathbf{x}) - Q(\mathbf{x}) \quad \text{(Standard Arithmetic)} \iff F(\mathbf{x}) = P(\mathbf{x}) \oslash Q(\mathbf{x}) \quad \text{(Tropical Notation)}.
\end{equation*}
Here, $Q(\mathbf{x})$ acts as a \textit{normalization potential} introduced by negative weights. The decision boundaries of a dense network are subsets of the tropical hypersurface defined by $P(\mathbf{x}) \oplus Q(\mathbf{x})$. While this framework unifies dense networks, the geometric complexity is implicitly folded into the depth of $P$ and $Q$. As we will show, MoE architectures introduce complexity through a fundamentally different mechanism: explicit combinatorial partitioning.
\end{definition}

\subsection{Mixture-of-Experts: Formulation and Gating}
\label{sec:moe_formulation}

\begin{figure}
    \centering
    \begin{tikzpicture}[
        scale=0.8, 
        transform shape, 
        >=Stealth, 
        node distance=1.2cm and 1.0cm, 
        font=\footnotesize\rmfamily
    ]
        \tikzstyle{expert} = [
            draw=blue!40!black, 
            fill=blue!5, 
            very thick,
            rectangle, 
            rounded corners=2pt, 
            minimum width=1.3cm, 
            minimum height=0.8cm, 
            align=center
        ]
        
        \tikzstyle{router} = [
            draw=orange!60!black, 
            fill=orange!10, 
            thick,
            trapezium, 
            trapezium left angle=75, 
            trapezium right angle=105, 
            minimum width=1.6cm,
            minimum height=0.6cm,
            align=center
        ]
        
        \tikzstyle{sum} = [
            draw, 
            circle, 
            fill=white, 
            minimum size=0.6cm, 
            inner sep=0pt,
            thick
        ]
        
        \node[align=center] (input) {Input $\mathbf{x} \in \mathbb{R}^{d_{in}}$};
        \node[router, above=1.0cm of input] (router) {Router $G(\mathbf{x})$};
        
        \node[above=1.4cm of router] (dots) {$\dots$};
        \node[expert, right=0.2cm of dots] (e2) {Expert $E_{k}$};
        \node[expert, right=0.6cm of e2] (en) {Expert $E_{N}$};
        \node[expert, left=0.6cm of dots, draw=gray!30, fill=gray!5, text=gray!60] (e1) {Expert $E_{1}$};
        
        \node[sum, above=0.9cm of dots] (sum) {$\Sigma$};
        \node[align=center, above=0.5cm of sum] (output) {Output $\mathbf{y}$};
        
        \draw[->, thick] (input) -- (router);
        
        \draw[->, dashed, gray!40] (input.north) to[out=140, in=270] (e1.south);
        \draw[->, very thick, blue!80!black] (input.north) to[out=60, in=270] (e2.south); 
        \draw[->, very thick, blue!80!black] (input.north) to[out=40, in=270] (en.south);

        \draw[->, dashed, gray!40] (router.north west) -- (e1.south east) 
            node[midway, fill=white, inner sep=0.5pt, font=\scriptsize, text=gray!60] {$G_1 \approx 0$};
        \draw[->, thick, red!80!black] (router.north) -- (e2.south west) 
            node[midway, fill=white, inner sep=0.5pt, font=\scriptsize] {$G_{k}$};
        \draw[->, thick, red!80!black] (router.east) -- (en.south) 
            node[midway, fill=white, inner sep=0.5pt, font=\scriptsize] {$G_{N}$};

        \draw[->, dashed, gray!40] (e1.north) -- (sum);
        \draw[->, thick] (e2.north) -- (sum);
        \draw[->, thick] (en.north) -- (sum);
        \draw[->, thick] (sum) -- (output);
        
        \node[draw=black!10, fill=yellow!2, rounded corners=1pt, align=left, font=\scriptsize, right=0.3cm of en, anchor=west] (legend) {
            \textbf{Key Concepts:}\\
            \textcolor{blue!80!black}{\rule[0.5ex]{0.3cm}{1.5pt}} Data Flow (Input)\\
            \textcolor{red!80!black}{\rule[0.5ex]{0.3cm}{1.5pt}} Control Flow (Gating)\\
            \textcolor{gray}{\textbf{-- --}} Inactive Path
        };

    \end{tikzpicture}
    \caption{\textbf{Schematic of a Top-$k$ Mixture-of-Experts Layer.} The architecture separates \textcolor{red!80!black}{routing logic} from \textcolor{blue!80!black}{data processing}. The router selects a sparse subset of experts (Active Set $\mathcal{I}_\mathbf{x}$), ensuring that inactive experts (grey, dashed) consume no computational resources. The final output is the weighted sum of the active experts.}
    \label{fig:moe_arch}
    \vspace{-0.4cm}
\end{figure}

An MoE layer replaces the standard dense feed-forward block in neural networks with a conditional computation mechanism \citep{lou2021cross}. Formally, let $\mathbf{x} \in \mathbb{R}^{d_{in}}$ be the input vector (e.g., a token representation in a Transformer). The layer consists of a bank of $N$ \textbf{expert networks} $\{E_1, \dots, E_N\}$, where each expert acts as a function $E_i: \mathbb{R}^{d_{in}} \to \mathbb{R}^{d_{out}}$, and a \textbf{router} (or gating network) $R: \mathbb{R}^{d_{in}} \to \mathbb{R}^N$ \citep{cai2025survey, mu2025comprehensive}. The output $\mathbf{y} \in \mathbb{R}^{d_{out}}$ is computed as the sparse weighted sum of the expert outputs \citep{su2026variational}:
\begin{equation*}
    \mathbf{y} = \sum_{i=1}^N G(\mathbf{x})_i E_i(\mathbf{x}),
\end{equation*}
where $G(\mathbf{x}) \in \mathbb{R}^N$ is the sparse gating vector derived from the router. Unlike dense networks that utilize a single monolithic weight matrix, an MoE layer distributes its capacity across $N$ independent experts. Each expert $E_i$ is typically parameterized as a two-layer piecewise-linear network (e.g., a ReLU MLP). Importantly, we denote the hidden layer width of each expert as $H$. 

\paragraph{Sparse Top-$k$ Routing Mechanism.} 
As illustrated in Figure~\ref{fig:moe_arch}, modern MoE architectures (e.g., Mixtral, DeepSeek-MoE) employ a sparse gating mechanism to fundamentally decouple model capacity from inference cost \citep{jiang2024mixtral,dai2024deepseekmoe}. The gating vector is sparse, meaning only $k \ll N$ entries are non-zero. The routing process initiates by projecting the input $\mathbf{x}$ onto a learnable embedding space to generate an input-dependent routing logit vector $\mathbf{z}(\mathbf{x}) \in \mathbb{R}^N$: $\mathbf{z}(\mathbf{x}) = \mathbf{W}_r \mathbf{x} + \mathbf{b}_r$, where $\mathbf{W}_r \in \mathbb{R}^{N \times d_{in}}$ and $\mathbf{b}_r \in \mathbb{R}^N$ are the router's weights and bias. Consequently, the logit score for the $i$-th expert is an affine function of the input, denoted as $z_i(\mathbf{x}) = \mathbf{w}_{r,i}^\top \mathbf{x} + b_{r,i}$ (where $\mathbf{w}_{r,i}^\top$ is the $i$-th row of $\mathbf{W}_r$). From these logits, the router selects a sparse coalition of experts by identifying the indices corresponding to the $k$ largest elements of $\mathbf{z}(\mathbf{x})$. Formally, the active set $\mathcal{I}_{\mathbf{x}}$ is defined as:
\begin{equation}
\label{eq:active_set}
    \mathcal{I}_{\mathbf{x}} = \text{argtop}_k(\mathbf{z}(\mathbf{x})) = \{i \mid z_i(\mathbf{x}) \text{ is in the top-}k \text{ values of } \mathbf{z}(\mathbf{x})\}.
\end{equation}
Once the active experts are identified, their specific contribution weights are computed via a Softmax normalization restricted to the active set $\mathcal{I}_\mathbf{x}$:
\begin{equation}
\label{eq:gating_norm}
    G(\mathbf{x})_i = \begin{cases} 
    \frac{\exp(z_i(\mathbf{x}))}{\sum_{j \in \mathcal{I}_\mathbf{x}} \exp(z_j(\mathbf{x}))} & \text{if } i \in \mathcal{I}_\mathbf{x}, \\
    0 & \text{otherwise}.
    \end{cases}
\end{equation}
Importantly, this mechanism ensures that the computational cost of the forward pass scales with $k$ rather than $N$. In our geometric analysis, we focus specifically on the decision boundaries induced by the discrete selection operator in Eq.~\eqref{eq:active_set}. While the Softmax normalization (Eq.~\eqref{eq:gating_norm}) determines the continuous magnitude of the output, the fundamental partitioning of the input space is governed exclusively by the Top-$k$ operator.

\paragraph{Complexity Analysis: Active vs. Total Parameters.} 
To evaluate the expressivity of MoE, we must establish an \textit{iso-compute constraint} comparing it to standard dense networks. We distinguish between two types of parameter counts (ignoring biases and output layers for asymptotic clarity):
\begin{itemize}
    \item \textbf{Total Parameters ($P_{total}$: Memory Cost).} A dense model with hidden width $H_{dense}$ has $P_{total} = H_{dense} \cdot d_{in}$. An MoE model with $N$ experts of width $H$ stores all experts in memory, yielding a total parameter count of $P_{total} = N \cdot H \cdot d_{in}$.
    \item \textbf{Active Parameters ($P_{active}$: Inference FLOPs).} For a given input $\mathbf{x}$, the dense model activates all neurons, so $P_{active} = H_{dense} \cdot d_{in}$. In contrast, the Top-$k$ MoE model conditionally activates only $k$ experts. Therefore, its inference cost is strictly bounded by $P_{active} = k \cdot H \cdot d_{in}$ (plus a negligible routing overhead).
\end{itemize}

To ensure a fair comparison of geometric expressivity, we control the computational variable by matching the active parameters of the Dense and MoE models:
\begin{equation*}
    P_{active}^{Dense} = P_{active}^{MoE} \iff H_{dense} = k \cdot H.
\end{equation*}
Under this constraint, both models perform the exact same number of FLOPs per forward pass. However, the MoE model possesses a vastly larger parameter pool ($N \cdot H \cdot d_{in} \gg H_{dense} \cdot d_{in}$ since $N \gg k$). The central theoretical question of this paper is: \textbf{\textit{How does the routing mechanism leverage this exponentially larger, yet sparsely activated, parameter pool to expand the network's expressive geometry?}}

\subsection{The Tropical Geometry of Routing}
We now present our primary formulation: the interpretation of the MoE routing mechanism as a tropical algebraic operation. This perspective allows us to quantify the \textbf{combinatorial depth} of the model.

\textbf{Top-1 Routing (Tropical Maximization).} 
In the simplest case ($k=1$), the router selects the single expert with the maximum logit. The score of the selected expert is given by the tropical sum of the logits:
\begin{equation*}
    S_{\text{top1}}(\mathbf{x}) = \bigoplus_{i=1}^N z_i(\mathbf{x}) = \max_{i=1,\dots,N} (\mathbf{w}_{i}^{r\top} \mathbf{x} + b_{i}^r).
\end{equation*}
This is a tropical polynomial of degree 1. Its singular locus, $\mathcal{T}(S_{\text{top1}})$, forms a classical Voronoi diagram (or power diagram) partitioning $\mathbb{R}^d$ into $N$ convex polytopes.

\textbf{Top-k Routing (Combinatorial Tropical Polynomials).} 
When $k > 1$, the router selects a coalition of experts that maximizes the aggregate score. Algebraically, the score of the optimal coalition corresponds to the \textbf{$k$-th Elementary Symmetric Tropical Polynomial} \citep{verovsek2014symmetric}, denoted as $\text{SymTrop}_k$:
\begin{equation*}
    S_{\text{topk}}(\mathbf{x}) = \text{SymTrop}_k(z_1, \dots, z_N) := \bigoplus_{\substack{\mathcal{I}_\mathbf{x} \subseteq \{1,\dots,N\} \\ |\mathcal{I}_\mathbf{x}|=k}} \left( \bigotimes_{j \in \mathcal{I}_\mathbf{x}} z_j(\mathbf{x}) \right).
\end{equation*}
Translating back to standard arithmetic, this becomes:
\begin{equation*}
    S_{\text{topk}}(\mathbf{x}) = \max_{\substack{\mathcal{I}_\mathbf{x} \subset \{1,\dots,N\} \\ |\mathcal{I}_\mathbf{x}|=k}} \sum_{j \in \mathcal{I}_\mathbf{x}} (\mathbf{w}_{j}^{r\top} \mathbf{x} + b_{j}^r).
\end{equation*}
This formulation provides the rigorous basis for the capacity analysis in the subsequent section.

\section{Geometric Structure and Capacity Analysis}
\label{sec:capacity}

In this section, we quantify the expressivity of MoE. Our analysis proceeds in three steps: 1) we characterize the \textit{shape} of the decision boundaries induced by the router; 2) we derive the \textit{Combinatorial Slicing Theorem}, which provides matching upper and lower bounds on the number of linear regions in the idealized setting; and 3) we refine these bound under the Manifold Hypothesis to derive the \textit{Effective Capacity}.

\subsection{The Geometry of Routing: From Hyperplanes to Hypersimplices}
\label{subsec:geometry_routing}

To understand the expressivity of MoE, we first contrast its geometric structure with that of standard dense networks. Consider a standard dense feedforward layer with width $H$. The pre-activation function consists of $H$ affine transformations $h_i(\mathbf{x}) = \mathbf{w}_i^\top \mathbf{x} + b_i$. Geometrically, the zeros of these functions define a set of hyperplanes in the input space $\mathbb{R}^{d_{in}}$.

\begin{definition}[Hyperplane Arrangement]
The geometry of a dense layer is characterized by a hyperplane arrangement $\mathcal{A} = \{ \mathcal{H}_1, \dots, \mathcal{H}_H \}$, where $\mathcal{H}_i = \{ \mathbf{x} \in \mathbb{R}^{d_{in}} \mid \mathbf{w}_i^\top \mathbf{x} + b_i = 0 \}$.
\end{definition}

This arrangement partitions the input space into convex polyhedral regions (cells). Within each region, the sign pattern of the activations is constant, and the network behaves linearly. Crucially, the complexity of this partition is coupled to the physical width $H$. To increase the topological complexity (number of regions), one must linearly scale the number of neurons, incurring a quadratic cost in parameters and FLOPs. While the dense layer relies on independent cuts, the MoE router introduces a competitive mechanism. For $k=1$, the router selects the maximum of $N$ affine logits. 

\begin{proposition}[Top-1 MoE Geometry: Tropical Hypersurfaces and Power Diagrams]
\label{prop:top1_geometry}
Let $\mathcal{N} = \{1, \dots, N\}$ be the set of expert indices, and let $z_i(\mathbf{x})$ denote the affine logit score for expert $i$. The decision boundaries of a Top-1 router form a \textbf{Tropical Hypersurface}, formally defined as the singular locus of the tropical polynomial $S_{\text{top1}}(\mathbf{x}) = \max_{k \in \mathcal{N}} z_k(\mathbf{x})$:
\begin{equation*}
    \mathcal{T}(S_{\text{top1}}) = \left\{ \mathbf{x} \in \mathbb{R}^{d_{in}} \;\middle|\; \exists i, j \in \mathcal{N}, i \neq j : z_i(\mathbf{x}) = z_j(\mathbf{x}) = S_{\text{top1}}(\mathbf{x}) \right\}.
\end{equation*}
Geometrically, this locus constitutes the boundaries of a \textbf{Power Diagram}, partitioning $\mathbb{R}^{d_{in}}$ into $N$ convex polyhedral cells \citep{aurenhammer1987power}.
\end{proposition}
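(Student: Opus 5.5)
The proof has three steps: show the router's decision boundary is exactly the tropical locus, show that locus is the boundary of a power diagram, and check that the cells are convex polyhedra.

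\textbf{Step 1: boundaries equal the tropical locus.} For each expert $i$ define the selection region
\[
C_i = \{\mathbf{x} \mid z_i(\mathbf{x}) \ge z_j(\mathbf{x}) \ \forall j\}.
\]
Each $C_i$ is a finite intersection of closed half-spaces $\{(\mathbf{w}_i^r-\mathbf{w}_j^r)^\top\mathbf{x} + (b_i^r-b_j^r) \ge 0\}$. Hence it is a convex polyhedron, and the $C_i$ cover $\mathbb{R}^{d_{in}}$. The set where $\text{argtop}_1$ is locally constant is the set where the maximum is attained uniquely: there strict inequalities persist in a neighborhood by continuity. Conversely, suppose $z_i(\mathbf{x}) = z_j(\mathbf{x}) = S_{\text{top1}}(\mathbf{x})$ with $i\neq j$ and $\mathbf{w}_i^r \neq \mathbf{w}_j^r$. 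Then arbitrarily close points lie on both sides of the hyperplane $z_i = z_j$, so the selected expert changes there. This identifies the decision boundary with $\mathcal{T}(S_{\text{top1}})$. The same locus is also the non-differentiability set of the convex function $S_{\text{top1}}$, which matches the tropical hypersurface definition given earlier.

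\textbf{Step 2: identification with a power diagram.} This is the key step. Assign to expert $i$ the site $\mathbf{p}_i = \mathbf{w}_i^r/2$ and the weight $\omega_i = \|\mathbf{w}_i^r\|^2/4 + b_i^r$. A direct expansion gives
\[
\|\mathbf{x}-\mathbf{p}_i\|^2 - \omega_i = \|\mathbf{x}\|^2 - z_i(\mathbf{x}).
\]
The term $\|\mathbf{x}\|^2$ is common to all $i$, so maximizing $z_i$ is equivalent to minimizing the power distance. Therefore $C_i$ is exactly the power cell of $(\mathbf{p}_i,\omega_i)$, and $\mathcal{T}(S_{\text{top1}})$ is the union of the power-diagram boundaries \citep{aurenhammer1987power}. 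The correspondence also runs backwards: any power diagram arises this way by setting $\mathbf{w}_i^r = 2\mathbf{p}_i$ and $b_i^r = \omega_i - \|\mathbf{p}_i\|^2$.

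\textbf{Step 3: the count of $N$ cells.} This is the main obstacle, because the claim of exactly $N$ cells is a statement about the generic case. Degenerate parameters can break it in two ways: some $C_i$ may be empty or lower-dimensional (an expert that never wins), or duplicate logits may merge cells. I would therefore prove \emph{at most} $N$ cells in general, which holds since each cell is a single convex set, and exactly $N$ full-dimensional cells for suitable parameters.

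For the exact count I would choose distinct $\mathbf{w}_i^r$ and biases so that every $z_i$ is strictly maximal somewhere. One concrete choice is $\mathbf{w}_i^r$ on a sphere with $b_i^r = 0$: then $z_i$ is strictly maximal in the open cone around direction $\mathbf{w}_i^r$, which has nonempty interior. I would add a remark that this holds for the parameter instantiations relevant to Definition~\ref{def:linear_regions}, where capacity is a maximum over parameters.
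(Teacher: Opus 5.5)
Your proof is correct and more explicit than what the paper gives. The paper never proves this proposition on its own. The only supporting argument is the $k=1$ case of Appendix~A, restated in Appendix~C: the cell of expert $i$ is $\bigcap_{j\neq i}\{z_i\ge z_j\}$, identified with a cone of the normal fan of the projected simplex. The power-diagram claim is left to the citation and to the informal ``perpendicular bisector'' remark in the example.

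You share the half-space description with the paper. Your Step~2, however, replaces the citation with the lifting identity $\|\mathbf{x}-\mathbf{p}_i\|^2-\omega_i=\|\mathbf{x}\|^2-z_i(\mathbf{x})$, and this buys several things:
\begin{itemize}
\item It actually proves the identification, and gives the converse: every power diagram is some router's partition.
\item It handles nonzero biases directly. The paper's normal-fan picture is literally a fan only when $\mathbf{b}_r=\mathbf{0}$; otherwise it is a slice of the normal fan of $\mathrm{conv}\{(\mathbf{w}_i^r,b_i^r)\}$.
\item It pinpoints when the example's Voronoi picture is exact, namely when $b_i^r+\|\mathbf{w}_i^r\|^2/4$ is constant in $i$.
\end{itemize}

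Your Step~3 is also a real correction. As literally stated, ``$N$ cells'' fails in two situations the paper does not address. It fails when some expert is dominated everywhere, which happens with positive probability under random initialization. It also fails when two logits coincide, in which case $\mathcal{T}(S_{\text{top1}})$ has full-dimensional pieces. ``At most $N$, with equality for suitable parameters'' is the right statement.

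What the paper's route buys is uniformity with Theorem~\ref{thm:topk_geometry}. Your lifting extends there too, since $\sum_{i\in I}(\|\mathbf{x}-\mathbf{p}_i\|^2-\omega_i)=k\|\mathbf{x}\|^2-S_I(\mathbf{x})$ exhibits the order-$k$ power diagram.

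Two small repairs are needed.

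\emph{Step~1.} Points on both sides of $\{z_i=z_j\}$ do not by themselves show that the selection changes. A third tied expert $l$ can win on both sides; take local gradients $(1,0)$, $(-1,0)$, $(0,1)$. Argue instead as follows. Suppose one expert $l$ were selected throughout a neighbourhood of $\mathbf{x}$, off the tie set. By continuity $z_l-z_i\ge 0$ there, with equality at $\mathbf{x}$, which forces $\mathbf{w}_l^r=\mathbf{w}_i^r$. Likewise $\mathbf{w}_l^r=\mathbf{w}_j^r$, contradicting $\mathbf{w}_i^r\neq\mathbf{w}_j^r$.

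\emph{Step~3.} When $d_{in}=1$, distinct unit vectors exist only for $N\le 2$, so your zero-bias construction breaks for $N\ge 3$. Your own Step~2 gives a construction that works in every dimension. Take distinct sites $\mathbf{p}_i$, and set $\mathbf{w}_i^r=2\mathbf{p}_i$ and $b_i^r=-\|\mathbf{p}_i\|^2$. Then $z_i(\mathbf{p}_i)-z_j(\mathbf{p}_i)=\|\mathbf{p}_i-\mathbf{p}_j\|^2>0$ for all $j\neq i$.
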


\begin{example}[Analyzing the Singular Locus and Decision Boundaries.]
To concretize Proposition \ref{prop:top1_geometry}, consider the geometry presented in Figure \ref{fig:top1_geometry}. Let us track an input feature $\mathbf{x}$ traversing from the domain of Expert 5 ($\Omega_5$) into the domain of Expert 1 ($\Omega_1$).
\begin{itemize}
    \item \textbf{Inside the Cell:} While $\mathbf{x}$ is within $\Omega_5$ (the orange region), its logit score dominates: $z_5(\mathbf{x}) > z_k(\mathbf{x})$ for all $k \neq 5$. On the 3D tropical hypersurface (Figure \ref{fig:top1_geometry}, Right), this region corresponds to the smooth, flat facet representing the affine plane $z = z_5(\mathbf{x})$. In this state, the top-1 routing is stable and differentiable.
    
    \item \textbf{On the Boundary (Singular Locus):} As $\mathbf{x}$ hits the solid black boundary between $\Omega_5$ and $\Omega_1$ (Figure \ref{fig:top1_geometry}, Left), a routing tie occurs: $z_1(\mathbf{x}) = z_5(\mathbf{x}) = S_{\text{top1}}(\mathbf{x})$. Geometrically, this exact moment corresponds to the crease or valley on the 3D tropical hypersurface where two affine planes intersect. By definition in tropical geometry, this intersection constitutes the \textit{singular locus} $\mathcal{T}(S_{\text{top1}})$, where the maximum function becomes non-differentiable.
    
    \item \textbf{Voronoi Equivalence:} Because the affine logits $z_i(\mathbf{x})$ in our router implicitly act as scaled distance metrics to the expert centroids (black dots), the equation $z_1(\mathbf{x}) = z_5(\mathbf{x})$ simplifies to the hyperplane equation of a perpendicular bisector. This explains why the projection of the tropical singular locus recovers a classical Voronoi Diagram.
\end{itemize}
\end{example}

\begin{figure}
    \centering
    \includegraphics[width=\textwidth]{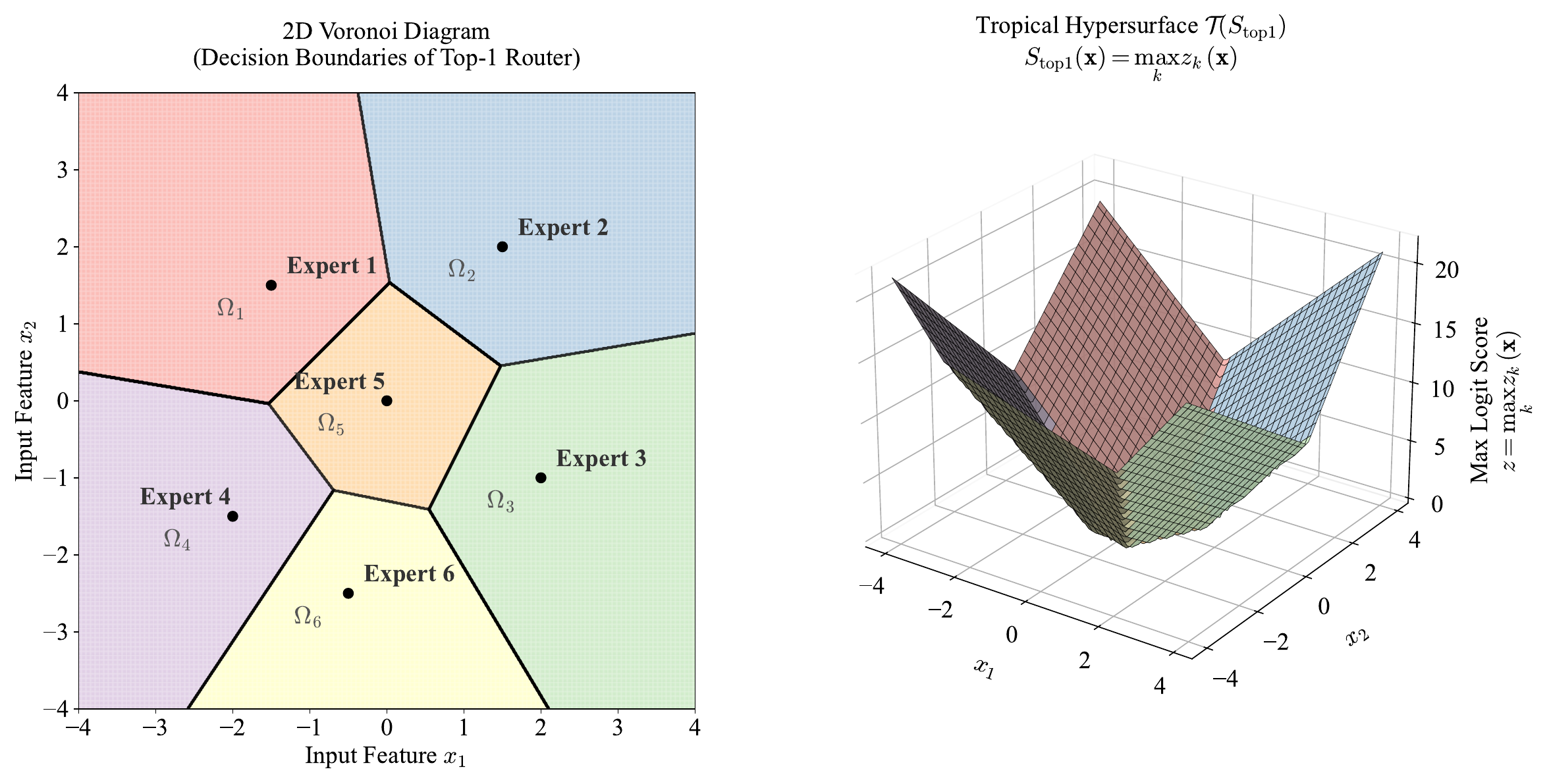}
    \caption{\textbf{Geometric interpretation of the Top-1 MoE router.} \textbf{(Left)} The router partitions the input space $\mathbb{R}^2$ into a Voronoi diagram, where each convex polyhedral cell $\Omega_i$ corresponds to the active domain of Expert $i$. \textbf{(Right)} The maximum logit score function $S_{\text{top1}}(\mathbf{x}) = \max_k z_k(\mathbf{x})$ constitutes a piecewise-linear tropical hypersurface. The singular locus of this hypersurface (its non-differentiable creases) orthogonally projects onto the exact Voronoi decision boundaries in the 2D plane.}
    \label{fig:top1_geometry}
    \vspace{-0.4cm}
\end{figure}

\begin{remark}[The Sparsity Gain]
    Unlike the dense case, where $H$ hyperplanes generate independent cuts, the $N$ experts in a Top-1 router define $N(N-1)/2$ potential pairwise boundaries ($z_i = z_j$). The geometry shifts from a simple arrangement to a competitive partition, effectively decoupling the number of regions from the computational cost of a single expert.
\end{remark}

For $k > 1$, the geometry undergoes a fundamental shift from convexity to combinatorics. The router selects a coalition of $k$ experts, corresponding to the singular locus of the $k$-th elementary symmetric tropical polynomial.

\begin{theorem}[Top-k MoE Geometry: Tropical Grassmannians and Order-$k$ Voronoi Diagrams]
\label{thm:topk_geometry}
Let $\binom{\mathcal{N}}{k}$ denote the collection of all possible expert coalitions of size $k$. For any coalition $I \in \binom{\mathcal{N}}{k}$, define its aggregate score as $S_I(\mathbf{x}) = \sum_{i \in I} z_i(\mathbf{x})$.
The decomposition of the input space $\mathbb{R}^{d_{in}}$ induced by a Top-$k$ router constitutes an \textbf{Affine Order-$k$ Voronoi Diagram}. Specifically, the decision boundaries form the singular locus of the tropical polynomial $S_{\text{topk}}(\mathbf{x})$. Geometrically, this partition arises from the intersection of the input space (embedded via $\mathbf{W}_r$) with the \textbf{Normal Fan} of the $(k, N)$-Hypersimplex $\Delta_{k, N}$:
\begin{equation*}
    \mathcal{B}_k = \left\{ \mathbf{x} \in \mathbb{R}^{d_{in}} \;\middle|\; \exists I, J \in \binom{\mathcal{N}}{k}, I \neq J : S_I(\mathbf{x}) = S_J(\mathbf{x}) = S_{\text{topk}}(\mathbf{x}) \right\}.
\end{equation*}
Crucially, while the activation region $\Omega_I$ for a specific coalition $I$ is a convex polyhedron, the aggregate region served by any single expert is a \textbf{non-convex} union of such polyhedra.
\end{theorem}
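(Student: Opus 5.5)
The argument has four parts. First we show that $S_{\text{topk}}$ is the support function of the hypersimplex. Then we pull back its normal fan along the affine logit map. Next we identify the resulting cells with order-$k$ Voronoi (top-$k$) regions. Finally we exhibit non-convexity of the per-expert regions with an explicit small configuration.

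\textbf{Step 1 (support function).} Write $\mathbf{z}=\mathbf{z}(\mathbf{x})=\mathbf{W}_r\mathbf{x}+\mathbf{b}_r$. For a coalition $I$ we have $S_I(\mathbf{x})=\langle \mathbf{e}_I,\mathbf{z}\rangle$, where $\mathbf{e}_I=\sum_{i\in I}\mathbf{e}_i$. By definition $\Delta_{k,N}=\mathrm{conv}\{\mathbf{e}_I : |I|=k\}$. Since a linear functional on a polytope is maximized at a vertex, this gives
\[
S_{\text{topk}}(\mathbf{x})=h_{\Delta_{k,N}}(\mathbf{z}(\mathbf{x}))=\max_{\mathbf{p}\in\Delta_{k,N}}\langle \mathbf{p},\mathbf{z}(\mathbf{x})\rangle .
\]
Here every $\mathbf{e}_I$ is a vertex of $\Delta_{k,N}$, because it is the unique maximizer of $\langle \mathbf{e}_I,\cdot\rangle$ over the $0/1$ vectors of weight $k$. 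It follows that $h_{\Delta_{k,N}}$ is a tropical polynomial in $\mathbf{z}$ whose tropical hypersurface is exactly the codimension-one skeleton of the normal fan $\mathcal{F}(\Delta_{k,N})$. The maximal cone of this fan attached to $\mathbf{e}_I$ is
\[
C_I=\{\mathbf{z}: z_i\ge z_j \ \forall i\in I,\ j\notin I\},
\]
by the standard characterization of normal cones of the hypersimplex. Indeed, an exchange $I\to I\setminus\{i\}\cup\{j\}$ changes the score by $z_j-z_i$, and the edges of $\Delta_{k,N}$ are exactly these exchanges.

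\textbf{Step 2 (pullback and identification with Top-$k$).} The cell of the input-space decomposition is $\Omega_I=\mathbf{z}^{-1}(C_I)=\{\mathbf{x} : z_i(\mathbf{x})\ge z_j(\mathbf{x}),\ i\in I, j\notin I\}$. This is a finite intersection of affine half-spaces, hence a convex polyhedron. It is precisely the set where $I$ is a valid $\mathrm{argtop}_k$ from Eq.~\eqref{eq:active_set}. The sets $\Omega_I$ cover $\mathbb{R}^{d_{in}}$ and have disjoint interiors, so they form a polyhedral subdivision, namely the affine order-$k$ Voronoi (power) diagram. In the special case where the logits are of the form $z_i=2\langle\mathbf{c}_i,\mathbf{x}\rangle-\|\mathbf{c}_i\|^2$, this reduces to the classical order-$k$ diagram of the centroids, generalizing Proposition~\ref{prop:top1_geometry}.

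The boundary identification goes as follows. A point $\mathbf{x}$ lies in two cells $\Omega_I\neq\Omega_J$ exactly when two coalitions attain the maximum. That is the definition of $\mathcal{B}_k$, and it is the preimage of the tropical hypersurface of $h_{\Delta_{k,N}}$. The one point to handle carefully is that the affine pullback may be degenerate: when $\mathbf{W}_r$ has low rank, some cones are missed and some cells may be lower-dimensional. I will therefore phrase the statement as ``intersection of the affine image of $\mathbb{R}^{d_{in}}$ with the fan'' and note that for generic $\mathbf{W}_r,\mathbf{b}_r$ the image meets cones transversally. This is the main technical nuisance, but it is only a genericity remark.

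\textbf{Step 3 (non-convexity).} The region served by expert $i$ is $U_i=\bigcup_{I\ni i}\Omega_I=\{\mathbf{x}: z_i(\mathbf{x})\ge z_{(k)}(\mathbf{x})\}$. This set is a superlevel set of $z_i-z_{(k)}$, and $z_{(k)}$, the $k$-th largest logit, is neither convex nor concave for $1<k<N$. To show that $U_i$ genuinely fails to be convex, I will exhibit an explicit example with $d_{in}=1$, $N=3$, $k=2$: take $z_1=x$, $z_2=-x$, $z_3=0$. Then expert $3$ is in the top $2$ for every $x$, while expert $1$ is selected iff $x\ge 0$, so this choice is still convex. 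I will replace it with $z_1=0$, $z_2=x-1$, $z_3=-x-1$. Then expert $1$ is in the top $2$ iff $0\ge$ the smaller of $x-1$ and $-x-1$, which holds for all $x$; this is again convex. So I will search for a better configuration in $d_{in}=2$. The target is an expert that is second-best near two separated points but fourth-best in between. Take $N=4$, $k=2$, and let $z_1$ be constant at $0$, with $z_2,z_3,z_4$ steep planes arranged so that along a segment both $z_2$ and $z_3$ exceed $0$ only in the middle. Concretely, on the line $x_2=0$, set $z_2=1-|x|$-type behavior, realized by two separate logits $z_2=1+x$, $z_3=1-x$, and add $z_4=1-3|x|$; the last is not affine, so $z_4$ must be replaced by an affine logit. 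I expect this bookkeeping to be the hardest step. The fallback is to use the order-$k$ Voronoi picture: for centroids placed on a regular polygon with $k=2$, the region of a boundary centroid in the order-$2$ diagram is known to be a non-convex union of cells, and I would verify such a configuration directly.
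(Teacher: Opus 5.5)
Your Steps 1 and 2 are correct and follow essentially the paper's route. The paper shows that every dominance inequality $S_I\ge S_J$ is a sum of single-exchange inequalities $z_u\ge z_v$ ($u\in I$, $v\notin I$). It then reads $S_{\text{topk}}$ as a maximum of affine functions with slopes $\mathbf{W}_r^\top\mathbf{e}_I$ and appeals to the normal fan of the projected hypersimplex $\mathbf{W}_r^\top\Delta_{k,N}$. Your version writes $S_{\text{topk}}=h_{\Delta_{k,N}}\circ\mathbf{z}$ and pulls the fan of the standard $\Delta_{k,N}$ back along $\mathbf{x}\mapsto\mathbf{W}_r\mathbf{x}+\mathbf{b}_r$. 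This is if anything cleaner: it accounts for the biases, which make the cells polyhedra rather than cones, without the paper's ``general position preserving the vertex hull'' hypothesis. You also do not need the edge structure of $\Delta_{k,N}$ to identify $C_I$. The vector $\mathbf{e}_I$ maximizes $\langle\cdot,\mathbf{z}\rangle$ over weight-$k$ $0/1$ vectors iff $\min_{i\in I}z_i\ge\max_{j\notin I}z_j$, which is immediate by sorting.

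The genuine gap is Step 3: as written, no valid configuration is ever produced, so the non-convexity clause is unproved. Your two concrete attempts are convex (as you note). The $N=4$ attempt uses the non-affine logit $1-3|x|$. The regular-polygon fallback is an unverified appeal to folklore.

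The missing observation is this. For $N=3$, $k=2$, an expert is selected iff it is not the strict minimum, so $U_1=\{z_1\ge z_2\}\cup\{z_1\ge z_3\}$ is a union of two half-spaces. Such a union is non-convex unless one half-space contains the other or together they cover $\mathbb{R}^{d_{in}}$. For instance, it is non-convex whenever $\mathbf{w}^r_1-\mathbf{w}^r_2$ and $\mathbf{w}^r_1-\mathbf{w}^r_3$ are linearly independent, since then the complement is an open wedge.

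The pair you already wrote down works with no fourth expert. Take $z_1=0$, $z_2=1+x$, $z_3=1-x$, $k=2$; then $U_1=\{x: 0\ge 1-|x|\}=\{|x|\ge 1\}$, which is not convex. Your second attempt failed only because of the sign of the constant term. In $d_{in}=2$, the choice $z_1=0$, $z_2=x_1$, $z_3=x_2$ gives $U_1=\mathbb{R}^2\setminus\{x_1>0,\,x_2>0\}$. This is exactly the paper's $N=3$, $k=2$ picture of two adjacent cones wrapping around the origin; the appendix proof itself never treats this clause and rests on that example.

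Finally, your own first example shows that the clause cannot hold for every expert and every choice of weights: there expert $3$ is always selected and expert $1$ is selected on a half-line. You should therefore state explicitly that you prove it in the existential (or generic, $d_{in}\ge 2$) sense.
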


\begin{figure}
    \centering
    \includegraphics[width=\textwidth]{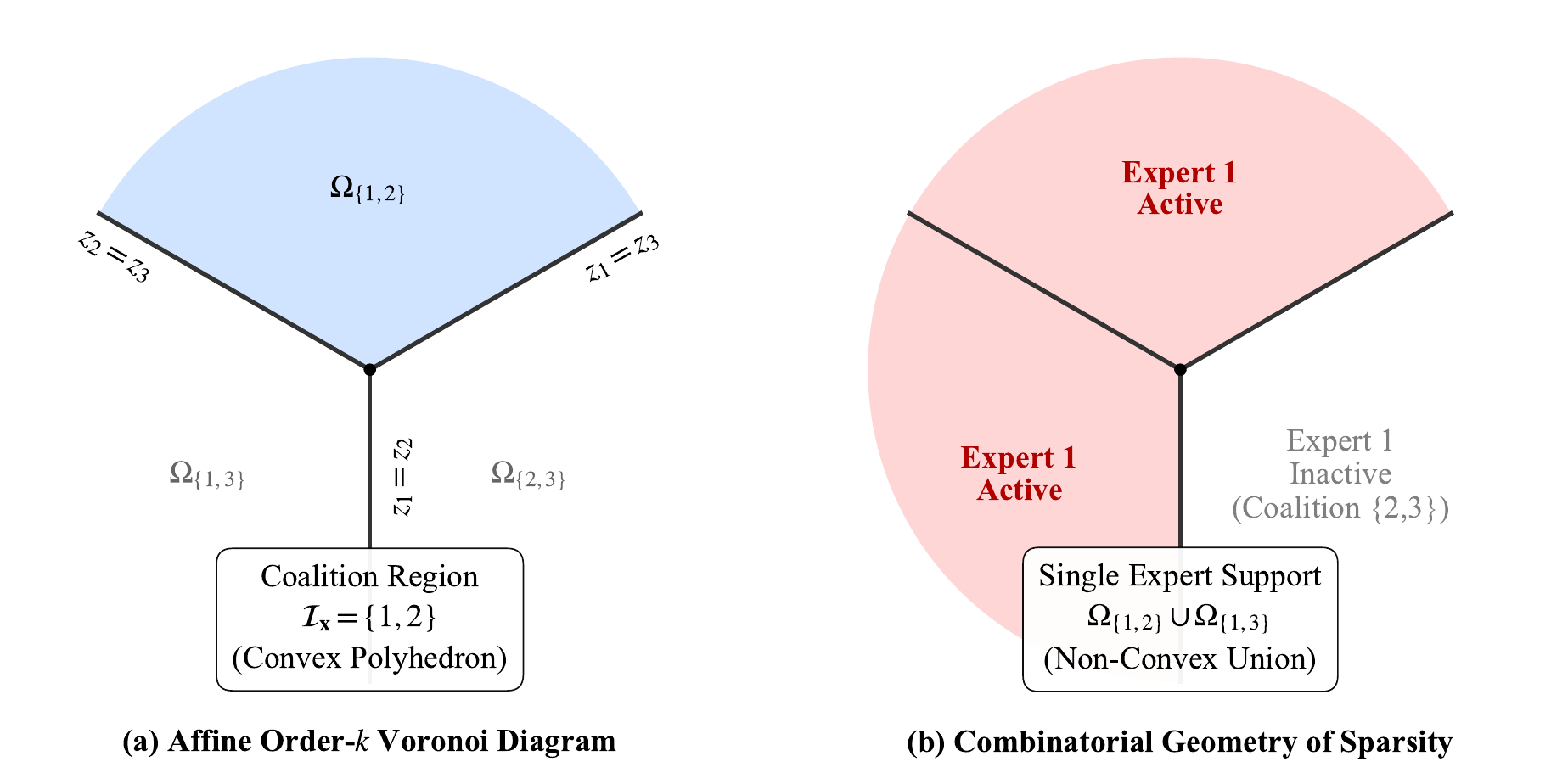}
    \caption{\textbf{Geometric Interpretation of Top-$k$ Routing.} Visualized for an MoE layer with $N=3$ experts and $k=2$ active selections. \textbf{(a)} The activation region for a specific expert coalition (e.g., $\Omega_{\{1,2\}}$) is a \textbf{convex} polyhedral cone, defined by the intersection of half-spaces where $z_1 > z_3$ and $z_2 > z_3$. \textbf{(b)} The total support region for a single expert (e.g., Expert 1) spans multiple adjacent cones ($\Omega_{\{1,2\}} \cup \Omega_{\{1,3\}}$). This union forms a fundamentally \textbf{non-convex} space, demonstrating the complex topological fragmentation processed by individual sparse experts.}
    \label{fig:topk_geometry}
    \vspace{-0.4cm}
\end{figure}

\begin{example}[Top-2 Routing with 3 Experts] 
To build intuition for the geometric shift induced by conditional routing, consider a minimal MoE layer with $N=3$ experts and $k=2$ active selections. The router generates three affine logits $z_1(\mathbf{x}), z_2(\mathbf{x}), z_3(\mathbf{x})$. There are $\binom{3}{2} = 3$ possible expert coalitions: $\{1, 2\}$, $\{1, 3\}$, and $\{2, 3\}$. In Figure~\ref{fig:topk_geometry} (a), the coalition $I = \{1, 2\}$ is selected if and only if its logits dominate the unselected expert:
\begin{equation*}
    \Omega_{\{1,2\}} = \{ \mathbf{x} \in \mathbb{R}^{d_{in}} \mid z_1(\mathbf{x}) > z_3(\mathbf{x}) \text{ and } z_2(\mathbf{x}) > z_3(\mathbf{x}) \}.
\end{equation*}
Geometrically, this region is the intersection of two affine half-spaces, forming a \textbf{convex} polyhedral cone. The boundaries separating these coalitions constitute the Affine Order-2 Voronoi Diagram. However, the geometric complexity emerges when analyzing the network from the perspective of a single expert. As shown in Figure~\ref{fig:topk_geometry} (b), Expert 1 is active whenever the router selects either $\{1, 2\}$ or $\{1, 3\}$. Its total effective domain is the union of these two adjacent cones: $\Omega_{\{1,2\}} \cup \Omega_{\{1,3\}}$. Because this combined region wraps around the origin, it is \textbf{non-convex}.
\end{example}

\begin{remark}[Geometric Intuition and Exactness]
\label{rem:geometry_and_exactness}
    Unlike Top-1 routing, which merely tiles the space, Top-$k$ routing projects data onto the dual of the Hypersimplex $\Delta_{k,N}$. Its combinatorial complexity stems from the Normal Fan of $\Delta_{k,N}$, closely related to the \textbf{Tropical Grassmannian} $\text{Gr}(k, N)$ \citep{speyer2003tropical} (proof in Appendix \ref{app:proof_topk}). Importantly, this tropical partition is \textbf{exact} even with Softmax-gated experts: while Softmax smooths outputs within regions, the decision boundaries, where the active set $\mathcal{I}_{\mathbf{x}}$ changes, depend only on logits' rank ordering. Since Softmax is strictly monotonic, it preserves this order, making the Tropical Hypersurface the precise topological skeleton of the router.
\end{remark}

\subsection{Capacity Analysis: The Combinatorial Slicing Theorem}
\label{subsec:capacity_analysis}

In this subsection, we quantify the \textbf{Geometric Capacity}, defined as the maximum number of linear regions the model induces in the input space $\mathbb{R}^{d_{in}}$. To facilitate rigorous counting, we rely on the theory of hyperplane arrangements. The maximal capacity is achieved only under specific geometric conditions.

\begin{definition}[General Position]
\label{def:general_position}
A set of hyperplanes $\mathcal{A} = \{\mathcal{H}_1, \dots, \mathcal{H}_n\}$ in an ambient space $\mathbb{R}^d$ is in \textbf{General Position} \citep{cover1965geometrical} if:
(i) No two hyperplanes are parallel.
(ii) For any $1 \le k \le d$, the intersection of any $k$ hyperplanes is a subspace of dimension $d-k$.
(iii) The intersection of any $d+1$ hyperplanes is empty.
Under random initialization of weights, this condition holds with probability 1 \citep{stanley2007introduction,orlik2013arrangements,hanin2019complexity}.
\end{definition}

\begin{figure}
    \centering
    \includegraphics[width=\textwidth]{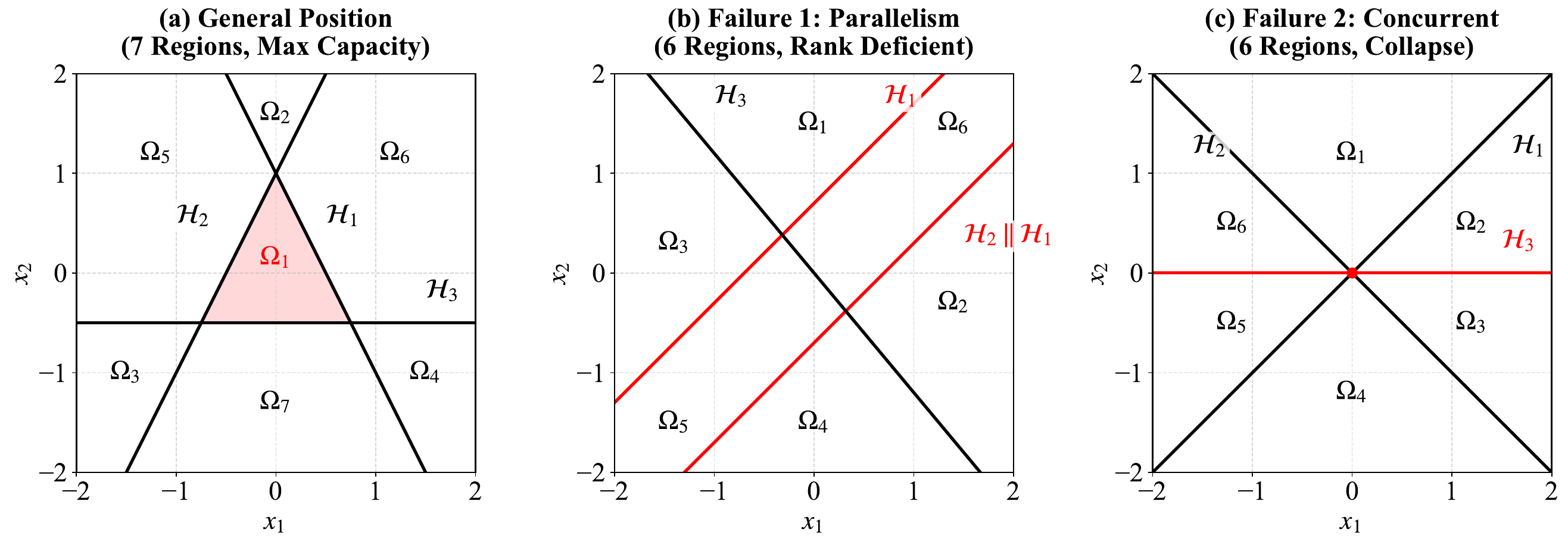}
    \caption{\textbf{Geometric impact of General Position on Capacity.} \textbf{(a)} In general position ($d_{in}=2, n=3$), hyperplanes form a closed cell (red shadow, $\Omega_1$), maximizing the number of regions ($\Phi(3,2)=7$). \textbf{(b \& c)} Degenerate arrangements fail to enclose this maximal cell structure, reducing the combinatorial capacity. In MoE routers, these degeneracies cause a direct collapse of the Tropical Hypersurface.}
    \label{fig:general_position}
    \vspace{-0.4cm}
\end{figure}

\begin{example}[General Position vs. Degenerate Arrangements in $\mathbb{R}^2$]
\label{ex:general_position_failure}
To concretize Definition \ref{def:general_position} and illustrate the geometric conditions required for maximal expressivity, consider a minimal ambient space $\mathbb{R}^{d_{in}}$ with $d_{in}=2$ and $n=3$ affine hyperplanes, representing three ReLU neurons or routing boundaries. 
\begin{itemize}
    \item \textbf{General Position (Maximal Capacity):} As shown in Figure \ref{fig:general_position} (a), the three lines are mutually non-parallel and do not intersect at a single point. This arrangement achieves the theoretical maximum number of linear regions according to Zaslavsky's theorem in Definition \ref{def:zaslavsky}: $\Phi(3, 2) = \binom{3}{0} + \binom{3}{1} + \binom{3}{2} = 1 + 3 + 3 = 7$ regions ($\Omega_1 \dots \Omega_7$).
    
    \item \textbf{Failure Case 1: Parallelism (Rank Deficiency):} In Figure \ref{fig:general_position} (b), two hyperplanes are parallel ($\mathcal{H}_1 \parallel \mathcal{H}_2$). This violates Condition (i) of Definition \ref{def:general_position}. The hyperplanes fail to enclose a central bounded region, causing the highest-order combinatorial term to vanish, reducing the total regions to 6. 
    
    \item \textbf{Failure Case 2: Concurrent Intersection (Dimensional Collapse):} In Figure \ref{fig:general_position} (c), all three lines intersect at the origin. This violates Condition (iii) (intersection of $d_{in}+1=3$ hyperplanes is non-empty). The central polyhedral cell ($\Omega_1$) collapses to a dimension of zero, strictly reducing the capacity to 6 regions. Later in Section 5, we will see that this specific geometric degeneracy manifests as \textit{Angular Collapse} (Proposition \ref{prop:angular_collapse}), where routing boundaries fail to transversally partition the data manifold.
\end{itemize}
\end{example}

Consider a single dense ReLU layer with width $H$. The geometry is governed by an arrangement of $H$ hyperplanes defined by the weight vectors. While the capacity bound for this setting is established in the literature \citep{pascanu2013number,montufar2014number,serra2018bounding,xiong2020number,xiong2024number}, we revisit it through the lens of \textbf{Geometric Duality}. As illustrated in Figure~\ref{fig:primal_dual} (Appendix~\ref{app:proof_dense}), the linear regions in the input space (Primal) are intimately linked to the vertices of a Newton Zonotope in the weight space (Dual). This perspective allows us to unify the analysis of dense and sparse models under a single geometric framework.

\begin{proposition}[Dense Base Capacity Upper Bound]
\label{prop:dense_capacity}
For a dense layer with width $H$ and input dimension $d_{in}$, the number of linear regions $N_{\text{dense}}$ is upper bounded by the cumulative binomial sum, which determines the asymptotic scaling:
\begin{equation*}
    N_{\text{dense}} \le \Phi(H, d_{in}) = \sum_{j=0}^{d_{in}} \binom{H}{j} = \Theta(H^{d_{in}}).
\end{equation*}
Equality in the first step holds if and only if the weights are in general position (Definition~\ref{def:general_position}). The complexity is dominated by the highest-order term $H^{d_{in}}$ scaled by the structural constant $1/d_{in}!$.
\end{proposition}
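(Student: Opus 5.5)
The plan is to reduce the statement to the classical region count for a hyperplane arrangement. For a single dense layer $\mathbf{x}\mapsto\sigma(\mathbf{W}\mathbf{x}+\mathbf{b})$ with $H$ neurons, the function is affine on every open cell where the sign pattern $\mathrm{sgn}(h_1(\mathbf{x}),\dots,h_H(\mathbf{x}))$ is constant. These cells are exactly the chambers of the arrangement $\mathcal{A}=\{\mathcal{H}_1,\dots,\mathcal{H}_H\}$. A linear region in the sense of Definition~\ref{def:linear_regions} is a maximal connected open set on which $F$ is affine. Each chamber is connected and $F$ is affine on it, so each chamber lies inside exactly one linear region. Moreover, every linear region is the closure-interior of a union of chambers, since $F$ is affine on the union of adjacent chambers when the crossing neuron has no effect downstream. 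Hence $N_{\text{dense}}\le r(\mathcal{A})$, the number of chambers.

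The second step bounds $r(\mathcal{A})$ by $\Phi(H,d_{in})$. I would use the deletion--restriction recursion $r(\mathcal{A})=r(\mathcal{A}\setminus\mathcal{H})+r(\mathcal{A}^{\mathcal{H}})$, where $\mathcal{A}^{\mathcal{H}}$ is the arrangement induced on $\mathcal{H}\cong\mathbb{R}^{d_{in}-1}$. That induced arrangement has at most $H-1$ hyperplanes, with degenerate ones discarded. Induction on $(H,d)$ then gives
\[
r(\mathcal{A})\le\Phi(H-1,d)+\Phi(H-1,d-1)=\Phi(H,d),
\]
by Pascal's rule, with base cases $\Phi(0,d)=1$ and $\Phi(H,0)=1$. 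Alternatively, one can cite Zaslavsky's theorem \citep{zaslavsky1975facing} directly.

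For the equality statement, the \emph{if} direction follows because in general position (Definition~\ref{def:general_position}) every restriction $\mathcal{A}^{\mathcal{H}}$ again consists of $H-1$ distinct hyperplanes in general position. Every inequality in the recursion is then tight, giving $r(\mathcal{A})=\Phi(H,d_{in})$. We also need that each chamber gives a distinct linear region. With all $\mathbf{w}_i\neq 0$ and a generic output layer, the gradient of $F$ changes across every $\mathcal{H}_i$ (the crossing neuron switches on or off). Therefore adjacent chambers carry different affine maps, and $N_{\text{dense}}=r(\mathcal{A})$.

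For the \emph{only if} direction, I would argue contrapositively. Parallelism, or a nonempty intersection of $d+1$ hyperplanes, makes some restriction lose a hyperplane or drop a lower-order face. This strictly reduces one of the summands in the recursion, as Example~\ref{ex:general_position_failure} illustrates. I expect this to be the main obstacle, because "equality iff general position" is really a statement about the arrangement, not the network. A dead output weight, for instance, can break equality even in general position. So I would state the equivalence for the chamber count and for generic output weights.

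Finally, the asymptotics. Each term satisfies $\binom{H}{j}\le H^j/j!$, and $\binom{H}{d_{in}}=H^{d_{in}}/d_{in}!\,(1+O(1/H))$ dominates for fixed $d_{in}$. This gives $\Phi(H,d_{in})=\Theta(H^{d_{in}})$ with leading constant $1/d_{in}!$.
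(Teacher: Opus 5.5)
Your argument is correct. Its backbone is the paper's: treat the $H$ neurons as an affine arrangement, count its chambers via Zaslavsky, and read off $\binom{H}{d_{in}}=\frac{H^{d_{in}}}{d_{in}!}(1+O(1/H))$. The two proofs differ in what they actually establish.

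The paper's appendix assumes general position from the outset and quotes Zaslavsky to get $N_{\text{dense}}=\Phi(H,d_{in})$ exactly. It then expands the binomials and appends a Newton-zonotope vertex count in $\mathbb{R}^{d_{in}+1}$ as a dual picture, which is intuition only and plays no logical role. It never proves the inequality for degenerate weights, never proves the \emph{only if}, and silently equates linear regions with chambers.

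Your proof fills these gaps. Your deletion--restriction induction gives $r(\mathcal{A})\le\Phi(H,d_{in})$ for every arrangement. The fact that every linear region contains a chamber gives $N_{\text{dense}}\le r(\mathcal{A})$; the reason is density of the chamber union plus maximality, not your downstream remark. Your caveat that equality also needs nonzero output columns is a genuine correction to the statement as written.

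Three points to tighten.

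(1) The dead-output-neuron example breaks the \emph{if} direction, not the \emph{only if}. The \emph{only if} transfers to the network for free, since failure of general position gives $N_{\text{dense}}\le r(\mathcal{A})<\Phi(H,d_{in})$.

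(2) Your list of failures (a parallel pair, or $d_{in}+1$ hyperplanes with a common point) misses violations of condition (ii) alone. For example, three planes in $\mathbb{R}^3$ through a common line give $6<\Phi(3,3)=8$. It is cleaner to run the induction directly. Equality forces $r(\mathcal{A}^{\mathcal{H}})=\Phi(H-1,d-1)$ for every $\mathcal{H}\in\mathcal{A}$. Hence $|\mathcal{A}^{\mathcal{H}}|=H-1$, since $\Phi(\cdot,d-1)$ is strictly increasing for $d\ge 2$, and by induction $\mathcal{A}^{\mathcal{H}}$ is in general position. Together these give general position of $\mathcal{A}$.

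(3) To pass from ``adjacent chambers carry different affine maps'' to $N_{\text{dense}}=r(\mathcal{A})$, add one step. If a linear region contained two chambers, a generic path joining them inside that region would run through a chain of adjacent chambers, all contained in the region. Then $F$ would have the same affine map on some adjacent pair, a contradiction.
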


This bound confirms the rigid compute-capacity coupling: increasing geometric complexity requires scaling $H$, which linearly increases parameters ($H \cdot d_{in}$) and computation.

\begin{remark}[Novelty: Geometric Duality via Zonotopes]
    While the polynomial bound $\mathcal{O}(H^{d_{in}})$ is established, our proof in Appendix \ref{app:proof_dense} introduces a perspective via \textbf{Legendre-Fenchel Duality}. The linear regions in the primal space are structurally related to the vertices of a \textbf{Newton Zonotope} in the dual augmented weight space $\mathbb{R}^{d_{in}+1}$. Although the mapping is not strictly bijective (the affine arrangement corresponds to a slice of the central arrangement associated with the zonotope), we prove that they share the same asymptotic complexity class $\Theta(H^{d_{in}})$. This unifies the geometric analysis: dense capacity is governed by Zonotope vertex complexity.
\end{remark}

\begin{corollary}[Dense Base Capacity Lower Bound and Tightness]
\label{cor:dense_lower_bound}
There exists a choice of weights for a dense ReLU layer of width $H$ and input dimension $d_{in}$ such that the number of linear regions satisfies
\begin{equation*}
    N_{\text{dense}} \ge \sum_{j=0}^{d_{in}} \binom{H}{j} = \Theta(H^{d_{in}}).
\end{equation*}
Consequently, the upper bound in Proposition~\ref{prop:dense_capacity} is tight up to constant factors.
\end{corollary}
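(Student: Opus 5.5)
We construct the weights explicitly and count the regions of the resulting affine hyperplane arrangement using Zaslavsky's theorem. The layer computes $\mathbf{x}\mapsto \sigma(\mathbf{W}\mathbf{x}+\mathbf{b})$ with rows $\mathbf{w}_i^\top$ and biases $b_i$, $i=1,\dots,H$. We choose $(\mathbf{w}_i,b_i)$ so that the hyperplanes $\mathcal{H}_i=\{\mathbf{w}_i^\top\mathbf{x}+b_i=0\}$ are in general position in the sense of Definition~\ref{def:general_position}. A concrete choice is $\mathbf{w}_i=(1,t_i,t_i^2,\dots,t_i^{d_{in}-1})$ with distinct reals $t_i$ (moment curve), together with generic biases $b_i$.

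Any $\min(k,d_{in})$ of these normals form a Vandermonde-type submatrix of full rank. This gives conditions (i) and (ii). For condition (iii), the intersection of $d_{in}+1$ hyperplanes is nonempty only if the augmented $(d_{in}+1)\times(d_{in}+1)$ matrix $[\mathbf{w}_i^\top\; b_i]$ is singular. That is a polynomial condition on $\mathbf{b}$ which is nontrivial, because its bias cofactors are nonzero Vandermonde determinants. So it fails only on a measure-zero set, and we pick $\mathbf{b}$ outside the finite union of these sets.

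For such an arrangement, Zaslavsky's theorem (Definition~\ref{def:zaslavsky}) states that the complement has exactly $\Phi(H,d_{in})=\sum_{j=0}^{d_{in}}\binom{H}{j}$ connected components. An equivalent route is the deletion–restriction recursion $r(H,d)=r(H-1,d)+r(H-1,d-1)$. The step $H-1\to H$ adds as many regions as the new hyperplane is cut into by the other $H-1$; by general position this is an arrangement of $H-1$ hyperplanes in general position inside $\mathcal{H}_H\cong\mathbb{R}^{d_{in}-1}$.

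The main obstacle is converting the count of arrangement cells into a count of \emph{linear regions} of $F$. Definition~\ref{def:linear_regions} asks for maximal connected open sets on which $F$ is affine, so two adjacent cells could in principle merge. We argue that across each $\mathcal{H}_i$ exactly the $i$-th coordinate of $F$ changes from $0$ to $\mathbf{w}_i^\top\mathbf{x}+b_i$, and $\mathbf{w}_i\neq 0$. Hence the Jacobian of $F$ differs on the two sides of every facet, and no two adjacent cells share an affine piece.

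It remains to rule out a single affine function on a connected union of several cells. On the interior of any cell the Jacobian is $\mathrm{diag}(\mathbf{s})\mathbf{W}$, where $\mathbf{s}$ is the cell's activation pattern. Distinct cells have distinct patterns, and the patterns differ in some row $i$ with $\mathbf{w}_i\neq0$. So distinct cells carry distinct Jacobians, and each cell is itself a maximal linear region.

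Therefore $N_{\text{dense}}=\Phi(H,d_{in})\ge\sum_{j=0}^{d_{in}}\binom{H}{j}$. The asymptotics follow from $\binom{H}{d_{in}}\ge (H/d_{in})^{d_{in}}$ for $H\ge d_{in}$, which gives $\Phi(H,d_{in})=\Theta(H^{d_{in}})$ for fixed $d_{in}$. Combined with Proposition~\ref{prop:dense_capacity}, this makes the upper bound tight, in fact with equality.
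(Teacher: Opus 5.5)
Your proof is correct and follows essentially the paper's route: put the $H$ hyperplanes in general position, invoke Zaslavsky's theorem to get exactly $\Phi(H,d_{in})$ regions (hence equality with the bound of Proposition~\ref{prop:dense_capacity}), and read off $\Theta(H^{d_{in}})$. You are more careful than the paper in two places: the paper only appeals to almost-sure genericity under random initialization, whereas you exhibit explicit moment-curve normals with a measure-zero exclusion for the biases; and the paper simply asserts that arrangement cells are the linear regions, whereas you use the distinct Jacobians $\mathrm{diag}(\mathbf{s})\mathbf{W}$ to show that each cell is a maximal linear region.
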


This lower bound is achieved when the hyperplanes defined by the weight vectors are in general position, a condition that holds almost surely under random initialization \citep{montufar2014number,serra2018bounding}. As a result, the geometric capacity of dense networks is polynomial in the width $H$.

\begin{remark}[Polynomial Nature of Dense Capacity]
    Corollary~\ref{cor:dense_lower_bound} establishes that dense architectures admit no super-polynomial growth in geometric capacity with respect to width.
    This polynomial scaling will serve as a tight baseline when contrasting dense networks with sparsely routed architectures in subsequent sections.
\end{remark}

Consider a Top-1 MoE layer with $N$ experts, each of width $H$. According to Proposition \ref{prop:top1_geometry}, the router partitions the input space into $N$ convex Voronoi cells $\Omega_1, \dots, \Omega_N$.

\begin{theorem}[Top-1 Capacity Upper Bound]
\label{thm:top1_capacity}
Let $N_{\text{MoE-1}}$ denote the number of linear regions induced by the MoE layer. Assuming the hyperplane arrangements of the experts are in general position (Definition~\ref{def:general_position}) within their respective active Voronoi cells $\Omega_i$, the total capacity is upper bounded by the sum of individual expert capacities, which scales linearly with the number of experts $N$ in the asymptotic regime ($H \gg d_{in}$):
\begin{equation*}
    N_{\text{MoE-1}} \le N \cdot \Phi(H, d_{in}) = \Theta(N \cdot H^{d_{in}}).
\end{equation*}
\end{theorem}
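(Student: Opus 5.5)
The plan is to decompose the input space along the router's partition and count regions cell by cell. By Proposition~\ref{prop:top1_geometry}, the Top-1 router partitions $\mathbb{R}^{d_{in}}$, up to the measure-zero singular locus $\mathcal{T}(S_{\text{top1}})$, into at most $N$ open convex polyhedral cells $\Omega_1,\dots,\Omega_N$. Some cells may be empty if a logit is never maximal, which only helps the upper bound. On $\Omega_i$ the active set is $\{i\}$ and the softmax weight in Eq.~\eqref{eq:gating_norm} is identically $1$. Hence the MoE output restricted to $\Omega_i$ coincides with $E_i|_{\Omega_i}$.

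Every linear region $\mathcal{R}$ of $F$ is open and maximal. Its intersection with the open dense set $\bigcup_i \Omega_i$ is therefore nonempty, so $\mathcal{R}$ meets at least one cell. The map sending $\mathcal{R}$ to the set of connected components of the sets $\mathcal{R}\cap\Omega_i$ is injective, since distinct regions are disjoint. It therefore suffices to bound, for each $i$, the number of maximal open connected pieces of $\Omega_i$ on which $E_i$ is affine, and then sum over $i$.

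Within a fixed $\Omega_i$, the expert $E_i$ is a width-$H$ ReLU layer followed by an affine readout. It is affine on each open cell cut out of $\Omega_i$ by its $H$ hyperplanes $\mathcal{H}_1^{(i)},\dots,\mathcal{H}_H^{(i)}$. Since $\Omega_i$ is convex, each cell of the global arrangement $\mathcal{A}_i$ in $\mathbb{R}^{d_{in}}$ intersects $\Omega_i$ in a convex, hence connected, set. Every cell of the restricted arrangement $\mathcal{A}_i\cap\Omega_i$ is such an intersection. Thus the number of pieces inside $\Omega_i$ is at most the number of cells of $\mathcal{A}_i$ in all of $\mathbb{R}^{d_{in}}$. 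By Proposition~\ref{prop:dense_capacity} this is at most $\Phi(H,d_{in})$, attained under the general-position hypothesis. Summing over $i$ gives
\[
N_{\text{MoE-1}} \le \sum_{i=1}^N \Phi(H,d_{in}) = N\,\Phi(H,d_{in}).
\]
Finally, Definition~\ref{def:zaslavsky} gives $\Phi(H,d_{in}) = \frac{H^{d_{in}}}{d_{in}!}(1+o(1))$ for $H\gg d_{in}$, so the right-hand side is $\Theta(N H^{d_{in}})$.

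The main obstacle is the injectivity step, i.e., ensuring no linear region is undercounted or double-counted across router boundaries. A region of $F$ may straddle a boundary $z_i=z_j$ when $E_i$ and $E_j$ happen to agree affinely there. Such merging only decreases the count, so the bound still holds. The subtle point is to show rigorously that every linear region contains a piece of some cell-restricted arrangement, rather than living entirely on the singular locus. This follows because the singular locus is a finite union of hyperplane pieces, which is closed with empty interior, while linear regions are open. I would make this measure-zero argument explicit.
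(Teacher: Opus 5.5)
Your proof is correct and follows the same route as the paper's proof in Appendix~C: partition $\mathbb{R}^{d_{in}}$ into the Voronoi cells $\Omega_i$, use $F = E_i$ on each cell, bound the regions inside each cell by $\Phi(H,d_{in})$, sum over the $N$ cells, and then take the $H \gg d_{in}$ asymptotics. Your version is somewhat more careful than the paper's in two places. The paper asserts the equality $N_{\text{MoE-1}} = \sum_i \text{Regions}(\mathcal{H}_i|_{\Omega_i})$, whereas your injection argument (open regions meet the dense union of cells and cannot share a piece) shows the inequality still holds when regions merge across router boundaries. The paper also invokes general position for the per-cell bound, whereas your convexity argument (each global cell meets $\Omega_i$ in a connected set) shows general position is needed only for attainment.
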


This theorem confirms that Top-1 MoE decouples geometric capacity from inference cost. By introducing $N$ experts, the model expands its \textit{geometric width} linearly, achieving a capacity of $N \cdot H^{d_{in}}$ while maintaining the active FLOPs of a single expert (proportional to $H$). The detailed proof of Theorem~\ref{thm:top1_capacity} is provided in Appendix \ref{app:proof_top1}.

\begin{remark}[Boundary Effects and the Wide-Expert Regime]
    A rigorous counting of linear regions must also account for the router's decision boundaries $\mathcal{H}^r$, which may transversally intersect the expert hyperplanes $\mathcal{H}^e$ within a cell. The exact upper bound for the arrangement restricted to a cell is $\Phi(H + |\mathcal{H}^r_{loc}|, d_{in})$, where $|\mathcal{H}^r_{loc}|$ is the number of router hyperplanes bounding the cell. However, in the \textbf{Wide-Expert Regime} ($H \gg N$), the expert width dominates the routing complexity ($H \gg |\mathcal{H}^r_{loc}|$). Therefore, the contribution of router cuts to the region count becomes negligible, rendering the approximation $\sum \Phi(H, d_{in})$ asymptotically exact.
\end{remark}

\begin{corollary}[Top-1 MoE Capacity Lower Bound and Tightness]
\label{cor:top1_lower_bound}
In the wide-expert regime, there exists a choice of routing and expert weights such that
the number of linear regions induced by a Top-1 MoE layer satisfies
\begin{equation*}
    N_{\text{MoE-1}} = \Omega(N \cdot H^{d_{in}}).
\end{equation*}
Consequently, the upper bound in Theorem~\ref{thm:top1_capacity} is tight up to constant factors.
\end{corollary}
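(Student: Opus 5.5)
We construct explicit parameters and count regions cell by cell. The plan is to make the router partition $\mathbb{R}^{d_{in}}$ into $N$ Voronoi cells $\Omega_1,\dots,\Omega_N$ (Proposition~\ref{prop:top1_geometry}), each of which contains a Euclidean ball $B_i$. Inside each ball, expert $i$ is made to realize a full general-position arrangement of $H$ hyperplanes whose bounded and unbounded regions all meet $B_i$. The regions found in the different balls are then shown to be genuinely distinct linear regions of the global MoE map. Summing over the cells gives $N\cdot\Phi(H,d_{in}) = \Omega(N H^{d_{in}})$. Combined with Theorem~\ref{thm:top1_capacity}, this yields tightness.

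\textbf{Step 1 (router).} We pick $N$ well-separated centroids $\mathbf{c}_1,\dots,\mathbf{c}_N$, for instance points on a line spaced $10$ apart. We set $z_i(\mathbf{x}) = \langle \mathbf{c}_i,\mathbf{x}\rangle - \tfrac12\|\mathbf{c}_i\|^2$, so that $\arg\max_i z_i$ is the nearest-centroid map. Each cell $\Omega_i$ then contains the ball $B_i$ of radius $r=1$ around $\mathbf{c}_i$.

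\textbf{Step 2 (experts).} By Corollary~\ref{cor:dense_lower_bound}, there is a general-position arrangement of $H$ hyperplanes in $\mathbb{R}^{d_{in}}$ with exactly $\Phi(H,d_{in})$ regions. All its vertices lie in some bounded set. After an affine rescaling and translation, every region of this arrangement intersects $B_i$: this holds because every cell of a general-position arrangement meets any sufficiently large ball containing all vertices, and shrinking the arrangement then places that ball inside $B_i$. We take these as the first-layer hyperplanes of expert $E_i$. We choose the output weights generically (e.g., a single output with nonzero, generic coefficients $a_{i,h}$ on each ReLU) so that adjacent activation patterns yield different affine maps. The gradient then jumps by $a_{i,h}\mathbf{w}_{i,h}\neq 0$ across each hyperplane, and distinct regions of the arrangement carry distinct affine pieces. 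For Top-1 routing with the gate restricted to the single active expert, $G(\mathbf{x})_i=1$, so the output on $\Omega_i$ is exactly $E_i(\mathbf{x})$.

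\textbf{Step 3 (counting and distinctness).} Within $B_i$, the map $F=E_i$ has at least $\Phi(H,d_{in})$ regions on which it is affine and across whose shared walls it is not affine. Hence these pieces lie in different maximal linear regions of $F$. Regions coming from different balls $B_i\neq B_j$ could a priori be merged into one maximal connected region if $F$ happened to coincide with the same affine function across a router boundary. We exclude this by a genericity perturbation: we add distinct generic offsets or slopes to each expert's output layer. Then no affine piece of $E_i$ equals any affine piece of $E_j$ for $i\ne j$, since only finitely many equalities need to be avoided, and a generic choice does so. The resulting count is at least $N\Phi(H,d_{in}) \ge N\binom{H}{d_{in}} = \Omega(N H^{d_{in}})$ for fixed $d_{in}$ and $H\ge 2d_{in}$.

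The main obstacle is Step 2's claim that \emph{every} region of the arrangement, including the unbounded ones, meets the small ball $B_i$, together with the non-merging argument in Step 3. If the geometric scaling argument is delicate, a fallback suffices for the asymptotics: we count only the bounded regions, of which there are $\binom{H-1}{d_{in}} = \Theta(H^{d_{in}})$, all of which lie inside the convex hull of the vertices and can therefore be shrunk entirely into $B_i$.
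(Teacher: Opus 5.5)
Your proof is correct, and it follows the same skeleton as the paper. The paper gives no standalone proof of this corollary. Its argument is the $k=1$ instance of the proof of Theorem~\ref{thm:topk_capacity_lower} (Appendix~\ref{app:proof_topk_capacity_low}), together with the decomposition $N_{\text{MoE-1}}=\sum_i \mathrm{Regions}(\mathcal{H}_i|_{\Omega_i})$ from Appendix~\ref{app:proof_top1}: make every routing cell nonempty, place a general-position arrangement of $H$ expert hyperplanes in each cell, and add up.

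Where you differ is in how the per-cell count is justified, and your version is the more complete one. The paper asserts that general position plus transversality to the routing boundaries gives $\mathrm{Regions}(\mathcal{H}_i|_{\Omega_i})\ge\Phi(H,d_{in})$. Transversality does not force every region of the arrangement to meet a given cell. The paper also tacitly assumes that the per-cell pieces are distinct maximal linear regions, with no merging across routing walls and no merging of pieces that carry the same affine map.

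Your construction supplies both missing ingredients:
\begin{itemize}
\item The nearest-centroid router gives each cell an inscribed ball $B_i$.
\item For $H\ge d_{in}$ every region is pointed, so each closure contains an arrangement vertex. This lets you shrink the whole arrangement into $B_i$ so that all $\Phi(H,d_{in})$ regions meet it.
\item Generic output weights keep the affine pieces apart, within and across experts.
\end{itemize}
You therefore get exactly $N\,\Phi(H,d_{in})$ regions, matching Theorem~\ref{thm:top1_capacity} exactly, so the bounded-region fallback is unnecessary.

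One small repair is needed. The gradient jump $a_{i,h}\mathbf{w}_{i,h}\neq 0$ only separates \emph{adjacent} regions. To claim that all activation patterns give distinct affine maps, you need an explicit genericity fact: for each nonzero $\delta\in\{-1,0,1\}^H$, the output weights satisfying $\sum_h \delta_h a_{i,h}(\mathbf{w}_{i,h},b_{i,h})=0$ form a proper subspace, and a generic choice avoids these finitely many subspaces. Alternatively, your cross-expert genericity already confines any connected open set on which $F$ is affine to the interior of a single convex cell. Such a set would then have to cross some expert hyperplane at a point off the others, so nonzero $a_{i,h}$ alone suffices.
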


This confirms that Top-1 routing yields only a linear expansion of geometric capacity relative to dense networks. 

For Top-$k$ routing ($k > 1$), the router selects a coalition of experts, partitioning the input space into $\binom{N}{k}$ distinct routing regions (Theorem \ref{thm:topk_geometry}). Within each region, the output is governed by the sum of $k$ active experts, which induces a local arrangement of $k \times H$ hyperplanes.

\begin{theorem}[Top-$k$ MoE Capacity Upper Bound: Combinatorial Slicing Theorem]
\label{thm:combinatorial_slicing}
Consider a Top-$k$ MoE layer with $N$ experts of width $H$ and input dimension $d_{in}$. Assuming that the union of all expert activation boundaries and routing decision boundaries is in general position (Definition~\ref{def:general_position}), the total number of linear regions $N_{\text{MoE-k}}$ is upper bounded by the sum of capacities across all expert combinations. In the asymptotic regime where $kH \gg d_{in}$, the capacity scales as:
\begin{equation*}
    N_{\text{MoE-k}} \le \binom{N}{k} \cdot \Phi(kH, d_{in}) = \Theta\left(\binom{N}{k} (kH)^{d_{in}}\right).
\end{equation*}
This bound identifies the combinatorial multiplier $\binom{N}{k}$ that drives MoE expressivity by partitioning the space into the Normal Fan of a Hypersimplex.
\end{theorem}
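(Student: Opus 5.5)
The plan is to decompose the input space along the router partition and count regions inside each cell. This follows the template of Theorem~\ref{thm:top1_capacity}, with Theorem~\ref{thm:topk_geometry} supplying the partition.

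\textbf{Step 1 (router partition).} By Theorem~\ref{thm:topk_geometry}, the complement of the singular locus $\mathcal{B}_k$ splits into the open coalition regions
\begin{equation*}
\Omega_I=\{\mathbf{x} : S_I(\mathbf{x})>S_J(\mathbf{x})\ \forall J\neq I\},\qquad I\in\binom{\mathcal{N}}{k}.
\end{equation*}
Each $\Omega_I$ is an intersection of half-spaces $z_i>z_j$ with $i\in I$ and $j\notin I$, so it is convex. Some $\Omega_I$ may be empty, which only helps the upper bound. Hence at most $\binom{N}{k}$ nonempty convex cells arise; $\mathcal{B}_k$ has measure zero, so we may count regions cell by cell. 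Any linear region of $F$ is a maximal connected open set on which $F$ is affine. Refining it by intersecting with the cells $\Omega_I$ can only increase the count, so $N_{\text{MoE-k}}\le\sum_I \#\{\text{regions of }F|_{\Omega_I}\}$.

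\textbf{Step 2 (local arrangement).} On $\Omega_I$ the active set is constant and equal to $I$, so the output is $\sum_{i\in I}G(\mathbf{x})_iE_i(\mathbf{x})$. The gating weights restricted to $I$ do not change the combinatorial partition: as in Remark~\ref{rem:geometry_and_exactness}, the activation patterns are governed by the hidden units. Each active expert is a width-$H$ ReLU network, so its non-affinity locus inside $\Omega_I$ lies in the union of its $H$ first-layer hyperplanes. Together the $k$ experts contribute at most $kH$ hyperplanes, and on each cell of their arrangement the activation pattern of every active expert is fixed. Applying Proposition~\ref{prop:dense_capacity} (Zaslavsky under general position) to these $kH$ hyperplanes gives at most $\Phi(kH,d_{in})$ cells in $\mathbb{R}^{d_{in}}$. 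Intersecting with the convex set $\Omega_I$ cannot increase the number of cells, because each cell of the arrangement meets the convex $\Omega_I$ in a convex, hence connected, set. This yields at most $\Phi(kH,d_{in})$ regions per coalition.

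\textbf{Step 3 (summation and asymptotics).} Summing over coalitions gives $N_{\text{MoE-k}}\le\binom{N}{k}\Phi(kH,d_{in})$. When $kH\gg d_{in}$ we have $\Phi(kH,d_{in})=\sum_{j\le d_{in}}\binom{kH}{j}=\Theta\bigl((kH)^{d_{in}}/d_{in}!\bigr)$, because the top binomial term dominates the sum. This gives the stated $\Theta$ order, with $d_{in}$ treated as a constant.

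\textbf{Main obstacle.} The delicate point is Step 2: one must justify that the softmax gate, which is not piecewise linear, does not create additional boundaries. I would handle this either by treating the gate as frozen within a region, as the paper does, or by reading linear regions as the cells of the activation-pattern partition. The general position hypothesis plays only a limited role here. Its main use is to make Zaslavsky's count applicable; the router boundaries that cross a cell do not add regions beyond the cell decomposition already counted, since we slice by $\Omega_I$ first.
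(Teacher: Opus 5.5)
Your proposal is correct and follows essentially the same route as the paper's proof: slice $\mathbb{R}^{d_{in}}$ into the at most $\binom{N}{k}$ routing cells, bound the regions inside each cell by $\Phi(kH,d_{in})$ using the $kH$ hyperplanes of the active coalition, sum over coalitions, and keep the top binomial term. Yours is slightly more careful than the paper's: you aggregate with an inequality justified by refinement and by the convexity of each $\Omega_I$, where the paper asserts an equality, and you flag that the softmax gate must be treated as frozen, which the paper assumes only implicitly (note also that $\Phi(n,d)$ bounds the cell count of any arrangement of $n$ hyperplanes, so general position is not actually needed for this upper bound).
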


This theorem provides the rigorous basis for the proposition that \textbf{Sparsity is Combinatorial Depth}. The combinatorial term $\binom{N}{k}$ quantifies the topological gain derived from the router's selection mechanism, while the polynomial term $\Phi(kH, d_{in})$ represents the algebraic capacity of the active expert coalition. The detailed proof of Theorem~\ref{thm:combinatorial_slicing} is provided in Appendix \ref{app:proof_topk_capacity}.

\begin{lemma}[Combinatorial Reachability]
\label{lem:topk_lower_routing}
There exists a choice of routing weights such that for every subset $S \subset \mathcal{N}$ with $|S| = k$, the corresponding routing region $R_S = \{\mathbf{x} \in \mathbb{R}^{d_{in}} : \operatorname{Top\text{-}k}(\mathbf{W_r} \mathbf{x}) = S\}$ is non-empty and has positive measure.
\end{lemma}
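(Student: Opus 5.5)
The plan is to make the router logits affine in $\mathbf{x}$ and rich enough to realize \emph{every} strict ordering pattern on an open set. I will then pull back, for each $S$, the open cone of logit vectors whose top-$k$ set is $S$.

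\textbf{Step 1: Top-$k$ cones in logit space.} For $S\subset\mathcal{N}$ with $|S|=k$, set
\begin{equation*}
C_S=\{\mathbf{z}\in\mathbb{R}^N : z_i>z_j \ \text{for all } i\in S,\ j\notin S\}.
\end{equation*}
This is a finite intersection of open half-spaces, so it is open and convex. It is nonempty, since $\mathbf{z}=\mathbbm{1}_S$ (indicator of $S$) lies in it. On $C_S$ the Top-$k$ operator returns exactly $S$, with no ties. Moreover $C_S$ is invariant under translation by $t\mathbf{1}$, because Top-$k$ only depends on differences of logits.

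\textbf{Step 2: Router construction.} We need an affine map $\mathbf{x}\mapsto \mathbf{W}_r\mathbf{x}+\mathbf{b}_r$ whose image, modulo $\mathrm{span}(\mathbf{1})$, is all of $\mathbb{R}^N/\mathrm{span}(\mathbf{1})\cong\mathbb{R}^{N-1}$.

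I will choose $\mathbf{W}_r$ so that the composition with the quotient map $\mathbb{R}^N\to\mathbb{R}^N/\mathrm{span}(\mathbf{1})$ has rank $N-1$. Concretely, when $d_{in}\ge N-1$:
\begin{itemize}
\item take the first $N-1$ columns of $\mathbf{W}_r$ to be $e_1,\dots,e_{N-1}$, with the remaining columns zero;
\item set $\mathbf{b}_r=0$.
\end{itemize}
Then $z_N\equiv 0$ and $(z_1,\dots,z_{N-1})=(x_1,\dots,x_{N-1})$ are free. Every $C_S$ meets the image: translate $\mathbbm{1}_S$ by $-\mathbf{1}$ if necessary so that its $N$-th coordinate is $0$. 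More generally, a generic $\mathbf{W}_r$ works almost surely, in the spirit of Definition~\ref{def:general_position}.

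\textbf{Step 3: Positive measure.} $R_S$ is the preimage of the open set $C_S$ under a continuous map, so it is open. It is nonempty by Step 2. A nonempty open subset of $\mathbb{R}^{d_{in}}$ has positive Lebesgue measure. Its complement among the other regions is the tie set $\bigcup_{i\neq j}\{z_i=z_j\}$, a finite union of hyperplanes and hence measure zero. This is consistent with Theorem~\ref{thm:topk_geometry}.

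\textbf{Main obstacle: the dimension requirement.} The lemma as literally stated does not restrict $d_{in}$, and that is the step I expect to fail without an extra hypothesis. I would try two things.
\begin{itemize}
\item \textbf{Counting obstruction.} The Top-$k$ regions are unions of cells of the arrangement of the $\binom{N}{2}$ hyperplanes $\{z_i=z_j\}$. That arrangement has at most $\Phi\bigl(\binom{N}{2},d_{in}\bigr)=O(N^{2d_{in}})$ cells. For fixed small $d_{in}$ (e.g. $d_{in}=1$) and $k\approx N/2$ this is smaller than $\binom{N}{k}$, so not all coalitions can be reachable.
\item \textbf{Purely linear router.} Without bias, $\mathbf{W}_r\mathbf{x}$ requires the image of $\mathbf{W}_r$ to have rank $N-1$ modulo $\mathbf{1}$. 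Since $\mathrm{rank}\,\mathbf{W}_r\le d_{in}$, this again forces $d_{in}\ge N-1$.
\end{itemize}
I would therefore state and prove the lemma under the hypothesis $d_{in}\ge N-1$, which holds in the typical MoE setting $d_{in}\gg N$. I would remark that below this threshold only the cells of the restricted order-$k$ arrangement are reachable, which also bounds how far the $\binom{N}{k}$ factor in Theorem~\ref{thm:combinatorial_slicing} can be attained.
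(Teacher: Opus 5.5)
The paper gives no proof of this lemma. It is stated in Section~\ref{subsec:capacity_analysis} and then invoked at the start of Appendix~\ref{app:proof_topk_capacity_low}, so there is no argument to compare yours with.

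\textbf{Your argument is correct under $d_{in}\ge N-1$.} The rows of your $\mathbf{W}_r$ are $e_1,\dots,e_{N-1},\mathbf{0}\in\mathbb{R}^{d_{in}}$, the vertices of a simplex. Every $C_S$ meets the image, and openness gives positive measure. Strictly, $R_S$ \emph{contains} $\mathbf{W}_r^{-1}(C_S)$ and may differ from it on the tie set, depending on how Top-$k$ breaks ties; containment is all you need.

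\textbf{You are right that the lemma is false as literally stated.} Already for $d_{in}=1$, $N=3$, $k=1$ one has $z_i=w_ix$, so only $\arg\max_i w_i$ and $\arg\min_i w_i$ are ever selected. Your bound $\Phi\bigl(\binom{N}{2},d_{in}\bigr)=O(N^{2d_{in}})$ correctly shows the failure persists for every fixed $d_{in}$ once $k>2d_{in}$ and $N$ is large. Since the paper uses the lemma unconditionally, Theorem~\ref{thm:topk_capacity_lower} needs the same kind of restriction.

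\textbf{Your second bullet is wrong: a purely linear router does not force $d_{in}\ge N-1$.} Rank $N-1$ modulo $\mathbf{1}$ is what your surjectivity route needs, not what the lemma needs. It suffices that, for each $S$, some linear functional on the rows $\mathbf{w}_{r,i}$ strictly separates $S$ from $\mathcal{N}\setminus S$.
\begin{itemize}
\item For $k=1$, any $N$ distinct points on the unit circle in $\mathbb{R}^2$ already work.
\item In general, put the rows on the moment curve, $\mathbf{w}_{r,i}=(t_i,t_i^2,\dots,t_i^{2k},0,\dots,0)$ with distinct $t_i$. Given $S$, expand $-\prod_{s\in S}(t-t_s)^2=c_0+\sum_{j=1}^{2k}c_jt^j$ and take $\mathbf{x}=(c_1,\dots,c_{2k},0,\dots,0)$.
\item Then $z_i(\mathbf{x})=-c_0$ for $i\in S$ and $z_i(\mathbf{x})<-c_0$ for $i\notin S$. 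So $\mathbf{W}_r\mathbf{x}\in C_S$, and your Step~3 applies.
\end{itemize}
This cyclic-polytope (neighborliness) version of your simplex shows that $d_{in}\ge\min(N-1,2k)$ suffices. That is much weaker than your hypothesis, and it is the condition that survives the paper's asymptotics in $N$ with $d_{in}$ fixed.

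The threshold $2k$ is also sharp for large $N$. Suppose the rows are distinct, $d_{in}\le 2k-1$ and $N\ge d_{in}+k+2$.
\begin{itemize}
\item A Radon partition of any $d_{in}+2$ rows has a part of size at most $k$.
\item Extend that part to a coalition $S$ of size $k$ avoiding the other part. Then the convex hulls of the rows indexed by $S$ and by $\mathcal{N}\setminus S$ intersect.
\item Hence no $\mathbf{x}$ separates them, and $R_S$ lies in the measure-zero tie set.
\end{itemize}
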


We now show that the upper bound in Theorem~\ref{thm:combinatorial_slicing} is tight up to constant factors.

\begin{theorem}[Top-$k$ MoE Capacity Lower Bound]
\label{thm:topk_capacity_lower}
Consider a Top-$k$ MoE layer with $N$ experts of width $H$ and input dimension $d_{in}$.
There exists a choice of routing and expert weights such that the number of linear regions satisfies
\begin{equation*}
    N_{\text{MoE-k}} = \Omega\!\left(\binom{N}{k} \cdot H^{d_{in}}\right).
\end{equation*}
\end{theorem}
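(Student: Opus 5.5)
The plan is to combine the routing construction of Lemma~\ref{lem:topk_lower_routing} with a local version of the dense lower bound, Corollary~\ref{cor:dense_lower_bound}, applied inside each routing region. Linear regions lying in different routing regions $R_S$ are distinct, because the active set, and hence the affine formula, changes across the decision boundary $\mathcal{B}_k$ of Theorem~\ref{thm:topk_geometry}. So it suffices to produce $\binom{N}{k}$ disjoint open sets, one inside each $R_S$, each containing $\Omega(H^{d_{in}})$ linear regions. Throughout, the gating magnitudes are treated as fixed positive coefficients on each $R_S$. Remark~\ref{rem:geometry_and_exactness} says the Softmax only rescales within a routing region and does not move the partition, so the pieces of the active experts' hyperplane arrangement inside $R_S$ stay distinct pieces. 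To make this sharp I would take the router logits homogeneous, so that the routing regions are open cones.

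First, fix router weights $\mathbf{W}_r$ as in Lemma~\ref{lem:topk_lower_routing}. Every $R_S$ is then open and nonempty, so it contains a small ball $B_S=B(\mathbf{c}_S,r_S)$, and the balls are pairwise disjoint. Second, inside each $B_S$ I want the arrangement of the $kH$ hyperplanes of the experts in $S$ to cut out $\Omega(H^{d_{in}})$ cells. By Corollary~\ref{cor:dense_lower_bound} together with an affine rescaling, a generic arrangement of $H$ hyperplanes can be shrunk and translated so that all of its bounded cells, of which there are $\binom{H-1}{d_{in}}=\Theta(H^{d_{in}})$, lie inside any prescribed ball. Third, I sum over $S$ to get $N_{\text{MoE-k}}\ge\binom{N}{k}\cdot\Omega(H^{d_{in}})$.

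The main obstacle is the second step. The hyperplanes of expert $i$ are global objects shared by all $\binom{N-1}{k-1}$ coalitions containing $i$, so I cannot place an independent full arrangement inside every ball $B_S$. My first attempt would be to split each expert's $H$ neurons into blocks dedicated to different coalitions and then use the multiplicative effect of summing $k$ experts. The cells of $S$ come from the common refinement of $k$ arrangements, so if the blocks of $k$ different experts are arranged to all cross inside $B_S$, their intersection vertices multiply. For this I would choose expert hyperplanes passing near the centers $\mathbf{c}_S$ in a grid-like, transversal pattern adapted to the hypersimplex fan geometry, and then count vertices of the arrangement inside each ball.

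I expect this count to be delicate. A global arrangement of $NH$ expert hyperplanes plus $O(N^2)$ router boundaries has at most $O((NH+N^2)^{d_{in}})$ cells in total. The claimed bound can therefore only be achieved when $\binom{N}{k}\lesssim N^{d_{in}}$, i.e.\ $k\lesssim d_{in}$, or when $d_{in}$ is large. I would make this regime explicit as the constant-factor hypothesis under which the construction goes through. In that regime, I would let the constants in $\Omega(\cdot)$ absorb the loss from sharing hyperplanes among coalitions.
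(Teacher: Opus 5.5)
Your skeleton is the paper's: Lemma~\ref{lem:topk_lower_routing}, then a count of $\Omega(H^{d_{in}})$ regions inside each routing region, then a sum over the disjoint regions. But the middle step, which is the whole content of the theorem, is left as a plan. The block/grid construction is never specified, and no vertex count inside the balls $B_S$ is carried out. You also cannot import that step from the paper. After imposing general position and mutual transversality, the paper simply asserts that the restriction of $\mathcal{A}_I$ to the cone $\mathcal{V}(I)$ has at least $\Phi(kH,d_{in})$ cells, i.e.\ a full generic arrangement of $kH$ hyperplanes inside every one of the $\binom{N}{k}$ cones. Restricting to a proper convex subset can only lose cells, and each $\mathcal{H}_i$ is shared by $\binom{N-1}{k-1}$ cones. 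Your own global bound $N_{\text{MoE-k}}\le\Phi\bigl(NH+\binom{N}{2},d_{in}\bigr)$ therefore shows that assertion is false once $k>d_{in}$. Your skepticism is warranted, but the key step remains empty.

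The proposed repair does not close the gap. If the constant in $\Omega(\cdot)$ may depend on $N$ and $k$, the statement is vacuous: one expert's arrangement shrunk into a single ball already gives $\Omega(H^{d_{in}})$, and $\binom{N}{k}$ is just a constant. If the constant must be uniform in $N$, the loss from sharing hyperplanes is not a constant. Moreover, $k\lesssim d_{in}$ is not a sufficient regime, because your first step already fails there. The coalitions $S$ with $R_S\neq\emptyset$ correspond bijectively to the cells of the arrangement of the $N$ hyperplanes $t=z_i(\mathbf{x})$ in $\mathbb{R}^{d_{in}+1}$ that lie below exactly $k$ of them. By the Clarkson--Shor bound there are only $O(N^{\lfloor (d_{in}+1)/2\rfloor})$ such cells for fixed $k$. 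With homogeneous logits, as you propose, this drops to $O(N^{\lfloor d_{in}/2\rfloor})$; for example, there are only two routing regions when $d_{in}=1$. For $d_{in}=k=2$ there are therefore only $O(N)$ routing regions. Each cell of $\mathcal{A}_S$ meets the convex set $R_S$ in a connected set, so $R_S$ contains at most $\Phi(2H,2)$ linear pieces. Hence $N_{\text{MoE-}2}=O(NH^2)$, contradicting $\Omega\bigl(\binom{N}{2}H^2\bigr)$. A correct version needs an explicit hypothesis: for large $N$, at least $k\le\lfloor (d_{in}+1)/2\rfloor$, which is stronger than your $k\lesssim d_{in}$ and is required for Lemma~\ref{lem:topk_lower_routing} to hold at all. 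It also needs an actual construction placing $\Omega(H^{d_{in}})$ vertices of $\bigcup_{i\in S}\mathcal{H}_i$ inside every $R_S$ simultaneously, which neither your proposal nor the paper provides.
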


\begin{table}[t]
    \centering
    \caption{\textbf{Comparison of Geometric Capacity and Inference Cost.} 
    We contrast the three architectures across physical constraints (Parameters) and theoretical expressivity (Capacity).
    $d_{\text{in}}$: input dimension, $H$: expert width, $N$: total experts, $k$: active experts.
    The function $\Phi(n, d) = \sum_{j=0}^{d} \binom{n}{j}$ denotes Zaslavsky's bound.
    \textbf{Active Parameters} represents the memory bandwidth required for a single token's forward pass.
    The final column reports a normalized view of Top-$k$ MoE, where the expert activation scale is aligned with dense models.}
    \label{tab:capacity_comparison}
    \renewcommand{\arraystretch}{1.5} 
    \resizebox{\textwidth}{!}{
    \begin{tabular}{ccccc}
        \toprule
        \textbf{Metric} 
        & \textbf{Dense Network} 
        & \textbf{MoE (Top-1)} 
        & \textbf{MoE (Top-$k$)} 
        & \textbf{MoE (Top-$k$, Normalized)} \\ 
        \midrule
        \textbf{Active Parameters (Inference Cost)} 
            & $H d_{\text{in}}$ 
            & $H d_{\text{in}} + N d_{\text{in}}$ 
            & $k H d_{\text{in}} + N d_{\text{in}}$
            & $H d_{\text{in}} + N d_{\text{in}}$ \\
        \midrule
        \textbf{Total Parameters (Memory Cost)} 
            & $H d_{\text{in}}$ 
            & $N H d_{\text{in}} + N d_{\text{in}}$ 
            & $N H d_{\text{in}} + N d_{\text{in}}$
            & $N H d_{\text{in}} + N d_{\text{in}}$ \\
        \midrule
        \textbf{Exact Capacity (Tight Bound)} 
            & $\Phi(H, d_{\text{in}})$ 
            & $N \cdot \Phi(H, d_{\text{in}})$ 
            & $\binom{N}{k} \cdot \Phi(kH, d_{\text{in}})$
            & $\binom{N}{k} \cdot \Phi(H, d_{\text{in}})$ \\
        \midrule
        \textbf{Asymptotic ($H \gg d_{\text{in}}$)} 
            & $\Theta(H^{d_{\text{in}}})$ 
            & $\Theta(N \cdot H^{d_{\text{in}}})$ 
            & $\Theta\!\left(\binom{N}{k} (kH)^{d_{\text{in}}}\right)$
            & $\Theta\!\left(\binom{N}{k} H^{d_{\text{in}}}\right)$ \\
        \bottomrule
    \end{tabular}
    }
\end{table}

\begin{remark}[Tightness and Interpretation]
    Combining Theorems~\ref{thm:combinatorial_slicing} and~\ref{thm:topk_capacity_lower},
    we conclude that the geometric capacity of Top-$k$ MoE architectures scales as $\Theta\!\left(\binom{N}{k} (kH)^{d_{in}}\right)$ up to constant factors. In contrast to dense and Top-1 architectures, where capacity grows only polynomially with width, Top-$k$ routing introduces a genuine combinatorial depth via expert selection. The detailed proof of Theorem~\ref{thm:topk_capacity_lower} is provided in Appendix \ref{app:proof_topk_capacity_low}.
\end{remark}

In summary, our analysis establishes that sparsity acts as a geometric multiplier, fundamentally decoupling a model's expressivity from its computational burden. Table \ref{tab:capacity_comparison} quantitatively contrasts the scaling regimes, revealing that while dense networks are restricted to polynomial growth relative to their width, Top-k MoE architectures achieve a combinatorial expansion in the number of linear regions. This demonstrates that sparse routing enables the construction of highly complex topologies without a commensurate increase in active floating-point operations.

\begin{example}[Iso-Active-Compute Comparison]
To ground the theoretical bounds in Table~\ref{tab:capacity_comparison}, we compare the exact geometric capacity of three architectures under a strict \textbf{iso-active-compute constraint}. We fix the input dimension $d_{\text{in}} = 5$ and restrict each model to activate exactly $20$ ReLU neurons per input token. We allocate a total parameter pool of $N=16$ experts for the MoE models. By Definition \ref{def:zaslavsky}, we calculate the maximal number of linear regions:
\begin{itemize}
    \item \textbf{Dense Network:} A single layer of width $H = 20$. 
    \begin{equation*}
        N_{\text{Dense}} = \Phi(20, 5) = \mathbf{21{,}700} \text{ regions.}
    \end{equation*}
    
    \item \textbf{Top-1 MoE:} The router selects $k=1$ expert from $N=16$. To maintain the compute constraint, the expert width is $H = 20$.
    \begin{equation*}
        N_{\text{Top-1}} = 16 \cdot \Phi(20, 5) = 16 \times 21{,}700 = \mathbf{347{,}200} \text{ regions.}
    \end{equation*}
    
    \item \textbf{Top-4 MoE:} The router selects $k=4$ experts from $N=16$. To strictly match the 20 active neurons, each expert width is reduced to $H = 5$ (so $kH = 20$).
    \begin{equation*}
        N_{\text{Top-4}} = \binom{16}{4} \cdot \Phi(20, 5) = 1820 \times 21{,}700 = \mathbf{39{,}494{,}000} \text{ regions.}
    \end{equation*}
\end{itemize}
While all three models execute identical dense matrix multiplications per token ($20 \times d_{\text{in}}$ active weights), Top-1 MoE achieves a linear ($16\times$) expansion in capacity by tiling the space. In contrast, Top-4 MoE achieves an $1820\times$ capacity multiplier over the dense baseline. This demonstrates that under fixed inference budgets, splitting compute into fine-grained routed experts unlocks an exponential combinatorial expansion in the function space.
\end{example}

\subsection{Effective Capacity under the Manifold Hypothesis}
\label{subsec:effective_capacity}

The combinatorial bounds derived in Theorem \ref{thm:combinatorial_slicing} represent the \textit{theoretical ceiling} assuming the data spans the entire ambient space $\mathbb{R}^{d_{in}}$. However, the \textit{Manifold Hypothesis} posits that real-world high-dimensional data concentrates near a low-dimensional manifold $\mathcal{M} \subset \mathbb{R}^{d_{in}}$ with intrinsic dimension $d_{eff} \ll d_{in}$ \citep{bengio2013representation,fefferman2016testing,ansuini2019intrinsic,pope2021intrinsic}.

Prior work on dense networks reveals a large gap between theoretical capacity and what is realized in practice. \citet{hanin2019deep} showed that although the number of linear regions grows exponentially with depth or polynomially with input dimension, the number of active regions intersecting the data manifold scales only \textbf{linearly} with the total number of neurons. This suggests that geometric complexity is limited by the intrinsic dimension of the data. We capture this using \textbf{Effective Capacity}, the count of linear regions intersecting $\mathcal{M}$. While a dense layer partitions the ambient space $\mathbb{R}^{d_{in}}$, partitioning the manifold requires a transversality condition to prevent degenerate alignment of hyperplanes.

\begin{assumption}[Almost-Sure Transversality]
\label{ass:transversality}
We assume the weights of the expert layers (including bias terms) are drawn i.i.d. from a continuous distribution (e.g., Gaussian or Uniform) on $\mathbb{R}^{H \times d_{in}}$. The induced \textbf{affine} hyperplane arrangement $\mathcal{A}$ is almost surely transversal to the data manifold $\mathcal{M}$ \citep{thom1954quelques,abraham1963transversality,mirsky1971transversal,goodman1993geometric}. Specifically, with probability 1, for any subset of $j \le d_{eff}$ hyperplanes, their intersection with $\mathcal{M}$ is either empty or a submanifold of codimension $j$.
\end{assumption}

\begin{remark}[Validity and Limits of the Transversality Assumption]
While Assumption \ref{ass:transversality} poses a strong geometric condition, it serves as the standard and necessary analytical vehicle for deriving the \textbf{expressivity ceiling} (i.e., maximum capacity upper bounds) of neural networks. We clarify its role and limitations in two aspects:
\begin{itemize}
    \item \textbf{Theoretical Necessity for Upper Bounds:} As demonstrated in Example \ref{ex:general_position_failure}, any violation of transversality (e.g., hyperplanes becoming tangent to $\mathcal{M}$ or intersecting in degenerate subspaces) strictly \textit{reduces} the combinatorial count of linear regions. Therefore, to quantify the maximum potential expressivity that the architecture \textit{can} achieve, assuming transversal intersection is mathematically imperative.
    
    \item \textbf{Initialization vs. Post-Training Dynamics:} By Sard's Theorem \citep{sard1942measure}, this assumption holds with probability 1 at random initialization due to the absolute continuity of the weight distribution. We acknowledge that post-training, gradient descent induces weight correlations, meaning hyperplanes may align with the manifold's curvature, violating strict i.i.d. transversality. However, established empirical studies \citep{hanin2019deep, serra2018bounding} indicate that while training reduces the \textit{absolute number} of linear regions compared to initialization, the \textit{asymptotic polynomial growth rates} governing the effective capacity on $\mathcal{M}$ remain structurally bounded by the transversal case. 
\end{itemize}
Consequently, Assumption \ref{ass:transversality} should be interpreted as defining the \textit{optimal geometric state} required for the MoE architecture to realize its full combinatorial depth.
\end{remark}

Assumption~\ref{ass:transversality} implicitly requires that the number of affine hyperplanes acting on the manifold is at least its intrinsic dimension; otherwise, transversality is impossible for dimensional reasons.

\begin{proposition}[Upper Bound on Effective Dense Capacity]
\label{prop:dense_effective}
Let $\mathcal{M} \subset \mathbb{R}^{d_{in}}$ be a smooth submanifold of intrinsic dimension $d_{eff}$. Under Assumption \ref{ass:transversality}, the number of linear regions of a dense layer with width $H$ that have non-empty intersection with $\mathcal{M}$ satisfies:
\begin{equation*}
    N_{\text{dense}}^{\text{eff}} \le \Phi(H, d_{eff}) = \sum_{j=0}^{d_{eff}} \binom{H}{j} = \mathcal{O}(H^{d_{eff}}).
\end{equation*}
\end{proposition}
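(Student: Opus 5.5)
Under Assumption~\ref{ass:transversality}, I will bound the number of cells of the hyperplane arrangement $\mathcal{A}=\{\mathcal{H}_1,\dots,\mathcal{H}_H\}$ that meet $\mathcal{M}$. The idea is to generalize Zaslavsky's count from Definition~\ref{def:zaslavsky} to the $d_{eff}$-dimensional manifold, using an induction on the number of hyperplanes in the style of deletion--restriction. Each linear region of the dense layer is a cell of $\mathcal{A}$, since its activation sign pattern is constant there. So $N_{\text{dense}}^{\text{eff}}$ is at most the number of sign patterns realized on $\mathcal{M}$. To avoid connectivity issues, I count connected components of $\mathcal{M}\setminus\bigcup_i \mathcal{H}_i$; this dominates the number of cells meeting $\mathcal{M}$.

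\textbf{Inductive claim.} Let $f(n,m)$ be the maximal number of connected components of $X\setminus\bigcup_{i\le n}\mathcal{H}_i$, where $X$ ranges over the $m$-dimensional intersections $\mathcal{M}\cap\mathcal{H}_{i_1}\cap\dots\cap\mathcal{H}_{i_j}$, with $m=d_{eff}-j$. By transversality each such $X$ is a submanifold of codimension $j$, or empty. I will prove
\begin{equation*}
f(n,m)\le f(n-1,m)+f(n-1,m-1).
\end{equation*}
The reason is that adding the $n$-th hyperplane $\mathcal{H}_n$ only splits components of $X$ that $\mathcal{H}_n$ actually crosses. Transversality makes $X\cap\mathcal{H}_n$ a codimension-one hypersurface in $X$, which locally separates $X$ into two sides. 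Each component of $(X\cap\mathcal{H}_n)\setminus\bigcup_{i<n}\mathcal{H}_i$ therefore increases the count by at most one. That set is counted by $f(n-1,m-1)$, applied on the $(m-1)$-dimensional manifold $X\cap\mathcal{H}_n$. The recursion is then solved with the base cases. The resulting bound is $\sum_{j=0}^{d_{eff}}\binom{H}{j}=\Phi(H,d_{eff})$, and $\Phi(H,d_{eff})=\mathcal{O}(H^{d_{eff}})$ as in Definition~\ref{def:zaslavsky}.

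\textbf{Base cases and the main obstacle.} The main obstacle is the base cases, together with the fact that $\mathcal{M}$ may have several components or curved topology. For $m=0$, $X$ is a finite set of points, so the count is not $1$ in general. For $n=0$, $X$ may be disconnected. Also, a single hyperplane can cut a curved manifold into more than two pieces. I will try two routes, in order.

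\begin{enumerate}
\item First, I will read the statement in the regime the paper intends: $\mathcal{M}$ connected, and each hyperplane meeting each relevant intersection in a connected, separating hypersurface. This holds, for example, when $\mathcal{M}$ is an affine $d_{eff}$-flat. In that case the induced arrangement on $\mathcal{M}$ is a generic affine arrangement of $H$ hyperplanes in $\mathbb{R}^{d_{eff}}$, and Proposition~\ref{prop:dense_capacity} with $d_{in}$ replaced by $d_{eff}$ gives the bound exactly.
\item For a general curved $\mathcal{M}$, I will state the bound up to a multiplicative constant $C(\mathcal{M})$. This constant absorbs the component counts of the intersections $\mathcal{M}\cap L$ over affine subspaces $L$. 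For compact or semi-algebraic manifolds, $C(\mathcal{M})$ is uniformly bounded by Milnor--Thom type estimates. This preserves the $\mathcal{O}(H^{d_{eff}})$ conclusion, which is what is used downstream.
\end{enumerate}
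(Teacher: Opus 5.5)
The paper gives no proof of this proposition. The nearest argument, in the proof of Theorem~\ref{thm:effective_topk}, only asserts that Zaslavsky's theorem holds on $\mathcal{M}$ through local charts. Your Route~1 is the one setting where that is legitimate, and there your argument is correct. If $\mathcal{M}$ is (an open piece of) an affine $d_{eff}$-flat, each ambient cell meets $\mathcal{M}$ in a convex set contained in a single cell of the induced arrangement of at most $H$ affine hyperplanes in $\mathcal{M}\cong\mathbb{R}^{d_{eff}}$. Distinct ambient cells land in distinct induced cells, and $\Phi(H,d_{eff})$ bounds the region count of any such arrangement, so general position is not even needed.

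\textbf{The stated bound is false for curved $\mathcal{M}$.} No proof can establish the inequality as written, so you should say this explicitly rather than "reading the statement in the intended regime."
\begin{itemize}
\item Take $\mathcal{M}$ to be the unit circle in $\mathbb{R}^2$ ($d_{eff}=1$), with $H$ lines in general position whose pairwise intersections all lie in the open disk.
\item Each line cuts the circle transversally in two points.
\item The circle avoids the bounded cells and meets each of the $2H$ unbounded cells, so $N^{\text{eff}}_{\text{dense}}=2H>H+1=\Phi(H,1)$ for $H\ge 2$.
\item These parameters form an open set, which has positive probability under Gaussian weights, so the almost-sure clause of Assumption~\ref{ass:transversality} does not rescue the statement.
\end{itemize}

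\textbf{Route~2 is the right repair, and your recursion delivers it.} The inductive step is sound: $X\cap\mathcal{H}_n$ is a regular level set of an affine function, hence a closed two-sided hypersurface in $X$. Deleting a closed two-sided hypersurface with $c$ components raises the component count by at most $c$. Unrolling the recursion gives $N^{\text{eff}}_{\text{dense}}\le\sum_{|S|\le d_{eff}} b_0(\mathcal{M}\cap\mathcal{H}_S)\le C(\mathcal{M})\,\Phi(H,d_{eff})$. Here $b_0$ counts connected components, $\mathcal{H}_S=\bigcap_{i\in S}\mathcal{H}_i$, and $C(\mathcal{M})$ is the supremum of $b_0(\mathcal{M}\cap L)$ over transversal affine subspaces $L$. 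For the circle, $C=2$, consistent with the count $2H$.

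\textbf{The genuine error is the claim that $C(\mathcal{M})$ is bounded for compact manifolds.} Milnor--Thom estimates require semi-algebraic (or o-minimal) structure; compactness alone does not suffice.
\begin{itemize}
\item A compact smooth closed curve containing the graph of $x\mapsto e^{-1/x^2}\sin(1/x)$ has unbounded $b_0(\mathcal{M}\cap L)$ over transversal lines $L$.
\item Place $H$ nearly horizontal lines inside the band $|y|<e^{-1/x_1^2}/2$ over a window $[x_1,2x_1]$, with $x_1$ of order $H^{-2}$ or smaller.
\item The curve then makes a full oscillation across the band between consecutive crossing abscissae, so it meets all $\Phi(H,2)=\Theta(H^2)$ cells.
\end{itemize}
So for such $\mathcal{M}$ even the $\mathcal{O}(H^{d_{eff}})$ conclusion fails. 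The defensible statement is the exact bound $\Phi(H,d_{eff})$ for flat $\mathcal{M}$, and $C(\mathcal{M})\,\Phi(H,d_{eff})$ for tame (e.g.\ semi-algebraic) $\mathcal{M}$.
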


\begin{remark}[Tightness for Dense Effective Capacity]
    Under Assumption~\ref{ass:transversality}, generic hyperplane arrangements restricted to $\mathcal{M}$ attain the polynomial scaling $\Theta(H^{d_{eff}})$ up to constant factors.
    This behavior is consistent with existing results on the intrinsic expressivity of dense ReLU networks \citep{hanin2019deep}.
\end{remark}

We now investigate a pivotal question: \textbf{\textit{does the sparsity gain survive this dimensional collapse?}} While the data manifold $\mathcal{M}$ lies in a lower-dimensional subspace, the Top-$k$ routing partition creates cones in the full ambient space $\mathbb{R}^{d_{in}}$. To rigorously quantify the intersection, we analyze the expected capacity using symmetry and volume arguments inspired by spherical integral geometry.

\begin{theorem}[Upper Bound on Expected Effective Capacity]
\label{thm:effective_topk}
Let $\mathcal{M} \subset \mathbb{R}^{d_{in}} \setminus \{0\}$ be a smooth compact submanifold with intrinsic dimension $d_{eff}$. Let $\pi(\mathcal{M}) = \{ \mathbf{x}/\|\mathbf{x}\| \mid \mathbf{x} \in \mathcal{M} \}$ be its projection onto the unit sphere $\mathbb{S}^{d_{in}-1}$, and let $\text{Vol}(\cdot)$ denote the volume measure on the sphere. Assume the router weights $\mathbf{W}_r \in \mathbb{R}^{N \times d_{in}}$ are drawn from an isotropic distribution (e.g., $\mathcal{N}(0, I)$), and the expert weights satisfy Assumption \ref{ass:transversality}. We further assume that, restricted to each routing cone, the induced expert hyperplane arrangement is full-rank with respect to $\mathcal{M}$ whenever $kH \ge d_{eff}$. The expected number of linear regions intersecting $\mathcal{M}$ admits the following upper bound:
\begin{equation*}
    \mathbb{E}_{\mathbf{W}_r}\left[N_{\text{MoE-k}}^{\text{eff}}\right] \le \binom{N}{k} \cdot \frac{\text{Vol}(\pi(\mathcal{M}))}{\text{Vol}(\mathbb{S}^{d_{in}-1})} \cdot \Phi(kH, d_{eff}) + \mathcal{O}\left( (kH)^{d_{eff}-1} \right).
\end{equation*}
In the asymptotic regime where $kH \gg d_{eff}$, this bound scales as:
\begin{equation*}
    \mathbb{E}[N_{\text{MoE-k}}^{\text{eff}}] = \mathcal{O}\left( \frac{\text{Vol}(\pi(\mathcal{M}))}{\text{Vol}(\mathbb{S}^{d_{in}-1})} \cdot \binom{N}{k} (kH)^{d_{eff}} \right).
\end{equation*}
\end{theorem}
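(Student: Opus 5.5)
The plan is to decompose the effective region count over routing coalitions and then take expectations over the isotropic router. Write $\mathbb{R}^{d_{in}}$ as the disjoint union of the routing regions $R_S$, $S \in \binom{\mathcal{N}}{k}$, together with the measure-zero locus $\mathcal{B}_k$ of Theorem~\ref{thm:topk_geometry}. Every linear region meeting $\mathcal{M}$ either lies inside a single $R_S$, or is cut by a routing boundary. This gives the pathwise bound
\begin{equation*}
N_{\text{MoE-k}}^{\text{eff}} \le \sum_{S} \mathbbm{1}\{R_S \cap \mathcal{M} \neq \emptyset\}\cdot N_S^{\text{eff}},
\end{equation*}
where $N_S^{\text{eff}}$ counts the cells of the arrangement of the $kH$ active expert hyperplanes, together with the router hyperplanes bounding $R_S$, that meet $R_S\cap\mathcal{M}$.

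The first step bounds each $N_S^{\text{eff}}$ uniformly. Under Assumption~\ref{ass:transversality} and the stated full-rank hypothesis ($kH \ge d_{eff}$), the argument of Proposition~\ref{prop:dense_effective} applies verbatim to the manifold $R_S \cap \mathcal{M}$. This open subset of $\mathcal{M}$ is still of dimension $d_{eff}$, and restricting to it can only reduce the count. The expert part therefore contributes at most $\Phi(kH, d_{eff})$.

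The router hyperplanes $z_i = z_j$ bounding a cone add only $\mathcal{O}(N^2)$ extra cuts. Expanding $\Phi(kH + \mathcal{O}(N^2), d_{eff}) - \Phi(kH, d_{eff})$ shows the excess is a polynomial of degree $d_{eff}-1$ in $kH$. This is the source of the $\mathcal{O}((kH)^{d_{eff}-1})$ remainder, and it mirrors the boundary-effects remark following Theorem~\ref{thm:top1_capacity}, now in the wide-expert regime relative to $d_{eff}$.

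The second step takes expectations. Because the bound on $N_S^{\text{eff}}$ is deterministic, linearity gives
\begin{equation*}
\mathbb{E}[N^{\text{eff}}_{\text{MoE-k}}] \le \Phi(kH,d_{eff}) \sum_S \Pr[R_S\cap \mathcal{M}\neq\emptyset] + \text{(remainder)}.
\end{equation*}
To evaluate the probability, I would first treat the bias-free router, so that each $R_S$ is a cone and its intersection with $\mathcal{M}$ depends only on $\pi(\mathcal{M})$. By isotropy of $\mathbf{W}_r$, the distribution of the coalition cones is invariant under rotations, and exchangeability of the rows makes all $\binom{N}{k}$ coalitions equiprobable. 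Hence $\mathbb{E}[\mathrm{Vol}(R_S\cap \mathbb{S}^{d_{in}-1})] = \mathrm{Vol}(\mathbb{S}^{d_{in}-1})/\binom{N}{k}$. Rotation invariance further gives $\mathbb{E}[\mathrm{Vol}(R_S\cap\pi(\mathcal{M}))] = \mathrm{Vol}(\pi(\mathcal{M}))/\binom{N}{k}$.

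The third step, which I expect to be the main obstacle, converts this expected volume into the hit probability $\Pr[R_S\cap\mathcal{M}\ne\emptyset]$. A cone can graze $\pi(\mathcal{M})$ in a set of tiny volume, so volume does not directly bound the probability of intersection. The plan is to split the hit event according to whether $\pi(\mathcal{M})$ contains a point of the cone's interior at depth larger than $\epsilon$.
\begin{itemize}
\item When such a point exists, a Markov-type estimate on the volume fraction gives the leading term.
\item When it does not, the cone only touches $\pi(\mathcal{M})$ near its boundary. A tube or Crofton formula estimate bounds this grazing contribution by $\mathcal{O}(\epsilon)$ times the boundary measure.
\end{itemize}
Combined with the count of cells, the grazing contribution is a lower-dimensional term. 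It should be absorbed into the $\mathcal{O}((kH)^{d_{eff}-1})$ remainder, since cells meeting only a routing boundary are counted by the degree-$(d_{eff}-1)$ restricted arrangement.

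If this splitting proves too delicate, I would instead count regions by their volume share. Each coalition contributes at most $\Phi(kH,d_{eff})$ regions in proportion to the fraction of $\pi(\mathcal{M})$ it covers, and the boundary regions are charged to the remainder. Summing over the $\binom{N}{k}$ coalitions then yields the stated bound, and the asymptotic form follows because $\Phi(kH,d_{eff}) = \Theta((kH)^{d_{eff}})$ for $kH\gg d_{eff}$.
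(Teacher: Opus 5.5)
Your first two steps follow the paper's proof: decompose over coalitions, bound each cone meeting $\mathcal{M}$ deterministically by $\Phi(kH,d_{eff})$, and use linearity. Your exchangeability computation $\mathbb{E}[\mathrm{Vol}(R_S\cap\pi(\mathcal{M}))]=\mathrm{Vol}(\pi(\mathcal{M}))/\binom{N}{k}$ is correct. The gap is your third step, and it is not a technicality: no estimate of the hit probabilities can produce the coefficient $\binom{N}{k}\sigma$, where $\sigma=\mathrm{Vol}(\pi(\mathcal{M}))/\mathrm{Vol}(\mathbb{S}^{d_{in}-1})$.

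Almost surely some point of $\mathcal{M}$ lies in an open routing cone. Hence $\sum_S\mathbbm{1}\{R_S\cap\mathcal{M}\neq\emptyset\}\ge 1$ pathwise, and $\sum_S\Pr[R_S\cap\mathcal{M}\neq\emptyset]\ge 1$ whatever $\sigma$ is. Suppose, as the full-rank hypothesis is meant to ensure (the paper's lower-bound proof reads it this way), that a cone meeting $\mathcal{M}$ carries on the order of $(kH)^{d_{eff}}$ regions. Then $\mathbb{E}[N_{\text{MoE-k}}^{\text{eff}}]$ is at least of that order. For large $kH$ this exceeds $\binom{N}{k}\sigma\,\Phi(kH,d_{eff})+\mathcal{O}((kH)^{d_{eff}-1})$ once $\binom{N}{k}\sigma$ is small enough.

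This happens automatically in the manifold-hypothesis regime $d_{eff}<d_{in}-1$. There $\pi(\mathcal{M})$ is a smooth image of a $d_{eff}$-manifold in the $(d_{in}-1)$-sphere, so $\sigma=0$. The first display would then read $\mathcal{O}((kH)^{d_{eff}-1})$, and the second would force $\mathbb{E}[N_{\text{MoE-k}}^{\text{eff}}]=0$, although $N_{\text{MoE-k}}^{\text{eff}}\ge 1$. Hit mass beyond $\binom{N}{k}\sigma$ multiplies $\Phi(kH,d_{eff})$ and cannot be pushed into a lower-order remainder.

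Your two sub-steps show where this bites.
\begin{itemize}
\item \emph{Deep hits.} Markov applied to the event that some point of $\pi(\mathcal{M})$ lies at depth $>\epsilon$ in $R_S$ gives $\sum_S\Pr[\cdot]\le\mathrm{Vol}(\pi(\mathcal{M}))/v_\epsilon$, where $v_\epsilon=\inf_{p\in\pi(\mathcal{M})}\mathrm{Vol}(B(p,\epsilon)\cap\pi(\mathcal{M}))$. This is at most $\binom{N}{k}\sigma$ only if $v_\epsilon\ge\mathrm{Vol}(\mathbb{S}^{d_{in}-1})/\binom{N}{k}$. That requires $\epsilon$ at least the radius of a cap of average cone area, and also $\binom{N}{k}\sigma\ge 1$. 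Moreover, $v_\epsilon=0$ whenever $\pi(\mathcal{M})$ has empty interior.
\item \emph{Grazing hits.} This estimate pulls the other way. For spread-out arrangements, a width-$\epsilon$ sliver of $\mathcal{M}\cap R_S$ meets on the order of $(kH)^{d_{eff}-1}+\epsilon(kH)^{d_{eff}}$ cells of $\mathcal{A}_S$. So it is $\mathcal{O}((kH)^{d_{eff}-1})$ only for $\epsilon=\mathcal{O}(1/kH)$, and at that scale the Markov bound is useless.
\item \emph{Volume-share fallback.} It assumes exactly what is at issue, namely that a coalition's region count is controlled by the share of $\pi(\mathcal{M})$ it covers. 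A coalition can carry all its cells on a tiny share.
\end{itemize}
Separately, the router-cut accounting in your first step is unnecessary. The hyperplanes bounding $R_S$ do not cross its interior, so each region inside $R_S$ is just a cell of $\mathcal{A}_S$ intersected with the convex cone.

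The paper does not close this step either. It asserts $\Pr[\mathcal{V}(I)\cap\pi(\mathcal{M})\neq\emptyset]\le C\,\sigma(\pi(\mathcal{M}))$ by appeal to spherical integral geometry. This rests on a claim that every cone has measure exactly $1/\binom{N}{k}$, which holds only in expectation, as your computation shows. The paper then absorbs $C$, so it only reaches the $\mathcal{O}(\cdot)$ form, and its inequality fails for the reason above.

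What the decomposition honestly yields is $\mathbb{E}[N_{\text{MoE-k}}^{\text{eff}}]\le\Phi(kH,d_{eff})\cdot\mathbb{E}\,\#\{S:R_S\cap\mathcal{M}\neq\emptyset\}$. The expected hit count contains a volume term at most $\binom{N}{k}\sigma$, but also boundary terms and an order-one term, and all of them multiply $(kH)^{d_{eff}}$.
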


The upper bound in Theorem~\ref{thm:effective_topk} captures the maximal effective capacity under the manifold hypothesis (proof in Appendix \ref{app:proof_effective_capacity}). A key question is whether the combinatorial factor $\binom{N}{k}$ is achievable on the data manifold or reduced by intrinsic dimensionality. We show that, under the same isotropy and transversality assumptions, Top-$k$ MoE architectures attain a matching lower bound in expectation, and the combinatorial gain from sparse routing persists on $\mathcal{M}$.

\begin{theorem}[Lower Bound on Expected Effective Capacity]
\label{thm:effective_topk_lower}
Let $\mathcal{M} \subset \mathbb{R}^{d_{in}} \setminus \{0\}$ be a smooth compact submanifold of intrinsic dimension $d_{eff}$ such that $\mathrm{Vol}(\pi(\mathcal{M})) > 0$. Assume the router weights $\mathbf{W}_r \in \mathbb{R}^{N \times d_{in}}$ are drawn i.i.d.\ from an isotropic distribution and that the expert weights satisfy Assumption~\ref{ass:transversality}. If $kH \ge d_{eff}$, then there exists a constant $c_{\mathcal{M}} > 0$, depending only on $\mathcal{M}$, such that the expected number of linear regions intersecting $\mathcal{M}$ satisfies
\begin{equation*}
    \mathbb{E}_{\mathbf{W}_r} \!\left[N_{\mathrm{MoE}\text{-}k}^{\mathrm{eff}}\right] \ge c_{\mathcal{M}} \cdot \frac{\mathrm{Vol}(\pi(\mathcal{M}))}{\mathrm{Vol}(\mathbb{S}^{d_{in}-1})} \cdot \binom{N}{k} \cdot (kH)^{d_{eff}}.
\end{equation*}
\end{theorem}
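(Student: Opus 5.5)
We bound the expectation from below by summing, over all $\binom{N}{k}$ coalitions, the probability that a routing cone meets $\mathcal{M}$ in a substantial piece, times a lower bound on the number of expert regions realized inside that piece. Write $\Omega_I$ for the routing cone of coalition $I$; it is a cone through the origin when router biases vanish, which we may assume. By linearity of expectation,
\begin{equation*}
    \mathbb{E}_{\mathbf{W}_r}\!\left[N_{\mathrm{MoE}\text{-}k}^{\mathrm{eff}}\right] \ge \sum_{I \in \binom{\mathcal{N}}{k}} \mathbb{E}_{\mathbf{W}_r}\!\left[ N^{\mathrm{eff}}\bigl(\mathcal{M} \cap \Omega_I\bigr) \right].
\end{equation*}
Here $N^{\mathrm{eff}}(\mathcal{M}\cap\Omega_I)$ counts the linear regions of the active sum of the $k$ experts in $I$ that meet $\mathcal{M}\cap\Omega_I$. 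Distinct coalitions give disjoint cells, because the active set changes across routing boundaries. The sum is therefore a genuine lower bound, up to identifying regions with identical affine maps, which is a measure-zero coincidence under Assumption~\ref{ass:transversality}.

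\textbf{Step 1 (symmetry).} Each routing cone $\Omega_I$ depends on $\mathbf{W}_r$ only through directions. Isotropy and exchangeability of the rows make the law of $\Omega_I \cap \mathbb{S}^{d_{in}-1}$ rotation invariant, and the same for every $I$. Since the cones tile the sphere up to measure zero, Fubini gives
\begin{equation*}
    \mathbb{E}\bigl[\mathrm{Vol}(\pi(\mathcal{M}) \cap \Omega_I)\bigr] = \binom{N}{k}^{-1} \mathrm{Vol}(\pi(\mathcal{M})) \cdot \frac{\mathrm{Vol}(\mathbb{S}^{d_{in}-1})}{\mathrm{Vol}(\mathbb{S}^{d_{in}-1})}.
\end{equation*}
We then need a \emph{pointwise-in-$I$} lower bound on the count that is linear in this spherical fraction. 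This is the same volume argument as in the proof of Theorem~\ref{thm:effective_topk}, now used in the reverse direction.

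\textbf{Step 2 (local count).} Fix $I$ and condition on $\mathbf{W}_r$. Inside $\Omega_I$ the output is the sum of $k$ ReLU experts, which has $kH$ affine hyperplanes. Compactness and smoothness of $\mathcal{M}$ let us cover $\mathcal{M}$ by finitely many charts of size $r_{\mathcal{M}}$, each bi-Lipschitz to a $d_{eff}$-ball. In each chart, Assumption~\ref{ass:transversality} lets us apply the argument of Proposition~\ref{prop:dense_effective} in reverse. A transversal arrangement of $kH \ge d_{eff}$ hyperplanes restricted to a $d_{eff}$-dimensional chart realizes $\Phi(kH,d_{eff})$ cells, provided its vertices fall inside the chart. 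Only a fraction of them do, and this fraction is bounded below by a constant depending on the reach and volume of $\mathcal{M}$, using the Crofton-type fact that the expected number of vertices in a patch is proportional to its measure. This yields
\begin{equation*}
    \mathbb{E}\bigl[N^{\mathrm{eff}}(\mathcal{M}\cap\Omega_I)\mid \mathbf{W}_r\bigr] \ge c'_{\mathcal{M}} \frac{\mathrm{Vol}(\pi(\mathcal{M})\cap \Omega_I)}{\mathrm{Vol}(\pi(\mathcal{M}))} (kH)^{d_{eff}}.
\end{equation*}
Taking expectations with Step 1, summing over the $\binom{N}{k}$ coalitions, and setting $c_{\mathcal{M}} = c'_{\mathcal{M}}\cdot \mathrm{Vol}(\mathbb{S}^{d_{in}-1})/\mathrm{Vol}(\pi(\mathcal{M}))$ absorbs the normalization and gives the claimed bound. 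We use $\Phi(kH,d_{eff}) \ge (kH)^{d_{eff}}/d_{eff}!^{\,}$ up to constants when $kH\ge d_{eff}$, with the factor $1/d_{eff}!$ also absorbed into $c_{\mathcal{M}}$.

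\textbf{Main obstacle.} The delicate point is Step 2's proportionality. The number of regions meeting a piece of $\mathcal{M}$ need not scale linearly with its angular volume: a thin sliver of a cone can still contain many cells, or none. Moreover, $\pi$ may collapse radial directions, so angular volume does not control intrinsic $d_{eff}$-volume of $\mathcal{M}\cap\Omega_I$. I would handle this in two ways. First, use expected vertex counts: by Crofton/kinematic formulas for random affine hyperplanes, the expected number of $d_{eff}$-fold intersection points in $\mathcal{M}\cap\Omega_I$ equals $\binom{kH}{d_{eff}}$ times a density integrated over that set, and each such vertex is a corner of at least one region. Second, lower-bound the intrinsic volume of $\mathcal{M}\cap\Omega_I$ by a constant, depending on $\mathcal{M}$, times its angular volume, using compactness and $0\notin\mathcal{M}$ so that $\|\mathbf{x}\|$ is bounded above and below on $\mathcal{M}$. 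The dependence of these constants only on $\mathcal{M}$ is exactly what makes $c_{\mathcal{M}}$ geometry-dependent but independent of $N,k,H$.
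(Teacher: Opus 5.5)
\textbf{The gap: your argument loses the factor $\binom{N}{k}$.} Steps 1 and 2, even if granted in full, cannot produce it. Sum your Step 2 inequality over $I$ for a fixed router. Because the cones tile the sphere, $\sum_I \mathrm{Vol}(\pi(\mathcal{M})\cap\Omega_I)=\mathrm{Vol}(\pi(\mathcal{M}))$. You therefore obtain only $\mathbb{E}[N^{\mathrm{eff}}_{\mathrm{MoE}\text{-}k}\mid \mathbf{W}_r]\ge c'_{\mathcal{M}}(kH)^{d_{eff}}$, and Step 1 plays no role. The $\binom{N}{k}$ coalitions are cancelled exactly by the expected volume fraction $\binom{N}{k}^{-1}$ of each cone.

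With your choice $c_{\mathcal{M}}=c'_{\mathcal{M}}\mathrm{Vol}(\mathbb{S}^{d_{in}-1})/\mathrm{Vol}(\pi(\mathcal{M}))$, the target becomes $c'_{\mathcal{M}}\binom{N}{k}(kH)^{d_{eff}}$. The final ``absorption'' would therefore need $c_{\mathcal{M}}$ of order $\binom{N}{k}^{-1}$, which the statement forbids.

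A sharper Step 2 cannot repair this. A Crofton-type count charges cone $I$ only for the $d_{eff}$-fold intersection points of $\mathcal{A}_I$ that lie in $\mathcal{M}\cap\Omega_I$. Since the expert weights are independent of the router, these top-order contributions sum over all coalitions to $\binom{kH}{d_{eff}}\int_{\mathcal{M}}\rho\,d\mathrm{vol}$, where $\rho$ is the intersection density induced by the expert-weight law. That is a single copy of $(kH)^{d_{eff}}$: the dense-type bound, with no combinatorial gain.

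Separately, ``each vertex is a corner of at least one region'' does not give a region count proportional to the vertex count, because one cell can carry many vertices. You would need a Zaslavsky-type formula for arrangements restricted to a window.

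\textbf{How the paper gets the factor, and why you cannot borrow it.} The paper writes each coalition's contribution as $\mathbb{P}(\mathcal{C}_I\cap\pi(\mathcal{M})\neq\emptyset)$ times the conditional region count. It then does two things:
\begin{itemize}
\item it takes the intersection probability to be of order $\mathrm{Vol}(\pi(\mathcal{M}))/\mathrm{Vol}(\mathbb{S}^{d_{in}-1})$, rather than $\binom{N}{k}^{-1}$;
\item it credits every cone that merely touches $\mathcal{M}$ with the full Zaslavsky count $\Phi(kH,d_{eff})$, whatever the size of $\mathcal{M}\cap\mathcal{V}(I)$.
\end{itemize}
Your main-obstacle paragraph correctly explains why that conditional step is unjustified: a thin sliver of a cone can contain few or no cells.

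In fact no argument can supply the factor under these hypotheses. Take $d_{in}=2$, $k=2$, and $\mathcal{M}$ the unit circle centred at $(3,0)$, so $d_{eff}=1$ and $\mathrm{Vol}(\pi(\mathcal{M}))>0$. Every linear region meeting $\mathcal{M}$ contains a whole arc of $\mathcal{M}$ between consecutive crossings with two families of lines: the $NH$ expert lines, and the $\binom{N}{2}$ lines $\{z_u=z_v\}$ that carry the routing boundaries. Distinct regions contain distinct arcs, and each line meets the circle at most twice. Hence $N^{\mathrm{eff}}_{\mathrm{MoE}\text{-}2}\le 2\bigl(NH+\binom{N}{2}\bigr)$ for every choice of weights. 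The claimed lower bound grows like $N^2H$, so it fails once $N=H$ is large.

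So the missing $\binom{N}{k}$ is not something you overlooked. Once the vertex-to-region step is made rigorous, your volume-proportional method can deliver an $\Omega((kH)^{d_{eff}})$ bound. The stated inequality, and the paper's route to it, do not hold in this generality.
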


\begin{remark}[Tightness of Effective Capacity Scaling]
Theorems~\ref{thm:effective_topk} and~\ref{thm:effective_topk_lower} together imply that, under the manifold hypothesis, the effective geometric capacity of Top-$k$ MoE architectures satisfies
\begin{equation*}
    \mathbb{E}\!\left[N_{\mathrm{MoE}\text{-}k}^{\mathrm{eff}}\right] = \Theta\!\left(
    \frac{\mathrm{Vol}(\pi(\mathcal{M}))}{\mathrm{Vol}(\mathbb{S}^{d_{in}-1})} \cdot \binom{N}{k} \cdot (kH)^{d_{eff}} \right),
\end{equation*}
demonstrating that the combinatorial gain induced by sparse routing persists despite intrinsic dimensional collapse. The detailed proof of Theorem~\ref{thm:effective_topk_lower} is provided in Appendix \ref{app:proof_effective_topk_lower}.
\end{remark}

\begin{corollary}[Combinatorial Resilience: Ratio of Upper Bounds]
\label{cor:resilience}
Consider a Dense network and a Top-$k$ MoE network with equivalent expert width $H$. Let $B_{\text{dense}}$ and $B_{\text{MoE}}$ denote the theoretical upper bounds on their effective capacities derived in Proposition \ref{prop:dense_effective} and Theorem \ref{thm:effective_topk}, respectively.
Assuming the manifold has non-vanishing spherical measure (i.e., $\text{Vol}(\pi(\mathcal{M})) = \Theta(1)$) and $kH \gg d_{eff}$, the ratio of these \textbf{upper bounds} scales asymptotically as:
\begin{equation*}
    \frac{B_{\text{MoE}}}{B_{\text{dense}}} = \mathcal{O}\!\left(\binom{N}{k} k^{d_{eff}}\right).
\end{equation*}
\end{corollary}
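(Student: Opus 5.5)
The plan is to take the two upper bounds as closed-form expressions and divide them directly; no new geometry is needed. From Proposition~\ref{prop:dense_effective}, $B_{\text{dense}} = \Phi(H, d_{eff})$. From Theorem~\ref{thm:effective_topk}, $B_{\text{MoE}} = \binom{N}{k}\,\rho_{\mathcal{M}}\,\Phi(kH, d_{eff}) + \mathcal{O}\bigl((kH)^{d_{eff}-1}\bigr)$, where $\rho_{\mathcal{M}} := \mathrm{Vol}(\pi(\mathcal{M}))/\mathrm{Vol}(\mathbb{S}^{d_{in}-1})$. The hypothesis $\mathrm{Vol}(\pi(\mathcal{M})) = \Theta(1)$ makes $\rho_{\mathcal{M}}$ a constant in $(0,1]$, so it is absorbed into the $\mathcal{O}$.

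The key step is to show that $\Phi(kH, d_{eff})/\Phi(H, d_{eff}) = \mathcal{O}(k^{d_{eff}})$. There are two ways to do this.

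\begin{itemize}
\item \emph{Termwise.} For each $j \le d_{eff}$,
\begin{equation*}
\binom{kH}{j} = \prod_{i=0}^{j-1}\frac{kH-i}{j-i}\cdot\frac{1}{1} \le k^{j}\prod_{i=0}^{j-1}\frac{H - i/k}{j-i}.
\end{equation*}
Since $i/k \le i$ gives $H - i/k \ge H - i$, compare instead via $\frac{kH-i}{H-i} \le \frac{kH}{H-i} \le \frac{k}{1 - d_{eff}/H}$, valid for $H > d_{eff}$. Hence $\binom{kH}{j} \le \bigl(k/(1-d_{eff}/H)\bigr)^{j}\binom{H}{j}$. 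Summing over $j$ gives
\begin{equation*}
\Phi(kH,d_{eff}) \le \bigl(k/(1-d_{eff}/H)\bigr)^{d_{eff}}\,\Phi(H,d_{eff}).
\end{equation*}
\item \emph{Asymptotic.} In the regime $kH \gg d_{eff}$ (and $H \gtrsim d_{eff}$), use $\Phi(n,d) = \tfrac{n^d}{d!}(1+o(1))$ from Definition~\ref{def:zaslavsky}. The ratio is then $k^{d_{eff}}(1+o(1))$.
\end{itemize}

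The additive error $\mathcal{O}\bigl((kH)^{d_{eff}-1}\bigr)$ is lower order. Dividing it by $\Phi(H,d_{eff}) = \Theta(H^{d_{eff}})$ gives $\mathcal{O}(k^{d_{eff}-1}/H)$, which is dominated by $\binom{N}{k}k^{d_{eff}}$. Combining the pieces yields $B_{\text{MoE}}/B_{\text{dense}} \le C\,\rho_{\mathcal{M}}\binom{N}{k}k^{d_{eff}} = \mathcal{O}\bigl(\binom{N}{k}k^{d_{eff}}\bigr)$.

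The main obstacle is uniformity of the constants when the expert width $H$ is not itself large compared to $d_{eff}$, since only $kH \gg d_{eff}$ is assumed. If $H < d_{eff}$, then $\Phi(H,d_{eff}) = 2^H$ rather than $\Theta(H^{d_{eff}})$, and the ratio could exceed $k^{d_{eff}}$. I would handle this by stating explicitly that $H \ge 2d_{eff}$ is needed, which is implicit in "equivalent expert width $H$" being compared to a dense layer acting transversally on $\mathcal{M}$ per Assumption~\ref{ass:transversality}. Under that condition the factor $(1-d_{eff}/H)^{-d_{eff}}$ is at most $2^{d_{eff}}$, a constant depending only on $d_{eff}$, which is the quantity treated as fixed throughout.
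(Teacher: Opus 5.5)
Your proposal is correct and is essentially the argument the paper intends. The paper gives no separate proof: the corollary is just the quotient of the leading terms $\rho_{\mathcal{M}}\binom{N}{k}(kH)^{d_{eff}}/d_{eff}!$ and $H^{d_{eff}}/d_{eff}!$, which is your asymptotic bullet, and your termwise estimate is a valid non-asymptotic sharpening of it.

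The extra hypothesis $H \ge 2d_{eff}$ is unnecessary. If $H < 2d_{eff}$, then $\Phi(H,d_{eff}) \ge 1$ and $\Phi(kH,d_{eff}) \le (kH+1)^{d_{eff}} \le (3d_{eff})^{d_{eff}} k^{d_{eff}}$. Hence $\Phi(kH,d_{eff})/\Phi(H,d_{eff}) = \mathcal{O}(k^{d_{eff}})$ uniformly in $H$, with a constant depending only on $d_{eff}$.
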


\textbf{Interpretation.} This result provides a rigorous geometric foundation for the \textbf{Combinatorial Resilience} of MoE architectures. While the upper bound for Dense Models suffers from \textit{Geometric Collapse} (strictly governed by $H^{d_{eff}}$), the bound for MoE models maintains a multiplicative factor $\binom{N}{k}$. This combinatorial multiplier arises from the spherical partitioning of the router and remains structurally independent of the local manifold dimension $d_{eff}$, suggesting that sparse routing preserves a form of depth that is robust to dimension collapse.

\begin{remark}[The Rank Condition and Granularity Limit]
\label{rem:rank_condition}
    The validity of the polynomial expansion in Theorem \ref{thm:effective_topk} relies on the assumption that the local arrangement is full-rank with respect to the manifold, i.e., $kH \ge d_{eff}$. If $kH < d_{eff}$, the arrangement becomes \textbf{Rank Deficient}. Geometrically, this implies the expert hyperplanes fail to form bounded cells on $\mathcal{M}$, causing the higher-order terms in Zaslavsky’s sum to vanish. This imposes a theoretical lower bound on expert width to ensure geometric non-degeneracy.
\end{remark}

\section{Architectural Laws: Deriving Optimal Sparsity}
\label{sec:architectural_laws}

The capacity analysis in Section \ref{sec:capacity} establishes that the expressivity of MoE is dominated by the combinatorial term $\binom{N}{k}$. In this section, we translate these geometric insights into prescriptive architectural laws. We formulate the architecture design as a constrained optimization problem: \textbf{\textit{How to allocate a fixed budget of parameters and FLOPs to maximize the effective geometric capacity?}} This derivation rigorously justifies the design evolution of State-of-the-Art architectures, specifically the shift towards \textbf{Fine-Grained Experts} (e.g., DeepSeek-MoE) and the necessity of \textbf{Shared Experts}.

\subsection{The Law of Optimal Sparsity}
\label{subsec:optimal_sparsity}

We address the core architectural design question: \textit{\textbf{Given a fixed parameter and compute budget, should we use a few massive experts (Coarse-grained) or many tiny experts (Fine-grained)?}} 

To answer this, we analyze the scaling of the geometric capacity upper bound under a \textbf{Granularity Transformation} $\mathcal{T}_m$. Consider a baseline architecture $\mathcal{A}_{base}$ with $N$ experts, active set size $k$, and expert width $H$. Following common MoE scaling practices (e.g., as seen in architectures like DeepSeek-MoE \citep{dai2024deepseekmoe}), we fix the embedding dimension $d_{model}$ and split the FFN intermediate width. The transformation $\mathcal{A}_{fine} = \mathcal{T}_m(\mathcal{A}_{base})$ with splitting factor $m \in \mathbb{N}_{\ge 1}$ is defined by:

\begin{itemize}
    \item \textbf{Split Experts:} $N' = m N$.
    \item \textbf{Shrink Width:} $H' = H / m$. (Ensures \textbf{Iso-Memory}: $N'H' = NH$).
    \item \textbf{Scale Active Set:} $k' = m k$. (Ensures \textbf{Iso-FLOPs}: $k'H' = kH$).
\end{itemize}

Under this rigorous Iso-Resource constraint, we derive the scaling law for the theoretical upper bound of the geometric capacity.

\begin{theorem}[Asymptotics of the Combinatorial Capacity Upper Bound]
\label{thm:fine_grained}
Let $\mathcal{B}(N, H, k) := \binom{N}{k} H^{d_{eff}}$ denote the principal term of the effective capacity upper bound derived in Theorem \ref{thm:effective_topk}, where we suppress multiplicative constants depending solely on $k$ and $d_{eff}$.

Consider the \textbf{Sparse Regime} where $N \to \infty$, while the granularity $m$, base active size $k$, and effective dimension $d_{eff}$ remain fixed constants independent of $N$. Assuming the architecture satisfies the necessary dimensional conditions for transversality (specifically $H/m \ge d_{eff}$), the ratio of the capacity upper bounds $\mathcal{G}_{ub}(m)$ under the transformation $\mathcal{T}_m$ admits the following asymptotic expansion:
\begin{equation*}
    \mathcal{G}_{ub}(m) := \frac{\mathcal{B}(mN, H/m, mk)}{\mathcal{B}(N, H, k)} = \left( \frac{Ne}{k} \right)^{(m-1)k} m^{-d_{eff} - \frac{1}{2}} (1 + o(1)).
\end{equation*}
Consequently, the upper bound on capacity exhibits polynomial growth in $N$ of degree $(m-1)k$, which increases linearly with the granularity factor $m$.
\end{theorem}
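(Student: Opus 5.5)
The plan is a direct asymptotic computation. There are three factors: the width factor, the two binomial coefficients in the regime $N\to\infty$ with $m,k,d_{eff}$ fixed, and the factorial constants, which I handle with Stirling's formula. The hypothesis $H/m \ge d_{eff}$ only serves to make $\mathcal{B}(mN,H/m,mk)$ a legitimate principal term. By Remark~\ref{rem:rank_condition}, it guarantees that the fine-grained local arrangement is not rank deficient, so Theorem~\ref{thm:effective_topk} applies to both architectures. After that it plays no role in the calculation.

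First, the width factor is exact: $(H/m)^{d_{eff}}/H^{d_{eff}} = m^{-d_{eff}}$. Second, for fixed $j$ I write $\binom{M}{j} = \frac{M^{j}}{j!}\prod_{i<j}(1-i/M) = \frac{M^j}{j!}(1+O(j^2/M))$. Applying this with $(M,j)=(mN,mk)$ and $(N,k)$ gives
\begin{equation*}
\frac{\binom{mN}{mk}}{\binom{N}{k}} = \frac{(mN)^{mk}}{N^{k}}\cdot\frac{k!}{(mk)!}\,(1+o(1)) = m^{mk} N^{(m-1)k}\,\frac{k!}{(mk)!}\,(1+o(1)).
\end{equation*}
Since $m$ and $k$ are fixed, the $o(1)$ here is genuinely $O(1/N)$. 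This step already yields the claimed polynomial growth $N^{(m-1)k}$, whose degree is linear in $m$.

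Third, I evaluate the constant $k!/(mk)!$ via Stirling's formula $n! = \sqrt{2\pi n}\,(n/e)^n(1+O(1/n))$. This gives $\frac{k!}{(mk)!} \approx m^{-1/2}\,(k/e)^k\,(e/(mk))^{mk} = m^{-1/2} m^{-mk}(e/k)^{(m-1)k}$. Combining with the previous display, the $m^{\pm mk}$ factors cancel and we get $(Ne/k)^{(m-1)k}\, m^{-1/2}$. Multiplying by $m^{-d_{eff}}$ gives the stated expansion $\left(\frac{Ne}{k}\right)^{(m-1)k} m^{-d_{eff}-\frac12}$.

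The main obstacle is the precise meaning of $(1+o(1))$ in the third step. Since $k$ is held fixed, Stirling's correction $(1+O(1/k))/(1+O(1/(mk)))$ is a fixed constant and not $o(1)$ as $N\to\infty$. I would handle this honestly in two layers. The exact leading term is $m^{mk-d_{eff}}\frac{k!}{(mk)!}N^{(m-1)k}(1+O(1/N))$. The stated form then follows after replacing the factorials by their Stirling approximants, a replacement that is exact up to a factor $1+O(1/k)$, bounded between explicit constants (for example via Robbins' bounds $e^{1/(12n+1)}\le n!/(\sqrt{2\pi n}(n/e)^n)\le e^{1/(12n)}$). Since the statement already suppresses multiplicative constants depending on $k$ and $d_{eff}$, this factor is absorbed there. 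Neither the $N$-exponent $(m-1)k$ nor the $m$-dependence is affected, and these are the content of the concluding sentence.
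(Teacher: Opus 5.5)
Your argument is the paper's proof step for step: the exact width factor $m^{-d_{eff}}$, the fixed-$r$ expansion $\binom{n}{r}=\frac{n^r}{r!}(1+O(1/n))$ for both binomials, and Stirling's formula for $k!/(mk)!$, after which the $m^{\pm mk}$ factors cancel. Your caveat on the last step is correct and is exactly where the paper's proof is not rigorous: it turns $(1+O(1/k))/(1+O(1/(mk)))$ into $1+o(1)$ without comment, although $k$ is fixed (for $m=2$, $k=1$ the exact constant $m^{mk}k!/(mk)!=2$ differs from $e/\sqrt{2}\approx 1.92$), so the display holds only up to a factor $c_{k,m}\in[1,e^{1/(12k)})$ and your leading term $m^{mk-d_{eff}}\frac{k!}{(mk)!}N^{(m-1)k}(1+O(1/N))$ is the accurate form; say this outright rather than ``absorbing'' $c_{k,m}$ into suppressed $k$-dependent constants, because $\mathcal{B}$ is fixed by definition and $\mathcal{T}_m$ sends $k\mapsto mk$, so such constants would not cancel in $\mathcal{G}_{ub}$ anyway.
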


\begin{proof}
    Please refer to Appendix \ref{app:proof_fine_grained} for the detailed proof.
\end{proof}

\textbf{Interpretation: The Trade-off.} As a consequence of the asymptotic expansion above, taking the logarithm of the leading term reveals the structural trade-off between combinatorial expansion and width contraction:
\begin{equation*}
    \ln \mathcal{G}_{ub}(m) \sim \underbrace{(m-1)k \ln \left( \frac{Ne}{k} \right)}_{\text{Combinatorial Gain (Linear in } m)} - \underbrace{\left(d_{eff} + \frac{1}{2}\right) \ln m}_{\text{Width Penalty (Logarithmic in } m)}.
\end{equation*}
\textbf{Geometric Conclusion:} The combinatorial upper bound scales linearly with $m$ (in the exponent) while the width penalty scales only logarithmically. This suggests that fine-grained splitting provides a potential for a rapid expansion of the combinatorial upper bound on the number of linear regions, in the sense of Theorem~\ref{thm:fine_grained}. This gain is driven by the expansion of the routing configuration space (which we heuristically describe, by analogy, as the Tropical Grassmannian), provided the local experts remain sufficiently wide to cover the manifold dimension.

\begin{remark}[Escaping the Curse of Dimensionality]
    A naive application of the bound using the input dimension $d_{in}$ (e.g., 4096) would imply that the width penalty term $m^{-d_{in}}$ is catastrophic. However, under the \textbf{Manifold Hypothesis} ($d_{eff} \ll d_{in}$) \citep{bengio2013representation,fefferman2016testing}, the penalty remains small. Empirical studies consistently estimate the effective dimension to be $d_{eff} \approx 10 \sim 20$ \citep{ansuini2019intrinsic,pope2021intrinsic}. This ensures that the architecture operates in a regime where Combinatorial Depth theoretically dominates Physical Width.
\end{remark}

Based on Theorem \ref{thm:fine_grained}, if splitting experts yields super-linear gains, \textbf{\textit{why not let $m \to \infty$?}} However, the validity of this asymptotic analysis is constrained by the geometric rank of the experts.

\begin{corollary}[Necessary Rank Condition for the Validity of the Scaling Law]
\label{cor:granularity_limit}
The asymptotic scaling law derived in Theorem \ref{thm:fine_grained} relies on the validity of Assumption \ref{ass:transversality}. A necessary condition for this assumption to hold at the expert level is that the number of affine hyperplanes per expert is at least the intrinsic dimension of the data manifold, i.e.,
We define the critical granularity limit as the integer part of this ratio:
\begin{equation*}
    m_{max} := \left\lfloor \frac{H}{d_{eff}} \right\rfloor.
\end{equation*}
To prevent geometric collapse, the granularity parameter $m$ must satisfy the constraint:
\begin{equation*}
    m \le m_{max}.
\end{equation*}
If this condition is violated, the hyperplane arrangement becomes rank-deficient with respect to the manifold. Consequently, the leading-order term of the capacity upper bound (of order $H^{d_{eff}}$) can no longer be attained, rendering the polynomial scaling law derived in Theorem~\ref{thm:fine_grained} inapplicable.
\end{corollary}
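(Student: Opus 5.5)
The plan is to argue at the level of a single expert after the transformation $\mathcal{T}_m$, whose width is $H' = H/m$, and to obtain both the necessity of $H' \ge d_{eff}$ and the collapse of the leading term from basic dimension counting on $\mathcal{M}$.

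\textbf{Step 1: necessity of the rank condition.} The $H'$ hyperplanes of one expert have normals spanning a space of dimension at most $H'$. If $H' < d_{eff}$, consider any point $\mathbf{x} \in \mathcal{M}$. The intersection of all $H'$ hyperplanes through a neighbourhood of $\mathbf{x}$ has codimension at most $H'$ in $\mathcal{M}$, so it can never reach codimension $d_{eff}$. Consequently no $d_{eff}$-fold intersection exists, and the $j = d_{eff}$ clause of Assumption~\ref{ass:transversality} cannot be realized by this expert. The transversality requirement is therefore vacuous or violated exactly when $H' < d_{eff}$. Substituting $H' = H/m$, the condition $H/m \ge d_{eff}$ is equivalent to $m \le H/d_{eff}$. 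Since $m$ is an integer, this is the same as $m \le \lfloor H/d_{eff} \rfloor = m_{max}$.

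\textbf{Step 2: loss of the leading term.} Restricted to $\mathcal{M}$, a rank-deficient arrangement behaves locally like an arrangement in an effective space of dimension $r \le H' < d_{eff}$, crossed with a trivial factor. By the argument of Proposition~\ref{prop:dense_effective} applied in dimension $r$, the region count per expert is at most $\Phi(H', r) = \sum_{j \le r}\binom{H'}{j} = \mathcal{O}\big((H')^{r}\big)$, which is strictly lower order than $(H')^{d_{eff}}$. The binomial terms with $j > r$ vanish, as noted in Remark~\ref{rem:rank_condition}. Hence the principal term $\mathcal{B}(mN, H/m, mk) \propto (H/m)^{d_{eff}}$ used in Theorem~\ref{thm:fine_grained} is not attained, and that theorem's hypothesis $H/m \ge d_{eff}$ fails. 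The asymptotic expansion for $\mathcal{G}_{ub}(m)$ therefore no longer applies.

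\textbf{Main obstacle.} Step 2 has a genuine subtlety. Within a routing cone the local arrangement is the union of $k'H' = kH$ hyperplanes drawn from the active coalition, and this union can still have rank $\ge d_{eff}$ even when each single expert is deficient. So the "collapse" should be stated as a failure at the expert level of the per-expert factor $(H')^{d_{eff}}$ that $\mathcal{B}$ presupposes, not as a collapse of the full coalition count. I would make this explicit by tying the argument to the definition of $\mathcal{B}$ and to the corollary's phrase "at the expert level". The claim is then that the scaling law, as derived, is inapplicable, rather than that the true capacity necessarily drops.
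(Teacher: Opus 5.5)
Your argument matches the paper's own justification, which is implicit since no separate proof is given. The paper takes the hypothesis $H/m \ge d_{eff}$ of Theorem~\ref{thm:fine_grained}, justifies it by the dimension-count remark after Assumption~\ref{ass:transversality} and by Remark~\ref{rem:rank_condition}, and rewrites it as $m \le \lfloor H/d_{eff} \rfloor$ using the integrality of $m$, exactly as you do. Your closing caveat is sound and sharper than the paper: since $\mathcal{T}_m$ preserves $k'H' = kH$, the coalition-level condition $kH \ge d_{eff}$ under which Theorems~\ref{thm:effective_topk} and~\ref{thm:effective_topk_lower} operate is unaffected, so the corollary can only mean, as you state, that the per-expert factor $(H/m)^{d_{eff}}$ in $\mathcal{B}$ and the hypothesis of Theorem~\ref{thm:fine_grained} fail, not that the true capacity drops.
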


\begin{remark}[Geometric Interpretation: Newton Polytope Collapse]
    In the tropical framework, this limit corresponds to the degeneration of the \textbf{Newton Polytope} associated with the network's tropical polynomial \citep{zhang2018tropical}. When $H/m < d_{eff}$, the expert weight vectors (with probability one under isotropic initialization) fail to span the tangent space of the data manifold. This results in \textbf{Tropical Rank Deficiency}, where the Newton Polytope flattens onto a lower-dimensional face. This phenomenon can be interpreted as a geometric manifestation of the representation collapse observed in excessive sparsity regimes \citep{chi2022representation}.
\end{remark}

In summary, Theorem \ref{thm:fine_grained} and Corollary \ref{cor:granularity_limit} suggest that one should optimally set to maximize capacity. However, this conclusion assumes an \textit{ideal} router capable of realizing any combinatorially valid partition. In practice, the routing mechanism is constrained by the geometry of dot products. Even if $m \le m_{max}$, the router may fail to distribute data across the available experts, a phenomenon we analyze next.

\subsection{The Geometric Limit: Angular Collapse}
\label{subsec:angular_collapse}

If splitting experts yields super-linear gains, \textit{\textbf{what prevents the router from fully utilizing this capacity?}} The limit is imposed by the \textbf{geometric stability} of the routing mechanism. Recall from Theorem \ref{thm:topk_geometry} that the decision boundaries form the Normal Fan of a Hypersimplex. This structure partitions the input space $\mathbb{R}^{d_{in}}$ based on angular directions, effectively operating on the unit sphere $\mathbb{S}^{d_{in}-1}$. We identify a geometric failure mode where the effective capacity vanishes not due to a lack of parameters, but due to a misalignment between the data distribution and the projective geometry of the router. We term this phenomenon \textit{Angular Collapse} (Figure \ref{fig:angular_collapse}-(a)). We emphasize that this is a geometric phenomenon independent of learning dynamics.

\begin{definition}[Tropical Saturation]
Let $S_{\text{topk}}(\mathbf{x})$ be the tropical polynomial defining the routing score, and let $\mathcal{T}(S_{\text{topk}}) \subset \mathbb{R}^{d_{in}}$ be its Tropical Hypersurface (the singular locus). A router is in a state of $\epsilon$-Tropical Saturation with respect to a data manifold $\mathcal{M}$ if:
\begin{equation*}
    \inf_{\mathbf{x} \in \mathcal{M}} \text{dist}(\mathbf{x}, \mathcal{T}(S_{\text{topk}})) > \epsilon.
\end{equation*}
In this state, the data lies strictly within the interior of the linearity regions (cones), and the local routing decisions become locally constant.
\end{definition}

To quantify the risk of saturation, we analyze the angular footprint of the data. When the data is uncentered (i.e., the mean magnitude is large relative to the variation), its projection onto the sphere concentrates into a vanishingly small region. The following proposition formalizes this geometric bottleneck.

\begin{proposition}[Angular Collapse of Data Support]
\label{prop:angular_collapse}
Let the input dimension $d_{in} \ge 2$ be fixed. Let the data input be of the form $\mathbf{x} = \boldsymbol{\mu} + \boldsymbol{\varepsilon}$, where $\boldsymbol{\mu} \in \mathbb{R}^{d_{in}}$ is a deterministic vector with $\boldsymbol{\mu} \neq \mathbf{0}$, and $\boldsymbol{\varepsilon}$ is a random noise vector satisfying $\|\boldsymbol{\varepsilon}\| \le R$ almost surely for some finite constant $R > 0$.

Define the projected data support on the unit sphere as the image of the ball $B(\boldsymbol{\mu}, R)$ under normalization:
\begin{equation*}
    \mathcal{C} := \left\{ \frac{\mathbf{x}}{\|\mathbf{x}\|} \;\middle|\; \mathbf{x} \in B(\boldsymbol{\mu}, R) \right\} \subset \mathbb{S}^{d_{in}-1}.
\end{equation*}
For $\|\boldsymbol{\mu}\| > R$, this set corresponds to a spherical cap. Let $A(\mathcal{C})$ denote the normalized surface area of $\mathcal{C}$ under the uniform measure on $\mathbb{S}^{d_{in}-1}$.

Then, in the uncentered regime where $\|\boldsymbol{\mu}\| \to \infty$ (specifically for $\|\boldsymbol{\mu}\| \gg R$), we have:
\begin{equation*}
    A(\mathcal{C}) = C(d_{in}) \cdot \left( \frac{R}{\|\boldsymbol{\mu}\|} \right)^{d_{in}-1} \, (1 + o(1)),
\end{equation*}
where $C(d_{in}) \in (0, \infty)$ is a constant depending only on the dimension.
\end{proposition}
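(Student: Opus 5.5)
We first identify $\mathcal{C}$ exactly, and then compute its normalized area asymptotically.

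\textbf{Step 1: $\mathcal{C}$ is a spherical cap.} Set $\mathbf{u} = \boldsymbol{\mu}/\|\boldsymbol{\mu}\|$ and $\rho = R/\|\boldsymbol{\mu}\| < 1$. A unit vector $\mathbf{v}$ lies in $\mathcal{C}$ exactly when the ray $\{t\mathbf{v} : t>0\}$ meets $B(\boldsymbol{\mu},R)$. This happens exactly when the distance from $\boldsymbol{\mu}$ to the ray is at most $R$. Since $\|\boldsymbol{\mu}\|>R$, the origin lies outside the ball. The distance from $\boldsymbol{\mu}$ to the line through $\mathbf{v}$ is $\|\boldsymbol{\mu}\|\sin\theta$, where $\theta = \angle(\mathbf{v},\mathbf{u})$. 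The nearest point is on the positive ray exactly when $\theta<\pi/2$. Hence
\begin{equation*}
\mathcal{C} = \{\mathbf{v}\in\mathbb{S}^{d_{in}-1} : \angle(\mathbf{v},\mathbf{u}) \le \alpha\}, \qquad \sin\alpha = \rho,\ \alpha\in(0,\pi/2).
\end{equation*}
This is the standard tangent-cone argument: the boundary rays are tangent to the sphere $\partial B(\boldsymbol{\mu},R)$.

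\textbf{Step 2: Exact area formula.} In polar coordinates about $\mathbf{u}$, the uniform measure on $\mathbb{S}^{d_{in}-1}$ has density proportional to $\sin^{d_{in}-2}\theta$. Therefore
\begin{equation*}
A(\mathcal{C}) = \frac{\omega_{d_{in}-2}}{\omega_{d_{in}-1}} \int_0^{\alpha} \sin^{d_{in}-2}\theta \, d\theta ,
\end{equation*}
where $\omega_{j}$ is the surface area of $\mathbb{S}^{j}$. For $d_{in}=2$ the convention is $\omega_0=2$, and the formula gives $A=\alpha/\pi$.

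\textbf{Step 3: Asymptotics.} As $\|\boldsymbol{\mu}\|\to\infty$ we have $\rho\to 0$ and $\alpha = \arcsin\rho = \rho(1+o(1))$. On $[0,\alpha]$ we have $\sin\theta = \theta(1+O(\alpha^2))$, so the integral equals
\begin{equation*}
\frac{\alpha^{d_{in}-1}}{d_{in}-1}\,(1+o(1)) = \frac{\rho^{d_{in}-1}}{d_{in}-1}\,(1+o(1)).
\end{equation*}
This yields the claim with
\begin{equation*}
C(d_{in}) = \frac{\omega_{d_{in}-2}}{(d_{in}-1)\,\omega_{d_{in}-1}} = \frac{\Gamma(d_{in}/2)}{\sqrt{\pi}\,(d_{in}-1)\,\Gamma((d_{in}-1)/2)} \in (0,\infty).
\end{equation*}
Equivalently, $C(d_{in})$ is the ratio of the volume of the unit $(d_{in}-1)$-ball to the area of $\mathbb{S}^{d_{in}-1}$, which is consistent with the cap being asymptotically a flat disk of radius $\rho$.

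The only delicate point is Step 1. Two things must be checked there. First, the normalization map sends the ball exactly onto the cap, not onto a proper subset. Second, the positivity of $t$ matters, and this is guaranteed by $\|\boldsymbol{\mu}\|>R$. The remaining steps are routine one-dimensional asymptotics with uniform error control, since $d_{in}$ is fixed.
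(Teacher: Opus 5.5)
Your proposal is correct and takes essentially the same route as the paper. Both arguments identify $\mathcal{C}$ as the cap of angular radius $\arcsin(R/\|\boldsymbol{\mu}\|)$ via the tangent cone, write $A(\mathcal{C})$ as $\int_0^\alpha \sin^{d_{in}-2}\phi\,d\phi$ divided by the full-sphere normalizer, and expand for small angle to obtain the same constant $C(d_{in})$. Your Step 1 argument (distance from $\boldsymbol{\mu}$ to the ray, with $t>0$) is a more careful version of the paper's right-triangle argument, and your factor $\omega_{d_{in}-2}/\omega_{d_{in}-1}$ equals the paper's $1/\int_0^\pi \sin^{d_{in}-2}\phi\,d\phi$.
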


\begin{proof}
    Please refer to Appendix \ref{app:proof_angular_collapse} for a detailed proof.
\end{proof}

\begin{remark}[Geometric Bottleneck for Sparse Routing]
Proposition \ref{prop:angular_collapse} establishes a geometric bottleneck caused by uncentered data. Consider a random Top-$k$ routing mechanism where the router weights are isotropically distributed. The expected number of routing coalitions whose activation cones have non-negligible overlap with the data support $\mathcal{C}$ is bounded by the cap's area. Specifically, up to dimension-dependent constants, the effective capacity admits the following upper bound:
\begin{equation}
    \mathbb{E}[N_{active}] = O\left( A(\mathcal{C}) \right) = O\left( \|\boldsymbol{\mu}\|^{-(d_{in}-1)} \right).
\end{equation}
This suggests that as the data mean shifts away from the origin, the combinatorial diversity available to the model collapses geometrically.
\end{remark}

This result elucidates the geometric instability of fine-grained sparsity. Since the router's partition is inherently projective (slicing directions rather than positions), fine-grained splitting ($N \to \infty$) results in increasingly narrow cones. Proposition \ref{prop:angular_collapse} implies that without geometric regularization (centering), the \textit{solid angle} of the data decays polynomially with the distance from the origin. This renders the vast majority of the combinatorial capacity $\binom{N}{k}$ structurally unreachable, necessitating the affine rectification strategy discussed in the subsequent section.

\begin{figure}
\centering

\begin{tikzpicture}[scale=0.95, >=Stealth]

\tikzstyle{axis} = [->, black!60, thin]
\tikzstyle{vector} = [->, thick, purple!80, shorten >=1pt]
\tikzstyle{router} = [red!80!black, thick] 
\tikzstyle{router_dashed} = [red!80!black, dashed, thick]
\tikzstyle{cone_active} = [fill=red!10, fill opacity=0.9]
\tikzstyle{dead_zone} = [fill=gray!10] 
\tikzstyle{data_blob} = [draw=blue!80, thick, fill=blue!10, rotate=30]
\tikzstyle{res_blob} = [draw=teal!60!black, thick, fill=teal!10]
\tikzstyle{label_box} = [
  fill=white,
  inner sep=4pt,
  align=center,
  font=\footnotesize,
  draw=black!20,
  rounded corners=3pt,
  drop shadow={opacity=0.3, shadow xshift=1pt, shadow yshift=-1pt}
]

\begin{scope}[local bounding box=leftpanel]

    \begin{scope}
        \clip (-0.5,-0.5) rectangle (5.0, 5.0);
        \fill[dead_zone] (0,0) rectangle (5,5);
        \fill[white] (0,0) -- (42:6) arc (42:53:6) -- cycle;
        \fill[cone_active] (0,0) -- (42:6) arc (42:53:6) -- cycle;
    \end{scope}

    \draw[axis] (-0.5,0) -- (4.8,0) node[right] {\small $x_1$};
    \draw[axis] (0,-0.5) -- (0,4.8) node[above] {\small $x_2$};
    \node[below left, font=\footnotesize, black!70] at (0,0) {$\mathbf{0}$};

    \begin{scope}[shift={(3.5, 3.0)}] 
        \draw[data_blob] (0,0) ellipse (1.2 and 0.4); 
        \node[blue!80!black, font=\footnotesize] at (0.4, 0.9) {$\mathcal{M}$ (Manifold)};
    \end{scope}

    \draw[vector] (0,0) -- (3.5, 3.0) node[midway, below right, font=\footnotesize] {$\boldsymbol{\mu} \approx \mathbb{E}[\mathbf{x}]$};

    \draw[router] (0,0) -- (42:5.0) node[right, font=\scriptsize, yshift=2pt] {$\mathbf{w}_i^r$};
    \draw[router_dashed] (0,0) -- (47.5:5.0); 
    \draw[router] (0,0) -- (53:5.0) node[above, font=\scriptsize, xshift=-4pt] {$\mathbf{w}_j^r$};

    \node[label_box, draw=red!40] (collapse) at (1.2, 3.5) {
        \textbf{Tropical Saturation}\\
        \scriptsize Locally Constant Routing\\
        \scriptsize ($\mathbf{x} \mapsto \mathcal{I}_{\mathbf{x}} = \text{const.}$)
    };

    \draw[->, red!60, thick] (collapse.south) to[out=-90, in=150] (2.2, 2.5);

    \node[gray!60, font=\scriptsize, align=center] at (3.5, 0.5) {\textit{Combinatorial}\\\textit{Dead Zone}};
    \node[gray!60, font=\scriptsize, align=center] at (1.0, 4.5) {\textit{Combinatorial}\\\textit{Dead Zone}};

    \node[anchor=north] at (2.4, -0.9) {\small \textbf{(a) Uncentered Data: Angular Collapse}};

\end{scope}

\begin{scope}[xshift=9.5cm, local bounding box=rightpanel]

    \fill[teal!10, opacity=0.8] (0,0) -- (30:3.5) arc (30:150:3.5) -- cycle;
    \fill[orange!10, opacity=0.8] (0,0) -- (150:3.5) arc (150:270:3.5) -- cycle;
    \fill[blue!05, opacity=0.8] (0,0) -- (270:3.5) arc (270:390:3.5) -- cycle;

    \draw[axis] (-3.5,0) -- (3.5,0);
    \draw[axis] (0,-3.5) -- (0,3.5);
    \fill[black!70] (0,0) circle (1.5pt);

    \draw[res_blob] plot [smooth cycle, tension=0.8] coordinates {
        (0.9,0.5) (0.2,1.0) (-0.8,0.6)
        (-1.0,-0.5) (0.3,-0.9) (1.1,-0.3)
    };
    \node[teal!60!black, font=\footnotesize] at (0,0) {$\mathcal{M}_{\text{res}}$};

    \draw[router_dashed] (0,0) -- (30:3.8);
    \draw[router_dashed] (0,0) -- (150:3.8);
    \draw[router_dashed] (0,0) -- (270:3.8);

    \draw[->, thick, gray!60, dashed] 
        (-4.0, 2.8) to[bend left=25] 
        node[midway, above, font=\scriptsize, align=center, yshift=1pt] {Carrier Isolation\\$\mathbf{x} \mapsto \mathbf{x} - E_s(\mathbf{x})$} 
        (-1.5, 1.5);

    \node[label_box, draw=teal!60] at (2.5, -2.8) {
        \textbf{Transversality Restored}\\
        \scriptsize Satisfying Assumption \ref{ass:transversality}\\
        \scriptsize Max Capacity $\binom{N}{k}$
    };

    \node[anchor=north] at (0, -0.9) {\small \textbf{(b) Shared Expert: Carrier Isolation}};

\end{scope}

\end{tikzpicture}

\caption{\textbf{Geometric Regularization via Affine Carrier Isolation.} 
\textbf{(a) Angular Collapse:} When the input distribution is uncentered ($\|\boldsymbol{\mu}\| \gg \text{Var}(\mathbf{x})$), the manifold $\mathcal{M}$ subtends a vanishingly small solid angle. This leads to \textbf{Tropical Saturation}, a failure mode where the manifold is contained within the interior of a single Weyl chamber, rendering routing decisions locally constant.
\textbf{(b) Affine-Residual Reparameterization:} The Shared Expert $E_s$ isolates the global affine carrier. This effectively centers the residual manifold $\mathcal{M}_{\text{res}}$ at the origin, \textbf{removing the geometric obstruction} to Assumption~\ref{ass:transversality}. This ensures the architecture realizes its full combinatorial capacity $\binom{N}{k}$ by allowing decision boundaries to intersect the high-density regions of the data.}
\label{fig:angular_collapse}
\end{figure}

\subsection{Functional Centering via Affine-Residual Decomposition}
\label{sec:shared_experts}

To mitigate the geometric degeneracy (Angular Collapse) identified in Section \ref{subsec:angular_collapse}, we mathematically formalize the role of the \textit{Shared Expert} mechanism, recently popularized by State-of-the-Art architectures \citep{dai2024deepseekmoe}. We first provide its formal structural definition.

\begin{definition}[Shared Expert Mechanism]
\label{def:shared_expert}
In a MoE layer with $N$ routed experts, a shared expert $E_s: \mathbb{R}^{d_{in}} \to \mathbb{R}^{d_{out}}$ is an auxiliary dense feed-forward network that is decoupled from the Top-$k$ routing mechanism $\mathcal{G}$. It is uniformly activated for all inputs $\mathbf{x} \in \mathcal{M}$, with a constant routing weight of $1$.
\end{definition}

Mathematically, adding a shared expert is a straightforward additive adjustment. However, it serves a critical role in decoupling functional optimization from the geometric partition of the router. We first formalize the failure mode. When the target function possesses a dominant mean component, routed experts are forced to reconstruct this global bias. In the hard-gating limit, this pressure encourages the router to saturate, selecting a fixed subset of experts solely for their magnitude contribution. This leads to a \textit{Trivial Partition}, where the active set $\mathcal{I}_{\mathbf{x}}$ becomes locally constant across the data support, effectively collapsing the MoE layer to a single affine map. The shared expert provides an explicit reparameterization to prevent this functional collapse.

\begin{proposition}[Functional Isolation of the Affine Carrier]
\label{prop:affine_reparam}
Let $F: \mathbb{R}^{d_{in}} \to \mathbb{R}^{d_{out}}$ be the function represented by the MoE layer. The inclusion of a shared expert $E_s(\mathbf{x}) = \mathbf{W}_s \mathbf{x} + \mathbf{b}_s$ decomposes the layer output as:
\begin{equation*}
    F(\mathbf{x}) = \underbrace{\mathcal{A}(\mathbf{x})}_{\text{Global Affine Carrier}} + \underbrace{\mathcal{R}(\mathbf{x}; \mathcal{G})}_{\text{Routed Residual}},
\end{equation*}
where $\mathcal{A}(\mathbf{x}) \equiv E_s(\mathbf{x})$ is independent of the routing policy, and $\mathcal{R}(\mathbf{x}; \mathcal{G}) = \sum_{i \in \mathcal{I}_{\mathbf{x}}} G_i(\mathbf{x}) E_i(\mathbf{x})$ captures the sparsity-dependent non-linearity. 
\end{proposition}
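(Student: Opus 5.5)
The plan is to show that the decomposition is essentially definitional, and then check that the two pieces have the stated independence properties. I start from the layer output with a shared expert. By Definition~\ref{def:shared_expert}, $E_s$ is active for every input with constant weight $1$ and is decoupled from $\mathcal{G}$. Together with the sparse-sum formulation of Section~\ref{sec:moe_formulation}, this gives
\begin{equation*}
F(\mathbf{x}) = E_s(\mathbf{x}) + \sum_{i=1}^N G(\mathbf{x})_i E_i(\mathbf{x}).
\end{equation*}
Setting $\mathcal{A} := E_s$, I then use Eq.~\eqref{eq:gating_norm}: since $G(\mathbf{x})_i = 0$ for $i \notin \mathcal{I}_{\mathbf{x}}$, the full sum collapses to the sum over the active set. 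This is exactly $\mathcal{R}(\mathbf{x};\mathcal{G})$, and it yields the identity $F = \mathcal{A} + \mathcal{R}$ pointwise on $\mathbb{R}^{d_{in}}$.

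Next I verify that $\mathcal{A}$ is independent of the routing policy. Its parameters $(\mathbf{W}_s,\mathbf{b}_s)$ are disjoint from the router parameters $(\mathbf{W}_r,\mathbf{b}_r)$ and from the routed experts. Its weight is the constant $1$ and not a component of $G$. Hence $\partial \mathcal{A}/\partial(\mathbf{W}_r,\mathbf{b}_r) \equiv 0$, and $\mathcal{A}$ is invariant under any perturbation of the router. Since $\mathcal{A}$ is a single affine map, it is also constant across all routing cones $\Omega_I$ of Theorem~\ref{thm:topk_geometry}. Consequently it adds no creases, so the tropical hypersurface of the layer coincides with that of $\mathcal{R}$ together with the expert boundaries. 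In other words, all sparsity-dependent nonlinearity lives in $\mathcal{R}$.

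The step needing care is making ``captures the sparsity-dependent non-linearity'' precise. I would phrase it as follows: on each routing region $\Omega_I$, the restriction $\mathcal{R}|_{\Omega_I} = \sum_{i\in I} G_i E_i$ depends on $I$, while $\mathcal{A}$ does not. Therefore any change of $F$ across a routing boundary $\mathcal{B}_k$ equals the corresponding change of $\mathcal{R}$, namely $F|_{\Omega_I} - F|_{\Omega_J} = \mathcal{R}|_{\Omega_I} - \mathcal{R}|_{\Omega_J}$.

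Finally, I would remark on uniqueness. The decomposition is not unique, since any affine map can be shifted between $\mathcal{A}$ and $\mathcal{R}$ by absorbing it into the experts. Nothing beyond existence is claimed, however, so no uniqueness argument is required.
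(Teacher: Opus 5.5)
Your proposal is correct and takes the same definitional route as the paper, which gives no separate proof and simply calls the decomposition algebraically trivial: $F = E_s + \sum_{i} G(\mathbf{x})_i E_i(\mathbf{x})$, and the sum collapses to $\mathcal{I}_{\mathbf{x}}$ because $G(\mathbf{x})_i = 0$ off the active set. Your extra remarks on router-invariance and on $\mathcal{A}$ adding no non-smooth points are sound, but for $k \ge 2$ the Softmax-gated $\mathcal{R}$ is not piecewise affine, so ``tropical hypersurface of $\mathcal{R}$'' should be read loosely as its non-smooth (or discontinuity) locus.
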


While this decomposition is algebraically trivial, it prevents a specific optimization failure mode. When a target function possesses a dominant global mean, forcing sparsely routed experts to approximate this bias incentivizes the router to select a fixed subset of experts constantly to maintain a stable output magnitude. This causes the active set $\mathcal{I}_{\mathbf{x}}$ to become locally constant across the data support, rendering the combinatorial capacity $\binom{N}{k}$ unutilized.

\begin{remark}[Geometric Centering and Transversality]
    \label{rem:restoring_transversality}
    Input-space normalization (e.g., LayerNorm) centers the features $\mathbf{x}$, but does not address the bias in the target function. The shared expert provides an explicit reparameterization to absorb this target bias. By isolating the global carrier into $\mathcal{A}(\mathbf{x})$, the sparsely routed experts are only tasked with fitting zero-mean residuals. Geometrically, this effectively centers the residual data manifold relative to the router's projective cones (Figure \ref{fig:angular_collapse} (b)), satisfying the transversality requirement (Assumption \ref{ass:transversality}) and ensuring the decision boundaries actively partition the data support.
\end{remark}

In summary, the shared expert is a structural necessity for maintaining the accessibility of the combinatorial capacity $\binom{N}{k}$. By isolating the global affine carrier from the routing dynamics, it prevents the sparse routing from being hijacked by the global bias. This ensures that the decision boundaries of the tropical hypersurface intersect the high-variance regions of the data, preserving the combinatorial depth that would otherwise be lost to functional saturation.

\section{Conclusion}

In this work, we present the first analysis of MoE derived from the perspective of tropical geometry. We establish an algebraic isomorphism between the Top-$k$ routing mechanism and the Normal Fan of a Hypersimplex, revealing that \textbf{Sparsity is Combinatorial Depth} which scales geometric capacity by $\binom{N}{k}$. Crucially, we prove that this combinatorial structure grants MoE architectures a topological resilience against the capacity collapse observed in dense networks on low-dimensional manifolds. Furthermore, we translate these geometric insights into prescriptive design rules, deriving the asymptotic capacity bounds of fine-grained experts and establishing the shared expert mechanism as a structural necessity to avert geometric routing collapse. Our framework provides a novel theoretical foundation for understanding why sparse conditional computation is expressively efficient.

\bibliographystyle{plainnat} 
\bibliography{reference}

\appendix

\section{Missing Proof of Theorem \ref{thm:topk_geometry}}
\label{app:proof_topk}

\begin{proof}
Let the affine score (logit) for the $i$-th expert be denoted by $z_i(\mathbf{x}) = (\mathbf{w}^r_i)^\top \mathbf{x} + b^r_i$. The Top-k router selects the coalition $I \in \binom{\mathcal{N}}{k}$ that maximizes the aggregate score. The activation region (cell) for a specific coalition $I$ is defined as:
\begin{equation*}
    \mathcal{V}(I) = \left\{ \mathbf{x} \in \mathbb{R}^d \;\middle|\; \sum_{p \in I} z_p(\mathbf{x}) \ge \sum_{q \in J} z_q(\mathbf{x}), \quad \forall J \in \binom{\mathcal{N}}{k} \setminus \{I\} \right\}.
\end{equation*}
To determine the minimal set of inequalities defining $\mathcal{V}(I)$, consider an arbitrary competing coalition $J \neq I$. Let $K = I \cap J$ be the set of common experts. We partition the index sets as disjoint unions:
\begin{equation*}
    I = K \cup (I \setminus J), \quad J = K \cup (J \setminus I).
\end{equation*}
Since $|I| = |J| = k$, the symmetric differences must have equal cardinality: $|I \setminus J| = |J \setminus I| = m$, where $1 \le m \le k$.
We expand the dominance inequality $S_I(\mathbf{x}) \ge S_J(\mathbf{x})$:
\begin{equation*}
    \begin{aligned}
        \sum_{p \in I} z_p(\mathbf{x}) &\ge \sum_{q \in J} z_q(\mathbf{x}) \\
        \underbrace{\sum_{p \in K} z_p(\mathbf{x})}_{\text{Common Terms}} + \sum_{u \in I \setminus J} z_u(\mathbf{x}) &\ge \underbrace{\sum_{q \in K} z_q(\mathbf{x})}_{\text{Common Terms}} + \sum_{v \in J \setminus I} z_v(\mathbf{x}).
    \end{aligned}
\end{equation*}
Subtracting the common terms $\sum_{p \in K} z_p(\mathbf{x})$ from both sides, the inequality reduces to a comparison of the distinct experts:
\begin{equation}
    \label{eq:reduced_ineq}
    \sum_{u \in I \setminus J} z_u(\mathbf{x}) \ge \sum_{v \in J \setminus I} z_v(\mathbf{x}).
\end{equation}
\textbf{Redundancy Analysis via Transitivity:} We claim that any constraint where $m > 1$ is redundant.
Suppose $m=2$. Let $I \setminus J = \{u_1, u_2\}$ and $J \setminus I = \{v_1, v_2\}$. The inequality Eq.~\eqref{eq:reduced_ineq} becomes:
\begin{equation*}
    z_{u_1}(\mathbf{x}) + z_{u_2}(\mathbf{x}) \ge z_{v_1}(\mathbf{x}) + z_{v_2}(\mathbf{x}).
\end{equation*}
However, if $\mathbf{x} \in \mathcal{V}(I)$, then coalition $I$ must also defeat the intermediate coalitions $L_1 = (I \setminus \{u_1\}) \cup \{v_1\}$ and $L_2 = (I \setminus \{u_2\}) \cup \{v_2\}$. Note that $L_1, L_2 \in \binom{\mathcal{N}}{k}$ are valid coalitions that differ from $I$ by only $m=1$ element.
The constraints imposed by $L_1$ and $L_2$ are:
\begin{equation*}
    \begin{cases}
        S_I(\mathbf{x}) \ge S_{L_1}(\mathbf{x}) \iff z_{u_1}(\mathbf{x}) \ge z_{v_1}(\mathbf{x}), \\
        S_I(\mathbf{x}) \ge S_{L_2}(\mathbf{x}) \iff z_{u_2}(\mathbf{x}) \ge z_{v_2}(\mathbf{x}).
    \end{cases}
\end{equation*}
Summing these two fundamental inequalities yields:
\begin{equation*}
    (z_{u_1}(\mathbf{x}) \ge z_{v_1}(\mathbf{x})) + (z_{u_2}(\mathbf{x}) \ge z_{v_2}(\mathbf{x})) \implies z_{u_1}(\mathbf{x}) + z_{u_2}(\mathbf{x}) \ge z_{v_1}(\mathbf{x}) + z_{v_2}(\mathbf{x}).
\end{equation*}
By induction on $m$, all inequalities with $m > 1$ are implied by the set of inequalities with $m=1$.
Thus, the facets of $\mathcal{V}(I)$ are strictly determined by the neighbors where $|I \setminus J| = 1$. Let $J = (I \setminus \{u\}) \cup \{v\}$ for any $u \in I$ and $v \notin I$. The minimal representation is:
\begin{equation*}
    \mathcal{V}(I) = \bigcap_{u \in I} \bigcap_{v \in \mathcal{N} \setminus I} \left\{ \mathbf{x} \;\middle|\; z_u(\mathbf{x}) \ge z_v(\mathbf{x}) \right\}.
\end{equation*}
Substituting the affine form $z_i(\mathbf{x}) = (\mathbf{w}^r_i)^\top \mathbf{x} + b^r_i$, we obtain the intersection of $k(N-k)$ half-spaces:
\begin{equation*}
    \mathcal{V}(I) = \left\{ \mathbf{x} \in \mathbb{R}^d \;\middle|\; (\mathbf{w}^r_u - \mathbf{w}^r_v)^\top \mathbf{x} + (b^r_u - b^r_v) \ge 0, \quad \forall u \in I, v \notin I \right\}.
\end{equation*}

The selection function corresponds to the $k$-th elementary symmetric tropical polynomial:
\begin{equation*}
    E_k(\mathbf{x}) = \max_{I \in \binom{\mathcal{N}}{k}} \left( \sum_{i \in I} (\mathbf{w}^r_i)^\top \mathbf{x} + b^r_i \right) 
    = \max_{I \in \binom{\mathcal{N}}{k}} \left( \left( \sum_{i \in I} \mathbf{w}^r_i \right)^\top \mathbf{x} + \sum_{i \in I} b^r_i \right).
\end{equation*}
This formulation matches the definition of a convex support function $h_{\mathcal{P}}(\mathbf{x}) = \max_{\mathbf{v} \in \mathcal{V}} (\mathbf{v}^\top \mathbf{x} + c_{\mathbf{v}})$. The vertices generating the dual polytope are:
\begin{equation*}
    \mathcal{V}_{\text{dual}} = \left\{ \mathbf{v}_I = \sum_{i \in I} \mathbf{w}^r_i \;\middle|\; I \in \binom{\mathcal{N}}{k} \right\}.
\end{equation*}
Let $\mathbf{W}_r \in \mathbb{R}^{N \times d}$ be the matrix with rows $\mathbf{w}^r_i$. Let $\Delta_{k,N} = \text{conv}\{\mathbf{e}_I \mid \|\mathbf{e}_I\|_1 = k\} \subset \mathbb{R}^N$ be the standard $(k,N)$-Hypersimplex. We observe that each $\mathbf{v}_I$ is a linear projection of a vertex of $\Delta_{k,N}$:
\begin{equation*}
    \mathbf{v}_I = (\mathbf{W}_r)^\top \mathbf{e}_I.
\end{equation*}
From convex geometry, the domains of linearity of a support function $h_{\mathcal{P}}$ coincide with the cones of the Normal Fan of the polytope $\mathcal{P} = \text{conv}(\mathcal{V}_{\text{dual}})$.
Crucially, the combinatorial adjacency derived in Part 1 ($m=1$) corresponds exactly to the edge structure of the Hypersimplex: vertices $\mathbf{e}_I$ and $\mathbf{e}_J$ are connected by an edge in $\Delta_{k,N}$ if and only if $|I \Delta J| = 2$ (i.e., one swap).
Therefore, assuming $\mathbf{W}_r$ is in general position (preserving the vertex hull), the partition $\mathcal{P} = \{\mathcal{V}(I)\}_I$ is isomorphic to the Normal Fan of the projected Hypersimplex $(\mathbf{W}_r)^\top \Delta_{k,N}$.
\end{proof}

\section{Missing Proof of Proposition \ref{prop:dense_capacity}}
\label{app:proof_dense}

\begin{proof}
We establish the capacity bound by analyzing the geometric structure induced by the layer's weights. While the polynomial bound for dense networks is a classical result \citep{montufar2014number,serra2018bounding,xiong2020number,xiong2024number}, we provide a derivation via \textbf{Legendre-Fenchel Duality}, connecting the primal linear regions to the vertices of a Newton Polytope.

First, we formalize the geometric setting. Let the layer be parameterized by the augmented weight matrix $\tilde{\mathbf{W}} = [\mathbf{W}, \mathbf{b}] \in \mathbb{R}^{H \times (d_{in}+1)}$ \citep{stanley2007introduction}. We assume the hyperplanes defined by the rows of $\tilde{\mathbf{W}}$ are in \textit{General Position} (Definition \ref{def:general_position}). This is a standard assumption in the literature, as the set of singular parameters (where hyperplanes are parallel or intersect in a degenerate subspace) has Lebesgue measure zero in the parameter space \citep{glorot2010understanding,montufar2014number}. Under random initialization with continuous distributions, this property holds with probability 1 \citep{serra2018bounding,hinz2019framework}.

Consider a dense layer with $H$ neurons and input dimension $d_{in}$. The decision boundary of the $i$-th ReLU neuron is defined by the affine hyperplane:
\begin{equation*}
    \mathcal{H}_i = \{ \mathbf{x} \in \mathbb{R}^{d_{in}} \mid \mathbf{w}_i^\top \mathbf{x} + b_i = 0 \}, \quad \text{for } i = 1, \dots, H.
\end{equation*}
The collection of these hyperplanes forms an arrangement $\mathcal{A} = \{\mathcal{H}_1, \dots, \mathcal{H}_H\}$. The number of linear regions $N_{\text{dense}}$ corresponds to the number of connected cells in the complement space $\mathbb{R}^{d_{in}} \setminus \bigcup_{i=1}^H \mathcal{H}_i$.

According to Zaslavsky's Theorem \citep{zaslavsky1975facing}, the number of regions in an arrangement of $H$ hyperplanes in general position in $d_{in}$-dimensional space is given exactly by the partial sum of binomial coefficients:
\begin{equation*}
    N_{\text{dense}} = \Phi(H, d_{in}) = \sum_{j=0}^{d_{in}} \binom{H}{j}.
\end{equation*}
We aim to derive the scaling law of $N_{\text{dense}}$ with respect to $H$. We expand the summation explicitly:
\begin{equation*}
    N_{\text{dense}} = \binom{H}{d_{in}} + \binom{H}{d_{in}-1} + \dots + \binom{H}{1} + \binom{H}{0}.
\end{equation*}
Since we consider the regime where $H \to \infty$ while $d_{in}$ is fixed ($H \gg d_{in}$), the sum is dominated by the term with the highest index $j=d_{in}$. We analyze the magnitude of the $j$-th binomial coefficient using its polynomial expansion:
\begin{align*}
    \binom{H}{j} &= \frac{H!}{j!(H-j)!} = \frac{1}{j!} \prod_{i=0}^{j-1} (H-i) \\
    &= \frac{1}{j!} \left( H^j - \frac{j(j-1)}{2}H^{j-1} + \mathcal{O}(H^{j-2}) \right) \\
    &\sim \frac{1}{j!} H^j.
\end{align*}
Substituting this polynomial expansion into the sum for $N_{\text{dense}}$:
\begin{align*}
    N_{\text{dense}} &= \binom{H}{d_{in}} + \sum_{j=0}^{d_{in}-1} \binom{H}{j} \\
    &= \left[ \frac{1}{d_{in}!} H^{d_{in}} + \mathcal{O}(H^{d_{in}-1}) \right] + \sum_{j=0}^{d_{in}-1} \mathcal{O}(H^j) \\
    &= \frac{1}{d_{in}!} H^{d_{in}} + \mathcal{O}(H^{d_{in}-1}).
\end{align*}
Since the summation term $\sum_{j=0}^{d_{in}-1} \mathcal{O}(H^j)$ is strictly bounded by $d_{in} \times \mathcal{O}(H^{d_{in}-1})$, the total complexity is strictly governed by the highest-order term.

To rigorously connect this to the \textbf{Dual Geometry}, we consider the standard lifting of the arrangement to $\mathbb{R}^{d_{in}+1}$ via the augmented weights $\tilde{\mathbf{w}}_i=[\mathbf{w};b_i]$. The number of regions in the affine arrangement in $\mathbb{R}^{d_{in}}$ scales asymptotically the same as the number of vertices $V(\mathcal{P})$ of the Zonotope generated by the vectors $\tilde{\mathbf{W}}$ in the dual space $\mathbb{R}^{d_{in}+1}$:
\begin{equation*}
    V(\mathcal{P}) = 2 \sum_{k=0}^{d_{in}} \binom{H-1}{j} \sim 2 \cdot \frac{1}{d_{in}!} H^{d_{in}} = \Theta(H^{d_{in}}).
\end{equation*}
Both the primal (Arrangement) and dual (Zonotope) perspectives yield the same polynomial scaling law. Since $d_{in}$ is a structural constant independent of width, we conclude:
\begin{equation*}
    N_{\text{dense}}  = \Theta(H^{d_{in}}).
\end{equation*}
This completes the proof.

\begin{figure}
    \centering
    \begin{tikzpicture}[
        scale=0.9,
        >=stealth,
        axis/.style={->, gray!50, thin},
        line_style/.style={thick, blue!80!black},
        boundary_style/.style={very thick, blue!80!black}, 
        weight_vec/.style={->, thick, red!80!black},
        zono_fill/.style={fill=red!10, draw=red!80!black, thick},
        region_highlight/.style={fill=blue!10},
        vertex_highlight/.style={fill=red, circle, inner sep=1.5pt},
        map_line/.style={->, dashed, thick, purple, shorten >=2pt, shorten <=2pt}
    ]

    \begin{scope}[local bounding box=primal_scope]
        \node[font=\bfseries] at (0, 3.5) {Primal Space ($\mathbb{R}^{d_{in}}$)};

        \draw[axis] (-2.5,0) -- (2.5,0) node[right, gray] {$x_1$};
        \draw[axis] (0,-2.5) -- (0,2.5) node[above, gray] {$x_2$};

        \begin{scope}
            \clip (-2.2, -2.2) rectangle (2.2, 2.2);
            
            \fill[region_highlight] (0.3, 1.2) -- (0.3, 2.5) -- (1.6, 2.5) -- cycle;

            \draw[line_style] (0.3, -2.5) -- (0.3, 2.5); 
            \draw[line_style] (-2.5, 0.3) -- (2.5, 0.3); 
            \draw[line_style] (-2.5, -1.6) -- (1.6, 2.5); 

            \draw[boundary_style] (0.3, 0.3) -- (0.3, 1.2); 
            \draw[boundary_style] (0.3, 0.3) -- (2.2, 0.3); 
            \draw[boundary_style] (0.3, 1.2) -- (1.3, 2.2); 
        \end{scope}

        \node[blue!80!black, right] at (0.3, -1.5) {$H_1$};
        \node[blue!80!black, above] at (-1.5, 0.3) {$H_2$};
        \node[blue!80!black, left] at (-0.5, -0.6) {$H_3$};
        
        \node[blue!80, font=\large] at (0.6, 2.0) {$\mathcal{R}_v$};

        \coordinate (primal_point) at (0.6, 1.9);
    \end{scope}

    \begin{scope}[xshift=6.5cm, local bounding box=dual_scope]
        \node[font=\bfseries] at (0, 3.5) {Dual Space (Weights)};
        \node[font=\small, gray] at (0, 3.1) {Newton Zonotope};

        \draw[axis] (-2.5,0) -- (2.5,0);
        \draw[axis] (0,-2.5) -- (0,2.5);

        \coordinate (O) at (0,0);
        \coordinate (w1) at (1.3, 0);  
        \coordinate (w2) at (0, 1.3);  
        \coordinate (w3) at (-0.8, 0.8);

        \coordinate (v0) at (0,0);
        \coordinate (v1) at (1.3, 0);
        \coordinate (v12) at (1.3, 1.3); 
        \coordinate (v123) at (0.5, 2.1);
        \coordinate (v23) at (-0.8, 2.1);
        \coordinate (v3) at (-0.8, 0.8);

        \draw[zono_fill] (v0) -- (v1) -- (v12) -- (v123) -- (v23) -- (v3) -- cycle;

        \draw[weight_vec] (O) -- (w1) node[below] {$\mathbf{w}_1$};
        \draw[weight_vec] (O) -- (w2) node[right] {$\mathbf{w}_2$};  
        \draw[weight_vec] (O) -- (w3) node[left] {$\mathbf{w}_3$};

        \node[vertex_highlight] (dual_point) at (v12) {};
        \node[red, right, font=\small] at (1.4, 1.3) {$\mathbf{v} = \mathbf{w}_1 + \mathbf{w}_2$};
        
        \node[red!80!black] at (0.5, -1.5) {$\mathcal{Q} = \bigoplus_{i} [\mathbf{0}, \mathbf{w}_i]$};
    \end{scope}

    \draw[map_line] (primal_point) to[bend left=20] node[midway, above, font=\small\bfseries, fill=white, inner sep=1pt] {Duality Mapping $\mathcal{D}_\Phi$} (dual_point);

    \end{tikzpicture}
    
    \caption{\textbf{Geometric Capacity via Duality and Activation Patterns.} (\textbf{Left}) The Primal input space $\mathbb{R}^d$ is partitioned into convex polyhedral cells by a hyperplane arrangement $\mathcal{A} = \{H_1, H_2, H_3\}$, where each hyperplane corresponds to the decision boundary of a ReLU neuron. A specific linear region $\mathcal{R}_{\mathbf{v}}$ is defined by its \textbf{activation pattern} $\mathbf{s} \in \{0, 1\}^3$, representing the binary state of all neurons. For the shaded region $\mathcal{R}_{\mathbf{v}}$, the pattern is $\mathbf{s} = \{1, 1, 0\}$, indicating that neurons 1 and 2 are active ($H_1, H_2$ ``on'') while neuron 3 is inactive. (\textbf{Right}) In the Dual weight space, the layer is represented by a \textbf{Newton Zonotope} $\mathcal{Q}$, which is the Minkowski sum of line segments $\bigoplus_{i=1}^3 [\mathbf{0}, \mathbf{w}_i]$. The \textbf{Duality Mapping} $\mathcal{D}_\Phi$ (Legendre-Fenchel duality) associates primal linear regions $\mathcal{R}_{\mathbf{v}}$ with vertices $\mathbf{v}$ of the zonotope, calculated as the weighted sum of active neurons: $\mathbf{v} = \sum_{i} s_i \mathbf{w}_i$. For the highlighted region, $\mathcal{D}_\Phi(\mathcal{R}_{\mathbf{v}}) = \mathbf{w}_1 + \mathbf{w}_2$. While the exact counts differ between the affine arrangement and the dual zonotope, this framework demonstrates that the total geometric capacity is \textbf{governed by} the vertex complexity of the zonotope, sharing the same asymptotic growth rate $\Theta(H^d)$.}
    \label{fig:primal_dual}
\end{figure}
\end{proof}

\section{Missing Proof of Theorem \ref{thm:top1_capacity}}
\label{app:proof_top1}

\begin{proof}
Let $\mathcal{A} = \{E_1, \dots, E_N\}$ be a Top-1 MoE layer where each expert $E_i$ consists of $H$ ReLU neurons. Let $\mathcal{H}_i = \{h_{i,1}, \dots, h_{i,H}\}$ denote the arrangement of $H$ affine hyperplanes in $\mathbb{R}^{d_{in}}$ corresponding to the activation boundaries of expert $E_i$. We assume that the global collection of hyperplanes, including all expert boundaries $\bigcup_i \mathcal{H}_i$ and the router's decision boundaries $\mathcal{H}^r$, is in general position (Definition~\ref{def:general_position}).

The Top-1 router assigns each input $\mathbf{x} \in \mathbb{R}^{d_{in}}$ to exactly one expert based on the logit scores $z_1(\mathbf{x}), \dots, z_N(\mathbf{x})$. This induces a partition of the input space into $N$ disjoint convex polyhedral cells (Voronoi cells):
\begin{equation*}
    \Omega_i = \left\{ \mathbf{x} \in \mathbb{R}^{d_{in}} \mid z_i(\mathbf{x}) \ge z_j(\mathbf{x}), \, \forall j \neq i \right\}, \quad i=1, \dots, N.
\end{equation*}
The gating function $\mathbf{G}(\mathbf{x})$ is defined as $G_i(\mathbf{x}) = 1$ and $G_j(\mathbf{x}) = 0$ for $j \neq i$ when $\mathbf{x} \in \text{int}(\Omega_i)$.

The global MoE function $F: \mathbb{R}^{d_{in}} \to \mathbb{R}^{d_{out}}$ is defined as $F(\mathbf{x}) = \sum_{j=1}^N G_j(\mathbf{x}) E_j(\mathbf{x})$. Restricting $F$ to the interior of a specific routing cell $\Omega_i$:
\begin{equation*}
    F(\mathbf{x}) \Big|_{\mathbf{x} \in \text{int}(\Omega_i)} = 1 \cdot E_i(\mathbf{x}) + \sum_{j \neq i} 0 \cdot E_j(\mathbf{x}) = E_i(\mathbf{x}).
\end{equation*}
Consequently, the non-differentiable boundaries (singular locus) of $F(\mathbf{x})$ within $\text{int}(\Omega_i)$ are formed exclusively by the hyperplanes in $\mathcal{H}_i$. The hyperplanes of inactive experts $\mathcal{H}_j$ ($j \neq i$) have no effect on the local linear regions within $\Omega_i$.

The total number of linear regions $N_{\text{MoE-1}}$ is the sum of the linear regions formed within each routing cell $\Omega_i$ plus the boundaries between cells. Let $\text{Regions}( {\mathcal{H}_i}|_{\Omega_i} )$ be the number of regions induced by the arrangement $\mathcal{H}_i$ restricted to the convex set $\Omega_i$. By the theory of restricted hyperplane arrangements \citep{zaslavsky1975facing}:
\begin{equation*}
    N_{\text{MoE-1}} = \sum_{i=1}^N \text{Regions}\left( {\mathcal{H}_i}\big|_{\Omega_i} \right).
\end{equation*}
For $H$ hyperplanes in general position, the number of regions inside any convex polyhedral cell $\Omega_i \subseteq \mathbb{R}^{d_{in}}$ is upper bounded by the Zaslavsky function $\Phi(H, d_{in})$:
\begin{equation*}
    \text{Regions}\left( {\mathcal{H}_i}\big|_{\Omega_i} \right) \le \Phi(H, d_{in}) = \sum_{j=0}^{d_{in}} \binom{H}{j}.
\end{equation*}
Summing over all $N$ cells, we obtain the total geometric capacity:
\begin{equation*}
    N_{\text{MoE-1}} \le \sum_{i=1}^N \Phi(H, d_{in}) = N \cdot \Phi(H, d_{in}).
\end{equation*}

In the wide-expert regime where $H \gg d_{in}$, the sum is dominated by the highest-degree term:
\begin{align*}
    \Phi(H, d_{in}) &= \binom{H}{d_{in}} + \binom{H}{d_{in}-1} + \dots + \binom{H}{0} \\
    &= \frac{1}{d_{in}!} \prod_{k=0}^{d_{in}-1} (H-k) + \mathcal{O}(H^{d_{in}-1}) \\
    &= \frac{H^{d_{in}}}{d_{in}!} + \mathcal{O}(H^{d_{in}-1}).
\end{align*}
Substituting this back into the total capacity bound:
\begin{equation*}
    N_{\text{MoE-1}} \le N \cdot \left( \frac{H^{d_{in}}}{d_{in}!} + \mathcal{O}(H^{d_{in}-1}) \right).
\end{equation*}
As $H \to \infty$, we retain the leading term:
\begin{equation*}
    N_{\text{MoE-1}} \sim \frac{N}{d_{in}!} H^{d_{in}}.
\end{equation*}
Ignoring the structural constant $(d_{in}!)^{-1}$ which is independent of the model width and expert count, we conclude:
\begin{equation*}
    N_{\text{MoE-1}} = \Theta(N \cdot H^{d_{in}}).
\end{equation*}
This completes the proof.
\end{proof}

\section{Missing Proof of Theorem \ref{thm:combinatorial_slicing}}
\label{app:proof_topk_capacity}

\begin{proof}
The total geometric capacity is derived by aggregating the linear regions formed by the active experts within each distinct routing cell. Let $\mathcal{P} = \{\mathcal{V}(I) \mid I \in \binom{\mathcal{N}}{k}\}$ be the partition of $\mathbb{R}^{d_{in}}$ induced by the Top-$k$ router, where each cell $\mathcal{V}(I)$ corresponds to a unique coalition $I$. From Theorem \ref{thm:topk_geometry}, the number of such cells is exactly:
\begin{equation*}
    |\mathcal{P}| = \binom{N}{k}.
\end{equation*}
For a fixed active set $I$, denote by $\mathcal{A}_I = \cup_{i\in I} \mathcal{H}_i$ the subset of $\{ \mathcal{H}_1, \ldots, \mathcal{H}_N\}$. The local non-linearity is generated by the arrangement of hyperplanes $\mathcal{A}_I$ belonging to the $k$ selected experts. Since each expert contains $H$ neurons, the size of the local arrangement is:
\begin{equation*}
    |\mathcal{A}_I| = \sum_{j \in I} H = kH.
\end{equation*}
Assuming the union of router and expert hyperplanes is in general position (Definition~\ref{def:general_position}), the number of regions within a cell $\mathcal{V}(I)$ is bounded by Zaslavsky's function $\Phi$ \citep{zaslavsky1975facing}. The total capacity $N_{\text{MoE-k}}$ is the sum over all routing cells:
\begin{equation*}
    N_{\text{MoE-k}} = \sum_{I \in \binom{\mathcal{N}}{k}} \text{Regions}\left( {\mathcal{A}_I}\big|_{\mathcal{V}(I)} \right) \le \binom{N}{k} \cdot \Phi(kH, d_{in}).
\end{equation*}
Particularly, if $k=1$, each $I \in \binom{\mathcal{N}}{k}$ has exactly one element and thus we have
\begin{equation}
    \sum_{I \in \binom{\mathcal{N}}{k}} \text{Regions}\left( {\mathcal{A}_I}\big|_{\mathcal{V}(I)} \right) = \sum_{i=1}^N \text{Regions}\left( {\mathcal{H}_i}\big|_{\Omega_i} \right),
\end{equation}
which recovers the case that we discuss in Appendix~\ref{app:proof_top1}.
Substitute Zaslavsky's partial sum formula $\Phi(n, d) = \sum_{j=0}^d \binom{n}{j}$:
\begin{equation*}
    N_{\text{MoE-k}} \le \binom{N}{k} \sum_{j=0}^{d_{in}} \binom{kH}{j}.
\end{equation*}
We derive the asymptotic scaling behavior for the regime $kH \gg d_{in}$. The summation is dominated by the highest-order term $j=d_{in}$. We expand the binomial coefficient $\binom{kH}{j}$ explicitly:
\begin{equation*}
    \begin{aligned}
        \sum_{j=0}^{d_{in}} \binom{kH}{j} &= \binom{kH}{d_{in}} + \sum_{j=0}^{d_{in}-1} \binom{kH}{j} \\
        &= \frac{(kH)!}{d_{in}! (kH - d_{in})!} + \mathcal{O}\left((kH)^{d_{in}-1}\right) \\
        &= \frac{1}{d_{in}!} \prod_{m=0}^{d_{in}-1} (kH - m) + \mathcal{O}\left((kH)^{d_{in}-1}\right).
    \end{aligned}
\end{equation*}
We expand the polynomial product $\prod_{m=0}^{d_{in}-1} (kH - m)$ to isolate the leading power of $H$:
\begin{equation*}
    \begin{aligned}
        \prod_{m=0}^{d_{in}-1} (kH - m) &= (kH) \cdot (kH - 1) \cdot \dots \cdot (kH - d_{in} + 1) \\
        &= (kH)^{d_{in}} - \left(\sum_{m=0}^{d_{in}-1} m\right) (kH)^{d_{in}-1} + \dots \\
        &= (kH)^{d_{in}} - \frac{d_{in}(d_{in}-1)}{2} (kH)^{d_{in}-1} + \mathcal{O}\left((kH)^{d_{in}-2}\right).
    \end{aligned}
\end{equation*}
Substituting this back into the capacity bound:
\begin{equation*}
    \Phi(kH, d_{in}) = \frac{(kH)^{d_{in}}}{d_{in}!} + \mathcal{O}((kH)^{d_{in}-1}).
\end{equation*}
Next, we expand the combinatorial multiplier $\binom{N}{k}$ for $N \gg k$:
\begin{equation*}
    \begin{aligned}
        \binom{N}{k} &= \frac{N!}{k!(N-k)!} = \frac{1}{k!} \prod_{m=0}^{k-1} (N-m) \\
        &= \frac{1}{k!} \left( N^k - \frac{k(k-1)}{2}N^{k-1} + \mathcal{O}(N^{k-2}) \right).
    \end{aligned}
\end{equation*}
Combining the asymptotic expansions for the router partition and the expert arrangement:
\begin{equation*}
    N_{\text{MoE-k}} \le \left( \frac{1}{k!} N^k + \dots \right) \cdot \left( \frac{1}{d_{in}!} (kH)^{d_{in}} + \dots \right).
\end{equation*}
By reverting to binomial notation and absorbing the dimensional constant $d_{in}!$, we obtain the final scaling law:
\begin{equation*}
    N_{\text{MoE-k}} = \Theta\left( \binom{N}{k} (kH)^{d_{in}} \right).
\end{equation*}
This completes the proof.
\end{proof}

\section{Missing Proof of Theorem \ref{thm:topk_capacity_lower}}
\label{app:proof_topk_capacity_low}

\begin{proof}
Let $d_{in}$ denote the input dimension and let $\mathcal{N} = \{1, \dots, N\}$ be the set of expert indices. Let $\mathbf{W}_r \in \mathbb{R}^{N \times d_{in}}$ be the routing weight matrix. By Lemma~\ref{lem:topk_lower_routing}, there exists a choice of $\mathbf{W}_r$ such that for every coalition $I \in \binom{\mathcal{N}}{k}$ (i.e., subset $I \subset \mathcal{N}$ with $|I| = k$), the routing region
\begin{equation*}
    \mathcal{V}(I) \coloneqq \{\mathbf{x} \in \mathbb{R}^{d_{in}} : \text{argtop}_k(\mathbf{W}_r \mathbf{x}) = I\}
\end{equation*}
is non-empty and has positive Lebesgue measure. Moreover, by the definition of the Top-$k$ operator, the regions $\{\mathcal{V}(I)\}_{I \in \binom{\mathcal{N}}{k}}$ are pairwise disjoint.

Fix an arbitrary coalition $I \in \binom{\mathcal{N}}{k}$. For any input $x \in \mathcal{V}(I)$, exactly the experts indexed by $I$ are active. Therefore, restricted to the domain $\mathcal{V}(I)$, the MoE layer function reduces to:
\begin{equation*}
    f_I(x) = \sum_{i \in I} E_i(x),
\end{equation*}
where each $E_i$ is a dense ReLU network of width $H$.

For each expert $i \in I$, let $\mathcal{H}_i = \{h_{i,1}, \dots, h_{i,H}\}$ denote the set of $H$ affine hyperplanes induced by its ReLU activations. Following the notation in Theorem~\ref{thm:combinatorial_slicing}, we define the combined local arrangement for the active coalition $I$ as:
\begin{equation*}
    \mathcal{A}_I \coloneqq \bigcup_{i \in I} \mathcal{H}_i, \qquad \text{with cardinality } |\mathcal{A}_I| = kH.
\end{equation*}

We choose the expert weights such that the following geometric conditions hold:
\begin{itemize}
    \item Within each expert $i$, the hyperplanes $\mathcal{H}_{i}$ are in general position;
    \item For distinct experts $i \neq j$, the sets $\mathcal{H}_{i}$ and $\mathcal{H}_{j}$ are mutually transversal;
    \item The union of all expert hyperplanes is transversal to the routing boundaries defining $\partial \mathcal{V}(I)$.
\end{itemize}

Under these conditions, the restriction of the affine arrangement $\mathcal{A}_I$ to the open polyhedral cone $\mathcal{V}(I)$ behaves combinatorially like a generic arrangement. The number of linear regions induced inside $\mathcal{V}(I)$ is lower-bounded by the number of regions of a generic arrangement of $kH$ hyperplanes in $\mathbb{R}^{d_{in}}$. By Zaslavsky's theorem \citep{zaslavsky1975facing}, this count is exactly given by the partial binomial sum:
\begin{equation}
\label{eq:zaslavsky_sum}
    \operatorname{Regions}\left( \mathcal{A}_I \big|_{\mathcal{V}(I)} \right) \ge \Phi(kH, d_{in}) = \sum_{j=0}^{d_{in}} \binom{kH}{j}.
\end{equation}

We now rigorously derive the asymptotic lower bound of this sum with respect to $H$ (assuming the regime $kH \gg d_{in}$). The sum is dominated by the highest-order term $j=d_{in}$. Expanding the binomial coefficient:
\begin{align*}
    \binom{kH}{d_{in}} &= \frac{(kH)!}{d_{in}!(kH-d_{in})!} \\
    &= \frac{1}{d_{in}!} \prod_{m=0}^{d_{in}-1} (kH - m) \\
    &= \frac{1}{d_{in}!} \left( (kH)^{d_{in}} - \left(\sum_{m=0}^{d_{in}-1} m\right)(kH)^{d_{in}-1} + \mathcal{O}((kH)^{d_{in}-2}) \right) \\
    &= \frac{k^{d_{in}}}{d_{in}!} H^{d_{in}} - \mathcal{O}(H^{d_{in}-1}).
\end{align*}
The remaining terms in the sum \eqref{eq:zaslavsky_sum} for $j < d_{in}$ are bounded by polynomials of degree strictly less than $d_{in}$, specifically $\mathcal{O}(H^{d_{in}-1})$. Combining these, the capacity restricted to a single routing region scales as:
\begin{equation}
\label{eq:single_region_bound}
    \operatorname{Regions}\left( \mathcal{A}_I \big|_{\mathcal{V}(I)} \right) \ge \frac{k^{d_{in}}}{d_{in}!} H^{d_{in}} + \mathcal{O}(H^{d_{in}-1}) = \Omega(H^{d_{in}}).
\end{equation}

Finally, we aggregate the capacity across the entire input space. Since the routing regions $\{\mathcal{V}(I)\}_{I \in \binom{\mathcal{N}}{k}}$ are disjoint, the total number of linear regions $N_{\text{MoE-k}}$ is the sum of the regions within each $\mathcal{V}(I)$:
\begin{align*}
    N_{\text{MoE-k}} &= \sum_{I \in \binom{\mathcal{N}}{k}} \operatorname{Regions}\left( \mathcal{A}_I \big|_{\mathcal{V}(I)} \right) \\
    &\ge \sum_{I \in \binom{\mathcal{N}}{k}} \left( \frac{k^{d_{in}}}{d_{in}!} H^{d_{in}} \right) \quad \text{(considering the leading term)} \\
    &= \binom{N}{k} \cdot \frac{k^{d_{in}}}{d_{in}!} H^{d_{in}}.
\end{align*}

In the asymptotic notation where $d_{in}$ and $k$ are constants relative to $H$, we conclude:
\begin{equation*}
    N_{\text{MoE-k}} = \Omega\!\left(\binom{N}{k} H^{d_{in}}\right).
\end{equation*}
This completes the proof.
\end{proof}

\section{Missing Proof of Theorem \ref{thm:effective_topk}}
\label{app:proof_effective_capacity}

\begin{proof}
Let $\mathcal{M} \subset \mathbb{R}^{d_{in}} \setminus \{0\}$ be the smooth compact submanifold of dimension $d_{eff}$. The router $G(\mathbf{x}) = \text{argtop}_k(\mathbf{W}_r \mathbf{x})$ is homogeneous of degree zero, thus the activation regions $\mathcal{V}(I) = \{ \mathbf{x} \in \mathbb{R}^{d_{in}} \mid G(\mathbf{x}) = I \}$ for $I \in \binom{\mathcal{N}}{k}$ are polyhedral cones. The global effective capacity $N_{\text{MoE-k}}^{\text{eff}}$ can be decomposed as:
\begin{equation*}
    N_{\text{MoE-k}}^{\text{eff}} = \sum_{I \in \mathcal{I}} \text{Regions}\left( \mathcal{A}_I \big|_{\mathcal{M} \cap \mathcal{V}(I)} \right)
\end{equation*}
Taking the expectation with respect to the isotropic Gaussian router weights $\mathbf{W}_r$:
\begin{equation*}
    \mathbb{E}_{\mathbf{W}_r}\left[ N_{\text{MoE-k}}^{\text{eff}} \right] = \sum_{I \in \mathcal{I}} \mathbb{E}_{\mathbf{W}_r}\left[ \text{Regions}\left( \mathcal{A}_I \big|_{\mathcal{M} \cap \mathcal{V}(I)} \right) \right]
\end{equation*}
By the compactness of $\mathcal{M}$ and Assumption \ref{ass:transversality}, the intersection $\mathcal{M} \cap \mathcal{V}(I)$ can be locally embedded into coordinate charts diffeomorphic to open subsets of $\mathbb{R}^{d_{eff}}$. For any coalition $I$, Zaslavsky's theorem \citep{zaslavsky1975facing} restricted to the submanifold provides a deterministic upper bound:
\begin{equation*}
    \text{Regions}\left( \mathcal{A}_I \big|_{\mathcal{M} \cap \mathcal{V}(I)} \right) \le \mathbbm{1}_{\{\mathcal{V}(I) \cap \mathcal{M} \neq \emptyset\}} \cdot \Phi(kH, d_{eff})
\end{equation*}
where $\Phi(kH, d_{eff}) = \sum_{j=0}^{d_{eff}} \binom{kH}{j}$. Substituting this into the expectation:
\begin{equation*}
    \mathbb{E}_{\mathbf{W}_r}\left[ N_{\text{MoE-k}}^{\text{eff}} \right] \le \Phi(kH, d_{eff}) \cdot \sum_{I \in \mathcal{I}} \mathbb{P}_{\mathbf{W}_r} \left(\mathcal{V}(I) \cap \mathcal{M} \neq \emptyset \right)
\end{equation*}
Due to the conical nature of $\mathcal{V}(I)$, the intersection condition $\mathcal{V}(I) \cap \mathcal{M} \neq \emptyset$ is equivalent to $\mathcal{V}(I) \cap \pi(\mathcal{M}) \neq \emptyset$ on the sphere $\mathbb{S}^{d_{in}-1}$. The boundaries of $\mathcal{V}(I)$ correspond to the Type-$A_{N-1}$ Braid Arrangement. Since $\mathbf{W}_r \sim \mathcal{N}(0, I)$ is isotropic, the sphere is partitioned into $N!$ permutation chambers of equal measure. Each coalition $I$ is the union of exactly $k!(N-k)!$ chambers. By symmetry, the normalized measure $\mu(\cdot)$ of any routing cone is:
\begin{equation*}
    \mu(\mathcal{V}(I)) = \frac{k!(N-k)!}{N!} = \frac{1}{\binom{N}{k}}
\end{equation*}
By symmetry and measure arguments in spherical integral geometry, for a fixed measurable set $\pi(\mathcal{M})$, the probability of intersection with a random cone $\mathcal{V}(I)$ is upper-bounded by the ratio of their normalized measures up to a constant $C$ reflecting the shape regularity of the Braid chambers \citep{santalo2004integral}:
\begin{equation*}
    \mathbb{P}(\mathcal{V}(I) \cap \pi(\mathcal{M}) \neq \emptyset) \le C \cdot \frac{\text{Vol}(\pi(\mathcal{M}))}{\text{Vol}(\mathbb{S}^{d_{in}-1})} = C \cdot \sigma(\pi(\mathcal{M}))
\end{equation*}
Summing over all $I \in \mathcal{I}$, the expected number of active coalitions intersecting the manifold support is:
\begin{equation*}
    \mathbb{E}[N_{\text{active}}] = \sum_{I \in \mathcal{I}} \mathbb{P}(\mathcal{V}(I) \cap \pi(\mathcal{M}) \neq \emptyset) \le \binom{N}{k} \cdot C \cdot \sigma(\pi(\mathcal{M}))
\end{equation*}
We now combine the combinatorial router term with the expert capacity. In the asymptotic regime $kH \gg d_{eff}$, we have:
\begin{equation*}
    \Phi(kH, d_{eff}) = \frac{(kH)^{d_{eff}}}{d_{eff}!} + \mathcal{O}\left( (kH)^{d_{eff}-1} \right)
\end{equation*}
The total expectation is thus bounded as:
\begin{equation*}
    \mathbb{E}_{\mathbf{W}_r}\left[ N_{\text{MoE-k}}^{\text{eff}} \right] \le \left[ C \cdot \binom{N}{k} \sigma(\pi(\mathcal{M})) \right] \cdot \left[ \frac{(kH)^{d_{eff}}}{d_{eff}!} + \mathcal{O}\left( (kH)^{d_{eff}-1} \right) \right]
\end{equation*}
By absorbing the constant $C$ and lower-order terms into the $\mathcal{O}$ notation, we obtain:
\begin{equation*}
    \mathbb{E}_{\mathbf{W}_r}\left[ N_{\text{MoE-k}}^{\text{eff}} \right] = \mathcal{O}\left( \sigma(\pi(\mathcal{M})) \cdot \binom{N}{k} (kH)^{d_{eff}} \right) = \mathcal{O}\left( \frac{\text{Vol}(\pi(\mathcal{M}))}{\text{Vol}(\mathbb{S}^{d_{in}-1})} \cdot \binom{N}{k} (kH)^{d_{eff}} \right)
\end{equation*}
This completes the proof.
\end{proof}

\section{Missing Proof of Theorem \ref{thm:effective_topk_lower}}
\label{app:proof_effective_topk_lower}

\begin{proof}
Let $d_{in}$ denote the input dimension and $d_{eff}$ denote the intrinsic dimension of the manifold $\mathcal{M}$. Let $\mathbf{W}_r \in \mathbb{R}^{N \times d_{in}}$ be the routing weight matrix. For any expert coalition $I \in \binom{\mathcal{N}}{k}$ (a subset of expert indices with cardinality $k$), we define the routing region as:
\begin{equation*}
    \mathcal{V}(I) \coloneqq \left\{\mathbf{x} \in \mathbb{R}^{d_{in}} : \text{argtop}_k(\mathbf{W}_r \mathbf{x}) = I \right\}.
\end{equation*}

Let $\mathcal{C}_I = \pi(\mathcal{V}(I) \setminus \{0\}) \subset \mathbb{S}^{d_{in}-1}$ denote the spherical cone associated with this routing region, obtained by projecting onto the unit sphere. Since the rows of $\mathbf{W}_r$ are drawn from an isotropic distribution (e.g., $\mathcal{N}(0, \mathbf{I})$), the induced spherical tessellation $\{\mathcal{C}_I\}_{I \in \binom{\mathcal{N}}{k}}$ is invariant under rotation. Consequently, all routing cells are statistically exchangeable.

We consider the intersection of a specific routing cone $\mathcal{C}_I$ with the projection of the data manifold $\pi(\mathcal{M})$. By the principles of spherical integral geometry, for a fixed manifold $\mathcal{M}$ and a random isotropic tessellation, the probability of intersection is proportional to the measure of the manifold. Specifically, there exists a geometric constant relating to the shape, but for the purpose of the lower bound, we utilize the measure ratio:
\begin{equation}
\label{eq:prob_intersect}
    \mathbb{P}\!\left(\mathcal{C}_I \cap \pi(\mathcal{M}) \neq \emptyset \right) \propto \frac{\mathrm{Vol}(\pi(\mathcal{M}))}{\mathrm{Vol}(\mathbb{S}^{d_{in}-1})}.
\end{equation}

Now, consider the conditional case where the intersection is non-empty, i.e., $\mathcal{C}_I \cap \pi(\mathcal{M}) \neq \emptyset$. The restriction of the manifold $\mathcal{M}$ to the open cone $\mathcal{V}(I)$ is a smooth submanifold of dimension $d_{eff}$. Within this region $\mathcal{V}(I)$, the effective expert function is the sum of the $k$ active experts indexed by $I$. This induces a local arrangement of affine hyperplanes:
\begin{equation*}
    \mathcal{A}_I \coloneqq \bigcup_{i \in I} \mathcal{H}_i, \quad \text{where } |\mathcal{A}_I| = kH.
\end{equation*}

By Assumption~\ref{ass:transversality} (Almost-Sure Transversality) and the rank condition $kH \ge d_{eff}$, the restriction of the arrangement $\mathcal{A}_I$ to the submanifold $\mathcal{M} \cap \mathcal{V}(I)$ forms a generic hyperplane arrangement on a $d_{eff}$-dimensional space. Applying the lower bound from Zaslavsky's theorem to this restricted arrangement, the number of linear regions intersecting the manifold within this cell satisfies:
\begin{equation*}
    N^{\mathrm{eff}}(\mathcal{A}_I, \mathcal{M}) \ge \sum_{j=0}^{d_{eff}} \binom{kH}{j}.
\end{equation*}

We expand the summation to determine the asymptotic lower bound with respect to the width $H$. The sum is dominated by the term $j = d_{eff}$:
\begin{align*}
    \sum_{j=0}^{d_{eff}} \binom{kH}{j} &= \binom{kH}{d_{eff}} + \sum_{j=0}^{d_{eff}-1} \binom{kH}{j} \\
    &= \frac{1}{d_{eff}!} \prod_{m=0}^{d_{eff}-1} (kH - m) + \mathcal{O}((kH)^{d_{eff}-1}) \\
    &= \frac{1}{d_{eff}!} (kH)^{d_{eff}} + \mathcal{O}((kH)^{d_{eff}-1}).
\end{align*}
Thus, there exists a constant $c_{\mathcal{M}} > 0$ (absorbing $1/d_{eff}!$ and geometric factors of the manifold) such that, conditioned on intersection:
\begin{equation}
\label{eq:conditional_capacity}
    \mathbb{E}\left[ \operatorname{Regions}(\mathcal{A}_I \big|_{\mathcal{M} \cap \mathcal{V}(I)}) \;\middle|\; \mathcal{C}_I \cap \pi(\mathcal{M}) \neq \emptyset \right] \ge c_{\mathcal{M}} (kH)^{d_{eff}}.
\end{equation}

The total effective capacity $N_{\mathrm{MoE}\text{-}k}^{\mathrm{eff}}$ is the sum of regions over all possible coalitions $I$. By the linearity of expectation:
\begin{equation*}
    \mathbb{E}\!\left[N_{\mathrm{MoE}\text{-}k}^{\mathrm{eff}} \right] = \sum_{I \in \binom{\mathcal{N}}{k}} \mathbb{E}\!\left[ \operatorname{Regions}(\mathcal{A}_I \big|_{\mathcal{M} \cap \mathcal{V}(I)}) \right].
\end{equation*}
Using the law of total expectation with the indicator variable $\mathbbm{1}_{\{\mathcal{C}_I \cap \pi(\mathcal{M}) \neq \emptyset\}}$:
\begin{align*}
    \mathbb{E}\!\left[ \operatorname{Regions}(\mathcal{A}_I \big|_{\mathcal{M} \cap \mathcal{V}(I)}) \right] &= \mathbb{P}(\mathcal{C}_I \cap \pi(\mathcal{M}) \neq \emptyset) \cdot \mathbb{E}\left[ \operatorname{Regions}(\dots) \mid \text{non-empty} \right] \\
    &\quad + \mathbb{P}(\mathcal{C}_I \cap \pi(\mathcal{M}) = \emptyset) \cdot 0.
\end{align*}
Substituting Eq.~\eqref{eq:prob_intersect} and Eq.~\eqref{eq:conditional_capacity} into the sum:
\begin{align*}
    \mathbb{E}\!\left[N_{\mathrm{MoE}\text{-}k}^{\mathrm{eff}} \right] &\ge \sum_{I \in \binom{\mathcal{N}}{k}} \left( \frac{\mathrm{Vol}(\pi(\mathcal{M}))}{\mathrm{Vol}(\mathbb{S}^{d_{in}-1})} \right) \cdot \left( c_{\mathcal{M}} (kH)^{d_{eff}} \right) \\
    &= \binom{N}{k} \cdot \frac{\mathrm{Vol}(\pi(\mathcal{M}))}{\mathrm{Vol}(\mathbb{S}^{d_{in}-1})} \cdot c_{\mathcal{M}} (kH)^{d_{eff}}.
\end{align*}
Rearranging the terms to match the theorem statement, we obtain:
\begin{equation*}
    \mathbb{E}\!\left[N_{\mathrm{MoE}\text{-}k}^{\mathrm{eff}} \right] \ge c_{\mathcal{M}} \cdot \frac{\mathrm{Vol}(\pi(\mathcal{M}))}{\mathrm{Vol}(\mathbb{S}^{d_{in}-1})} \cdot \binom{N}{k} \cdot (kH)^{d_{eff}} = \mathcal{O}\left( \frac{\text{Vol}(\pi(\mathcal{M}))}{\text{Vol}(\mathbb{S}^{d_{in}-1})} \cdot \binom{N}{k} (kH)^{d_{eff}} \right).
\end{equation*}
This completes the proof.
\end{proof}

\section{Missing Proof of Theorem \ref{thm:fine_grained}}
\label{app:proof_fine_grained}

\begin{proof}
Let the combinatorial upper bound for the baseline capacity be $\mathcal{B}(N, H, k) = \binom{N}{k} H^{d_{eff}}$. We analyze the ratio $\mathcal{G}_{ub}(m)$ under the transformation $\mathcal{T}_m$ in the asymptotic limit $N \to \infty$ with fixed $m, k, d_{eff}$. The ratio is given by:
\begin{align*}
    \mathcal{G}_{ub}(m) &= \frac{\mathcal{B}(mN, H/m, mk)}{\mathcal{B}(N, H, k)} 
    = \frac{\binom{mN}{mk} (H/m)^{d_{eff}}}{\binom{N}{k} H^{d_{eff}}} 
    = \frac{\binom{mN}{mk}}{\binom{N}{k}} m^{-d_{eff}}.
\end{align*}
We utilize the standard asymptotic expansion of the binomial coefficient $\binom{n}{r}$ for fixed $r$ as $n \to \infty$:
\begin{align*}
    \binom{n}{r} &= \frac{n(n-1)\cdots(n-r+1)}{r!} = \frac{n^r}{r!} \prod_{j=0}^{r-1} \left(1 - \frac{j}{n}\right) = \frac{n^r}{r!} \left(1 + O\left(\frac{1}{n}\right)\right).
\end{align*}
Applying this expansion to the numerator (with $n=mN, r=mk$) and the denominator (with $n=N, r=k$):
\begin{align*}
    \frac{\binom{mN}{mk}}{\binom{N}{k}} &= \frac{\frac{(mN)^{mk}}{(mk)!}\left(1 + O\left(\frac{1}{mN}\right)\right)}{\frac{N^k}{k!}\left(1 + O\left(\frac{1}{N}\right)\right)} 
    = \frac{(mN)^{mk}}{N^k} \cdot \frac{k!}{(mk)!} \cdot (1 + o(1)).
\end{align*}
Isolating the powers of $N$ and $m$:
\begin{align*}
    \frac{(mN)^{mk}}{N^k} &= m^{mk} N^{mk - k} = m^{mk} N^{(m-1)k}.
\end{align*}
For the ratio of factorials, we apply Stirling's approximation formula $n! = \sqrt{2\pi n} \left(\frac{n}{e}\right)^n (1 + O(\frac{1}{n}))$:
\begin{align*}
    \frac{k!}{(mk)!} &= \frac{\sqrt{2\pi k} \left(\frac{k}{e}\right)^k \left(1 + O\left(\frac{1}{k}\right)\right)}{\sqrt{2\pi mk} \left(\frac{mk}{e}\right)^{mk} \left(1 + O\left(\frac{1}{mk}\right)\right)} \\
    &= \frac{\sqrt{k}}{\sqrt{mk}} \cdot \frac{k^k e^{-k}}{(mk)^{mk} e^{-mk}} \cdot (1 + o(1)) \\
    &= m^{-1/2} \cdot \frac{k^k}{m^{mk} k^{mk}} \cdot \frac{e^{mk}}{e^k} \cdot (1 + o(1)) \\
    &= m^{-1/2} m^{-mk} k^{k(1-m)} e^{k(m-1)} (1 + o(1)) \\
    &= m^{-1/2} m^{-mk} \left( \frac{e}{k} \right)^{(m-1)k} (1 + o(1)).
\end{align*}
Substituting the factorial expansion back into the combinatorial ratio expression:
\begin{align*}
    \frac{\binom{mN}{mk}}{\binom{N}{k}} &= \left[ m^{mk} N^{(m-1)k} \right] \cdot \left[ m^{-1/2} m^{-mk} \left( \frac{e}{k} \right)^{(m-1)k} \right] (1 + o(1)) \\
    &= N^{(m-1)k} \left( \frac{e}{k} \right)^{(m-1)k} m^{-1/2} \underbrace{m^{mk} m^{-mk}}_{1} (1 + o(1)) \\
    &= \left( \frac{Ne}{k} \right)^{(m-1)k} m^{-1/2} (1 + o(1)).
\end{align*}
Finally, incorporating the width penalty term $m^{-d_{eff}}$:
\begin{align*}
    \mathcal{G}_{ub}(m) &= \left[ \left( \frac{Ne}{k} \right)^{(m-1)k} m^{-1/2} \right] \cdot m^{-d_{eff}} (1 + o(1)) \\
    &= \left( \frac{Ne}{k} \right)^{(m-1)k} m^{-\left(d_{eff} + \frac{1}{2}\right)} (1 + o(1)).
\end{align*}
This completes the proof.
\end{proof}

\section{Missing Proof of Proposition~\ref{prop:angular_collapse}}
\label{app:proof_angular_collapse}

\begin{proof}
Consider the ambient space $\mathbb{R}^{d_{in}}$ and the data vector $\mathbf{x} = \boldsymbol{\mu} + \boldsymbol{\varepsilon}$. The support of the data distribution is contained within the Euclidean ball $B(\boldsymbol{\mu}, R) = \{ \mathbf{x} \in \mathbb{R}^{d_{in}} \mid \|\mathbf{x} - \boldsymbol{\mu}\| \le R \}$. Let $\mathcal{C}$ be the projection of this ball onto the unit sphere $\mathbb{S}^{d_{in}-1}$, defined by:
\begin{equation*}
    \mathcal{C} = \left\{ \frac{\mathbf{x}}{\|\mathbf{x}\|} \in \mathbb{S}^{d_{in}-1} \;\middle|\; \mathbf{x} \in B(\boldsymbol{\mu}, R) \right\}.
\end{equation*}
For $\|\boldsymbol{\mu}\| > R$, the origin lies outside the ball, and the projection $\mathcal{C}$ forms a hyperspherical cap. The geometry of the tangent cone from the origin to $B(\boldsymbol{\mu}, R)$ determines the semi-vertical angle $\theta$ (the angular radius) of this cap. By the right-triangle formed by the origin, the center $\boldsymbol{\mu}$, and the point of tangency on the sphere of radius $R$, we have the exact trigonometric relation:
\begin{equation*}
    \sin \theta = \frac{R}{\|\boldsymbol{\mu}\|}.
\end{equation*}
The normalized surface area $A(\mathcal{C})$ of a hyperspherical cap with semi-vertical angle $\theta$ in $d_{in}$ dimensions is given by the ratio of the surface area of the cap to the total surface area of $\mathbb{S}^{d_{in}-1}$:
\begin{equation*}
    A(\mathcal{C}) = \frac{\text{Area}(\mathcal{C})}{\text{Area}(\mathbb{S}^{d_{in}-1})} = \frac{\int_0^\theta \sin^{d_{in}-2} \phi \, d\phi}{\int_0^\pi \sin^{d_{in}-2} \phi \, d\phi}.
\end{equation*}
Using the integral representation of the Beta function, the denominator can be evaluated as:
\begin{equation*}
    \int_0^\pi \sin^{d_{in}-2} \phi \, d\phi = \text{B}\left( \frac{d_{in}-1}{2}, \frac{1}{2} \right) = \frac{\Gamma(\frac{d_{in}-1}{2}) \sqrt{\pi}}{\Gamma(\frac{d_{in}}{2})}.
\end{equation*}
Substituting this into the expression for $A(\mathcal{C})$ yields:
\begin{equation*}
    A(\mathcal{C}) = \frac{\Gamma(\frac{d_{in}}{2})}{\sqrt{\pi} \Gamma(\frac{d_{in}-1}{2})} \int_0^\theta \sin^{d_{in}-2} \phi \, d\phi.
\end{equation*}
We analyze the asymptotic behavior in the uncentered regime where $\|\boldsymbol{\mu}\| \to \infty$. This implies $\sin \theta = R / \|\boldsymbol{\mu}\| \to 0$, and consequently $\theta \to 0$. In this limit, we apply the Taylor expansion for $\sin \phi$:
\begin{equation*}
    \sin \phi = \phi - \frac{\phi^3}{6} + O(\phi^5).
\end{equation*}
Substituting this expansion into the integral:
\begin{equation*}
    \int_0^\theta \sin^{d_{in}-2} \phi \, d\phi = \int_0^\theta \left( \phi + O(\phi^3) \right)^{d_{in}-2} \, d\phi = \int_0^\theta \left( \phi^{d_{in}-2} + O(\phi^{d_{in}}) \right) \, d\phi.
\end{equation*}
Performing the integration:
\begin{equation*}
    \int_0^\theta \sin^{d_{in}-2} \phi \, d\phi = \frac{\theta^{d_{in}-1}}{d_{in}-1} + O(\theta^{d_{in}+1}).
\end{equation*}
From the relation $\sin \theta = R / \|\boldsymbol{\mu}\|$, the semi-vertical angle admits the expansion:
\begin{equation*}
    \theta = \arcsin\left( \frac{R}{\|\boldsymbol{\mu}\|} \right) = \frac{R}{\|\boldsymbol{\mu}\|} + \frac{1}{6} \left( \frac{R}{\|\boldsymbol{\mu}\|} \right)^3 + O\left( \left( \frac{R}{\|\boldsymbol{\mu}\|} \right)^5 \right).
\end{equation*}
Raising $\theta$ to the power of $d_{in}-1$:
\begin{equation*}
    \theta^{d_{in}-1} = \left( \frac{R}{\|\boldsymbol{\mu}\|} \right)^{d_{in}-1} \left( 1 + O\left( \frac{R^2}{\|\boldsymbol{\mu}\|^2} \right) \right) = \left( \frac{R}{\|\boldsymbol{\mu}\|} \right)^{d_{in}-1} (1 + o(1)).
\end{equation*}
Combining the constants and the leading term of the integral:
\begin{equation*}
    A(\mathcal{C}) = \frac{\Gamma(\frac{d_{in}}{2})}{\sqrt{\pi} \Gamma(\frac{d_{in}-1}{2})} \cdot \frac{1}{d_{in}-1} \cdot \left( \frac{R}{\|\boldsymbol{\mu}\|} \right)^{d_{in}-1} (1 + o(1)).
\end{equation*}
Defining the dimension-dependent constant $C(d_{in})$ as:
\begin{equation*}
    C(d_{in}) = \frac{\Gamma(\frac{d_{in}}{2})}{\sqrt{\pi} (d_{in}-1) \Gamma(\frac{d_{in}-1}{2})},
\end{equation*}
we obtain the final asymptotic form:
\begin{equation*}
    A(\mathcal{C}) = C(d_{in}) \cdot \left( \frac{R}{\|\boldsymbol{\mu}\|} \right)^{d_{in}-1} (1 + o(1)).
\end{equation*}
This completes the proof.
\end{proof}   

\end{document}